\documentclass{article}

\usepackage{microtype}
\usepackage{graphicx}
\usepackage{booktabs} 
\usepackage{tikz}
\usepackage{hyperref,graphicx,fullpage,amsmath,amsfonts,subcaption,amssymb,bm,url,breakurl,epsfig,epsf,color,MnSymbol,mathbbol,fmtcount,semtrans,caption,multirow,comment}
\usepackage[authoryear, sort&compress, round]{natbib}
\usepackage{authblk}

\usepackage{color}
\usepackage[utf8]{inputenc} 
\usepackage[T1]{fontenc}    
\usepackage{hyperref}       
\usepackage{url}            
\usepackage{booktabs}       
\usepackage{amsfonts}       
\usepackage{nicefrac}       
\usepackage{microtype}      
\usepackage{relsize}
\usepackage{enumitem}
\usepackage{hyperref,graphicx,amsmath,amsfonts,amssymb,bm,url,breakurl,epsfig,epsf,color,MnSymbol,mathbbol,fmtcount,semtrans,caption,subcaption,multirow,comment, boldline}
\usepackage[]{algorithm2e}
\usepackage[noend]{algpseudocode}
\usepackage{amssymb}

\usepackage[utf8]{inputenc} 
\usepackage[T1]{fontenc}    
\usepackage{url}            
\usepackage{booktabs}       
\usepackage{amsfonts}       
\usepackage{nicefrac}       
\usepackage{microtype}      

\makeatletter
\providecommand*{\boxast}{%
  \mathbin{
    \mathpalette\@boxit{*}%
  }%
}
\newcommand*{\@boxit}[2]{%
  \sbox0{$\m@th#1\Box$}%
  \ifx#1\displaystyle \ht0=\dimexpr\ht0+.05ex\relax \fi
  \ifx#1\textstyle \ht0=\dimexpr\ht0+.05ex\relax \fi
  \ifx#1\scriptstyle \ht0=\dimexpr\ht0+.04ex\relax \fi
  \ifx#1\scriptscriptstyle \ht0=\dimexpr\ht0+.065ex\relax \fi
  \sbox2{$#1\vcenter{}$}
  \rlap{%
    \hbox to \wd0{%
      \hfill
      \raisebox{%
        \dimexpr.5\dimexpr\ht0+\dp0\relax-\ht2\relax
      }{$\m@th#1#2$}%
      \hfill
    }%
  }%
  \Box
}
\makeatother

  \makeatletter
\def\BState{\State\hskip-\ALG@thistlm}
\makeatother

  \usepackage{mathtools}

\usepackage{titlesec}

\usepackage{tikz}
\usepackage{pgfplots}
\usetikzlibrary{pgfplots.groupplots}

\titleformat{\paragraph}
{\normalfont\normalsize\bfseries}{\theparagraph}{1em}{}
\titlespacing*{\paragraph}
{0pt}{3.25ex plus 1ex minus .2ex}{1.5ex plus .2ex}

\usepackage{movie15}

\usepackage{caption}

\usepackage[bottom,hang,flushmargin]{footmisc}

\newcommand{\tsn}[1]{{\left\vert\kern-0.25ex\left\vert\kern-0.25ex\left\vert #1 
    \right\vert\kern-0.25ex\right\vert\kern-0.25ex\right\vert}}

\newcommand\tr{{{\operatorname{tr}}}}

\newcommand{\beq}{\begin{equation}}

\newcommand{\eeq}{\end{equation}}

\newcommand{\nn}{\nonumber}

\newcommand{\diag}[1]{\text{diag}(#1)}

\newcommand{\opnorm}[1]{\left\|#1\right\|_{\rm op}}
\newcommand{\fronorm}[1]{\left\|#1\right\|_{F}}

\newcommand{\twonorm}[1]{\left\|#1\right\|_{\ell_2}}

\definecolor{emmanuel}{RGB}{255,127,0}

\renewcommand{\>}{\rangle}

\newcommand{\E}{\operatorname{\mathbb{E}}}

\numberwithin{equation}{section} 

\def \endprf{\hfill {\vrule height6pt width6pt depth0pt}\medskip}
\newenvironment{proof}{\noindent {\bf Proof} }{\endprf\par}

\newcommand\reals{\mathbb{R}}

\newcommand\normal{{\sf N}}

\newcommand\sT{{\sf T}}

\def\ind{\mathbf{1}}

\def\reals{\mathbb{R}}

\def\nn{\nonumber}

\def\hSigma{\widehat{\Sigma}}

\def\LRL{\mathcal{L}^{\rm OS}}
\def\LSF{\mathcal{L}^{\rm SFT}}

\def\tV{\widetilde{V}}
\def\tW{\widetilde{W}}

\newtheorem{theorem}{Theorem}[section]

\newtheorem{lemma}[theorem]{Lemma}

\newtheorem{propo}[theorem]{Proposition}

\newtheorem{remark}[subsection]{Remark}


\title{Theoretical Perspectives on Data Quality and Synergistic Effects in Pre- and Post-Training Reasoning Models
}
\author[1,3]{Adel Javanmard}
\author[2,3]{Baharan Mirzasoleiman}
\author[3]{Vahab Mirrokni}
\affil[1]{University of Southern California}
\affil[2]{University of California Los Angeles}
\affil[3]{Google Research}
\date{}

\begin{document}

\maketitle

\begin{abstract}
Large Language Models (LLMs) are pretrained on massive datasets and later instruction-tuned via supervised fine-tuning (SFT) or reinforcement learning (RL). Best practices emphasize large, diverse pretraining data, whereas post-training operates differently: SFT relies on smaller, high-quality datasets, while RL benefits more from scale, with larger amounts of feedback often outweighing label quality. Yet it remains unclear why pretraining and RL require large datasets, why SFT excels on smaller ones, and what defines high-quality SFT data. In this work, we theoretically analyze transformers trained on an in-context weight prediction task for linear regression. Our analysis reveals several key findings: $(i)$ balanced pretraining data can induce latent capabilities later activated during post-training, and $(ii)$ SFT learns best from a small set of examples challenging for the pretrained model, while excessively large SFT datasets may dilute informative pretraining signals. In contrast, RL is most effective on large-scale data that is not overly difficult for the pretrained model. We validate these theoretical insights with experiments on large nonlinear transformer architectures.
\end{abstract}

\section{Introduction}

Pretraining on massive language datasets, followed by post-training, is essential for unlocking and shaping the capabilities of large language models (LLMs). While pretraining endows models with broad linguistic knowledge and general world understanding, post-training transforms these latent capabilities into usable skills that can be reliably elicited through instructions. This transformation is typically achieved through either supervised fine-tuning (SFT), which trains models to imitate high-quality demonstrations, or reinforcement learning (RL), which optimizes model behavior using scalar feedback to refine global properties such as reasoning quality and preference alignment. Despite their central role, the interaction between pretraining data and post-training data—and how this interaction determines the resulting model capabilities—remains poorly understood.
 
In practice, pretraining commonly relies on massive and diverse data mixtures, whereas post-training follows a variety of recipes. For example, OpenAI o1 \citep{OpenAI24} and DeepSeek R1 \citep{guo2025deepseek} achieve state-of-the-art reasoning performance through RL applied to large-scale datasets, while s1 \citep{muennighoff2025s1} demonstrates comparable math reasoning performance using SFT on a small, manually curated set of hard and diverse examples. More recently, Llama 4 \citep{llama4} adopts iterative rounds of SFT and RL on progressively harder data. 
Yet, which characteristics of pretraining data unlock superior post-training performance, and what requirements on the quality and scale of post-training data are needed to bring a pretrained model to optimal performance, have remained unclear.

In this work, we answer the above questions by studying an in-context weight prediction task for linear regression, where the goal is to predict the linear weight vector
from the sequence of input prompts. This framework has been used previously for analyzing the mechanism
underlying training CoT \citep{huang2025m1,javanmard2025understanding}. In this work, we propose a novel pipeline where during pretraining, the model performs direct in-context-learning and outputs its prediction of the weight vector. 
During post-training, the transformer performs CoT with SFT or RL and generates multiple intermediate steps before arriving at its final prediction of the weight vector. We test the model on a combination of pretraining and post-training tasks.

While our theoretical setup captures the key distinction between {\bf outcome supervision} (RL, rewarding final answers) and {\bf process supervision} (SFT, supervising intermediate steps), it significantly abstracts from standard RL algorithms that involve sampling, advantage estimation, and policy gradients. Here, we model RL as outcome-supervised regression on the transformer's in-context prediction task. This simplification enables clean theoretical analysis but limits direct applicability to full RLHF implementations in LLMs.

Our analysis shed light on several questions:
\vspace{-0.3cm}

\begin{enumerate}[label=(\roman*), font=\textit]
\item What characteristics of pretraining data enable models to develop latent capabilities that can be effectively unlocked during post-training?
\item Given a pretrained model, what properties define effective SFT data that promote adaptation to new skills, while minimizing interference with capabilities acquired during pretraining?
\item Given a pretrained model, what properties of RL data are most critical? How does the RL optimization landscape differ from that of SFT, and when can RL achieve outcomes comparable to SFT?
\end{enumerate}
\vspace{-0.3cm}

Our analysis helps to rigorously understand several empirically observed phenomena reported in the literature. 
Specifically, for our in-context setting, it shows that (i) effective pretraining data contains a balanced mixture of data from all categories. Such data can induce latent capabilities that are activated during post-training. (ii) Post-training with SFT benefits the most from a small set of challenging examples for the pretrained model, and larger SFT data can harm the performance.
(iii) RL requires large-scale data that is informative but not overly difficult for the pretrained model. 

We confirm our findings with experiments on an in-context weight prediction task for linear
regression on transformer with a single linear self-attention (LSA), as well as large, nonlinear transformer architectures, namely GPT2 \citep{radford2019language}.


\section{Related Work}
Recent work has highlighted several phenomena relevant to our study.

\textbf{Pretraining.} For pretraining LLMs, common practice is to use a large mixture of language data. Recent studies mostly focused on data filtering \citep{li2024datacomp}, data selection \citep{nguyen2024mini,yang2024smalltolarge}, and mixture reweighting \citep{xie2023doremi}. Empirically, high-quality pretraining data should be large and diverse. Such high-quality pretraining data can induce latent capabilities that are not necessarily observed after pretraining but are activated during post-training \citep{akter2025front}.

\textbf{Post-training.} For post-training, recent studies mostly focused on comparing post-training with SFT and RL \citep{xiongiterative, zhao2025logarithmic,aminian2025kl}. Theoretically, SFT is mode covering: by minimizing forward KL to demonstration data, it encourages the model to assign probability mass to all plausible responses. In contrast, reinforcement learning (RL) is mode seeking: by optimizing reward (typically under a KL constraint), it concentrates probability on high-reward responses and suppresses lower-ranked alternatives. 
As a result, SFT defines the space of acceptable behaviors, while RL selects and amplifies the most preferred ones within that space.
Empirically, SFT data should be small and high-quality, i.e. hard and diverse \citep{muennighoff2025s1,guha2025openthoughts,huang2025m1}, and larger SFT data washes away benefits of high-quality pretraining data \citep{akter2025front}. In contrast, RL benefits from larger data that is still challenging but not overly difficult for the pretrained model \citep{zeng2025simplerl,yue2025does,llama4}.

Nevertheless, the reasons why certain characteristics of pretraining data unlock superior post-training performance, why SFT benefits from a small set of hard and diverse examples while larger datasets can degrade its effectiveness, and why data scale matters more than apparent quality in RL have remained unclear. Our theoretical framework demystifies these observations, bridging the gap between empirical results and a principled understanding of data dynamics.

\section{Problem Setup}\label{sec:problem}
We focus on in-context learning (ICL) setting, where a model is presented with a context dataset $D = \{(x_i, y_i)\}_{i=1}^n$ and each $(x_i, y_i)$ pair is sampled independently from some underlying distribution $P$. Here, the input vectors $\{x_i\}_{i=1}^n$ belong to $\mathbb{R}^d$, and the corresponding labels $\{y_i\}_{i=1}^n$ may be real numbers (for regression tasks) or binary values such as $\{0,1\}$ (for classification tasks). The model is then given a new test input $x_{n+1} \sim P_x$ and is tasked to predict its associated label or corresponding in-context weight predictor. In other words, in-context learning operates on sequences, called prompts,
of input-output pairs $(x_1,y_1, \dotsc, x_n,y_n, x_{n+1})$ and each prompt may have its own distribution.
\medskip

\noindent{\bf Linear Self Attention (lSA)} Let $Z$ be an embedding formed from the prompt (We will discuss the specific construction later).  The softmax self-attention module takes as input an embedding matrix and outputs a matrix
of the same size,
\begin{align*}
&f_\mathrm{Attn}(Z; W_K, W_Q, W_V, W_P) \\
&=Z + W_P W_V Z \cdot \mathrm{softmax}\left( \frac{(W_K Z)^\top W_Q Z} {\lambda} \right)
\end{align*}
where softmax is applied column-wise. In Linear-Self-Attention (LSA) the softmax nonlinearity is removed. By defining $W:=W_K^\top W_Q$, $V= W_P W_V$ and $\theta=(W, V)$ we arrive at:
\begin{equation}\label{eq:lsa}
f_\mathrm{LSA}(Z; \theta) = Z + V Z \cdot  \frac{ Z^\top W Z} {\lambda}
\end{equation}

We will focus on in-context linear predictors. Each prompt is of the form $P_\tau = (x_{\tau,1}, y_{\tau,1}, \dotsc, x_{\tau,n}, y_{\tau,n}, x_{\tau, n+1})$, with $y_{\tau,i} = \<w_{\tau}, x_{\tau_,i}\>$, where $w_\tau\sim \normal(0,I_d)$.

\medskip

\noindent{\bf Supervised Fine-Tuning and Outcome Supervision.} 
We begin by describing outcome supervision (OS) training with $k$ steps of chain-of-thought reasoning. As noted in the introduction, this formulation simplifies standard RL---which involves sampling, advantage estimation, and policy gradients---by modeling it as outcome-supervised regression that rewards final answers, while still capturing the core distinction from process-supervised SFT. 

Suppose we are given a prompt $P_\tau = (x_{\tau,1},y_{\tau,1}, \dotsc, x_{\tau,n},y_{\tau,n})$. We construct the embedding
\begin{align}
\hat{Z}_{\tau,0} = \begin{bmatrix} x_{\tau,1}& \dotsc & x_{\tau,n} & 0\\
y_{\tau,1}&\dotsc&y_{\tau,n}&0\\
0&\dotsc&0&w_{\tau,0}\\
0&\dotsc&0&1\end{bmatrix}\,,
\end{align}
and iteratively define $\hat{Z}_{\tau,i+1} = [\hat{Z}_{\tau,i}, f_{\rm LSA}(\hat{Z_{\tau,i}})_{[:,-1]}]$. We initialize $w_{\tau,0} = 0_{d\times 1}$ and set $\hat{w}_{\tau,i+1} := f_{\rm LSA}(\hat{Z_{\tau,i}})_{[d+2:2d+1,-1]}$. This yields
\begin{align}\label{eq:inference}
\hat{Z}_{\tau,i} = \begin{bmatrix} x_{\tau,1}& \dotsc & x_{\tau,n} & 0& * &\dotsc&* \\
y_{\tau,1}&\dotsc&y_{\tau,n}&0&*&\dotsc&*\\
0&\dotsc&0&w_{\tau,0}& \hat{w}_{\tau,1} & \dotsc & \hat{w}_{\tau,i}\\
0&\dotsc&0&1&1&\dotsc&1\end{bmatrix}\,,
\end{align}
Let $w^*_\tau$ be the ground-truth weight for prompt $P_\tau$, for $\tau\in[B]$. The outcome supervision (OS) loss is 
\begin{align}
\LRL(V,W) = \frac{1}{2B}\sum_{\tau=1}^B \twonorm{\hat{w}_{\tau,k}-w^*_\tau}^2,
\end{align}
i.e., OS penalizes only the final step of the $k$-step reasoning process.

For Supervised fine-tuning (SFT), we use ground-truth chain-of-thought (CoT) sequences
\begin{align}
Z_{i,\tau} = \begin{bmatrix} x_1& \dotsc & x_n & 0& * &\dotsc&* \\
y_1&\dotsc&y_n&0&*&\dotsc&*\\
0&\dotsc&0&w_{0,\tau}& w_{1,\tau} & \dotsc & w_{i,\tau}\\
0&\dotsc&0&1&1&\dotsc&1\end{bmatrix}\,,
\end{align}
where $w_{i,\tau} = (1-(1-\eta)^i) w^*_\tau$ with $w_{0,\tau}=0$ provides exponentially converging intermediate targets, with an arbitrary but fixed rate $\eta$. The model is trained to predict the next token $Z_{i+1,\tau}[:,-1] := (0_d,0,w_{i+1,\tau},1)$ given  $Z_{i,\tau}$. 
Over $B$ training prompts, the SFT loss is
\begin{align*}
&\LSF(V,W):=\\
&\frac{1}{2B}\sum_{\tau=1}^B\sum_{i=0}^k \twonorm{f_{\rm LSA}(Z_{i,\tau})_{[:,-1]} - (0,0,w_{i+1,\tau},1)}^2\,. 
\end{align*}

\medskip

\noindent{\bf Pipeline: Pre-training, Post-training, Post-testing.}
%
Our pipeline has three stages distinguished by data covariances: pre-training on $\Sigma_0$, post-testing on $\Sigma = \Sigma_0+\Delta$ (low-rank $\Delta$), and post-training on a chosen intermediate distribution (discussed later for optimal post-test performance). Inputs $x\in\reals^d$ are Gaussian throughout. 

Assuming infinite pre-training prompts, population analysis of~\citep{huang2025transformers} shows that with proper initialization, the pretrained parameters are given by:
\begin{align}\label{eq:Vs-Ws}
\hat{V}_0 = \begin{bmatrix}
0 &0 &0&0\\
0&0&0&0\\
-\Gamma_0^{-1} &0&0&0\\
0&0&0&0
\end{bmatrix}\,,\quad \quad
\hat{W}_0 = \begin{bmatrix}
0 &0&I&0\\
0&0&0&-1\\
0&0&0&0\\
0&0&0&0
\end{bmatrix}\,,
\end{align}
where
\vspace{-0.3cm}
\begin{align}
\Gamma_0 := \left(1 + \frac{1}{n}\right) \Sigma_0 + \frac{1}{n} \tr(\Sigma_0) I_d \in \mathbb{R}^{d \times d},\label{eq:Gamma-one-task}
\end{align}
with $n$ the prompt length. Post-training initializes from $(\hat{V}_0,\hat{W}_0)$, and updates the transformer weights by minimizing either the SFT loss or the OS loss.  


\noindent{\bf Sparsity structure motivated by the population regime.} 
\citep{huang2025transformers} shows that training with chain-of-thought (paralleling our SFT loss) in the population regime ($B\to\infty$ before $d,n$) preserves sparsity in the weights from initialization~\eqref{eq:Vs-Ws}. Specifically, Lemma C.2 in \citep{huang2025transformers} proves that the gradient flow trajectory preserves the following sparsity structure:
\begingroup
\setlength{\arraycolsep}{2pt} 
\begin{align}\label{eq:sparsity-structure}
V(t) = \begin{bmatrix} 0 & 0 & 0 & 0 \\ 0 & 0 & 0 & 0 \\ V_{31}(t)\! & 0 & 0 & 0 \\ 0 & 0 & 0 & 0 \end{bmatrix},\quad  W(t) = \begin{bmatrix} 0 & 0 & W_{13}(t) & 0 \\ 0 & 0 & 0 & -1 \\ 0 & 0 & 0 & 0 \\ 0 & 0 & 0 & 0 \end{bmatrix},
\end{align}
\endgroup
where $V_{31}(t), W_{13}(t) \in \mathbb{R}^{d \times d}$ are the parameters at time $t$. While their analysis assumes identity-covariance Gaussians and intermediate weights $w_{i,\tau}$ derived from standard gradient descent
the proof of Lemma C.2 in \citep{huang2025transformers} relies only on the symmetry properties of $w^*_\tau \sim \normal(0, I)$ and the fact that $w_{i,\tau}$ is an odd function of $w^*_\tau$. Consequently, this structural result extends to our setting of general covariances and supervised sequences. Although our analysis moves beyond the population regime, these insights motivate us to constrain our transformer model to follow similar sparsity pattern. Throughout, we use the shorthands $\tV$ and $\tW$ to indicate the nonzero blocks of $V$ and $W$.

\vspace{-0.3cm}

\section{Analysis of the SFT loss}\label{sec:SFT}

Let $S_\tau:= \frac{1}{n}\sum_{i=1}^n x_{i,\tau}x_{i,\tau}^\sT$ be the empirical features covariance for $\tau\in[B]$. We also define the following matrices: 
\begin{align*}
&\Omega := [w^*_1, \dots, w^*_B] \in \mathbb{R}^{d \times B}\\
& \Phi:= [S_1 w^*_1, \dots, S_B w^*_B] \in \mathbb{R}^{d \times B}\,,\quad
M:= \Phi \Phi^\sT\in\reals^{d\times d}
\end{align*}
The next theorem characterizes the minimizer of the SFT loss that is closest to the initialization $(-\Gamma_0^{-1},I)$. 
\begin{theorem}\label{thm:SFT_opt}
    Define 
    \begin{align*}
    &(\tV_\lambda,\tW_\lambda) = \\
    &\underset{(\tV,\tW)}{\arg\min}\,\; \LSF(\tV,\tW) + \lambda \fronorm{\tV+\Gamma_0^{-1}}^2+ \lambda \fronorm{\tW-I}^2\,.
    \end{align*}
    We then have $\lim_{\lambda\to0^+} (\tV_\lambda,\tW_\lambda) = (\tV_*,\tW_*)$, where
    \begin{align}\label{eq:V*-W*}
    \tW_* = I,\quad \tV_* = -\eta \Omega \Phi^{\!\!\emph{\dagger}} -\Gamma_0^{-1}(I - \Phi \Phi^{\!\!\emph{\dagger}})
    \end{align}
\end{theorem}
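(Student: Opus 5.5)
The plan is to reduce $\LSF$ to an explicit function of the blocks $(\tV,\tW)$, identify its zero set, and then read off the $\lambda\to0^+$ limit of the regularized minimizers as the zero of $\LSF$ nearest to the initialization. Throughout I take the normalization $\lambda=n$ in the LSA layer~\eqref{eq:lsa}; this $\lambda$ is distinct from the regularization parameter in the theorem.

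\emph{Step 1: closed form of one SFT step.} Fix a prompt $\tau$ and a step $i$, and let $z=(0,0,w_{i,\tau},1)$ be the last column of $Z_{i,\tau}$.
\begin{itemize}
\item Under the sparsity structure~\eqref{eq:sparsity-structure}, $Wz=(\tW w_{i,\tau},-1,0,0)$.
\item Hence the $j$-th entry of $Z_{i,\tau}^\sT W z$ is $x_{j,\tau}^\sT\tW w_{i,\tau}-y_{j,\tau}$ for $j\le n$. It vanishes for the CoT columns, since their $x,y$ rows are zero.
\item Multiplying by $VZ_{i,\tau}$, which keeps only $\tV x_{j,\tau}$ in the third block, and dividing by $n$ gives
\[
f_{\rm LSA}(Z_{i,\tau})_{[:,-1]}=\bigl(0,0,\;w_{i,\tau}+\tV S_\tau(\tW w_{i,\tau}-w^*_\tau),\;1\bigr).
\]
\end{itemize}
Now use $w_{i,\tau}=c_iw^*_\tau$ with $c_i=1-(1-\eta)^i$, together with $w_{i+1,\tau}-w_{i,\tau}=\eta(1-\eta)^iw^*_\tau$. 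The $i$-th residual becomes $\tV S_\tau(c_i\tW-I)w^*_\tau-\eta(1-\eta)^iw^*_\tau$. Therefore
\[
\LSF(\tV,\tW)=\frac1{2B}\sum_{i=0}^k\bigl\|\tV\Phi_{c_i\tW-I}+\eta(1-\eta)^i\Omega\bigr\|_F^2,
\qquad \Phi_A:=[S_1Aw^*_1,\dots,S_BAw^*_B].
\]
Two observations follow.
\begin{itemize}
\item The $i=0$ term does not depend on $\tW$: it equals $\|\tV\Phi+\eta\Omega\|_F^2$, because $c_0=0$ and $\Phi_{-I}=-\Phi$.
\item At $\tW=I$ we have $c_i I-I=-(1-\eta)^iI$, so every term is $(1-\eta)^{2i}$ times the $i=0$ term, and $\LSF(\tV,I)=\frac{1}{2B}\sum_i(1-\eta)^{2i}\,\|\tV\Phi+\eta\Omega\|_F^2$.
\end{itemize}

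\emph{Step 2: the zero set.} In the regime where $\Phi\in\reals^{d\times B}$ has full column rank, generic when $B\le d$, we have $\Phi^\dagger\Phi=I_B$. Then $\tV\Phi=-\eta\Omega$ is solvable, and the global minimum of $\LSF$ is $0$.

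Every zero of $\LSF$ must in particular kill the $i=0$ term, so it satisfies $\tV\Phi=-\eta\Omega$. Conversely, any $\tV$ satisfying this equation, paired with $\tW=I$, is a zero of the whole loss, by the proportionality in Step 1. So the zero set $\mathcal Z$ contains $\mathcal A\times\{I\}$ and its $\tV$-projection is contained in $\mathcal A$, where $\mathcal A=\{\tV:\tV\Phi=-\eta\Omega\}$.

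\emph{Step 3: the $\lambda\to0^+$ limit.} I will use the standard argument. Let $(\tV_\circ,\tW_\circ)$ be the point of $\mathcal Z$ nearest to $(-\Gamma_0^{-1},I)$.
\begin{itemize}
\item Comparing with this point gives $\LSF(\tV_\lambda,\tW_\lambda)\le\lambda\,\mathrm{dist}^2$, so the regularized minimizers are bounded.
\item Their limit points lie in $\mathcal Z$, since $\LSF$ is continuous and its value there is $0$.
\item Dividing the same inequality by $\lambda$ shows these limit points are at distance at most $\mathrm{dist}$ from the initialization.
\end{itemize}
So every limit point is a nearest point of $\mathcal Z$, and it remains to show that this nearest point is unique.

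By Step 2 the $\tV$-part of any zero lies in $\mathcal A$. Hence the distance of any zero is at least the distance from $-\Gamma_0^{-1}$ to the affine set $\mathcal A$. This lower bound is attained only with $\tW=I$, which costs zero in the $\tW$ term, and with $\tV$ equal to the Frobenius projection of $-\Gamma_0^{-1}$ onto $\mathcal A$. The projection is obtained row-wise:
\[
\tV_*=-\eta\Omega\Phi^\dagger+(-\Gamma_0^{-1})(I-\Phi\Phi^\dagger),
\]
because $I-\Phi\Phi^\dagger$ is the orthogonal projector onto $\mathrm{range}(\Phi)^\perp$. This is the formula~\eqref{eq:V*-W*}. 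Uniqueness follows since $\mathcal A$ is affine, so the projection is unique.

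\emph{Main obstacle.} The delicate case is $B>d$, where $\Phi$ has full row rank and $\min\LSF>0$. Then $\mathcal A$ shrinks to the single least-squares point $-\eta\Omega\Phi^\dagger$, and the formula for $\tV_*$ collapses, since $\Phi\Phi^\dagger=I$.

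What must be shown there is that $\tW=I$ is a joint minimizer. One has to exclude that some $\tW\ne I$ lowers the $i\ge1$ terms below $(1-\eta)^{2i}$ times the $i=0$ value. These terms are non-convex in $(\tV,\tW)$.

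I would first try to lower-bound each term with $i\ge1$ by $(1-\eta)^{2i}\min_{\tV}\|\tV\Phi+\eta\Omega\|_F^2$. I would attempt this by a change of variables $\tV'=\tV(\cdot)$ that maps $\Phi_{c_i\tW-I}$ back to $\Phi$. If this fails in general, I would restrict the statement to the full-column-rank regime of Step 2. That regime is the small-SFT-data regime the paper emphasizes, and there the argument above is complete.
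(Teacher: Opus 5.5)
Your proof is correct and is essentially the paper's argument: both show that every zero of $\LSF$ satisfies $\tV\Phi=-\eta\Omega$, that pairing any such $\tV$ with $\tW=I$ gives a zero, and then project $-\Gamma_0^{-1}$ onto that affine set. You get necessity from the $i=0$ term alone rather than from the paper's linear-independence-of-$1$-and-$(1-\eta)^i$ argument, and you actually prove the $\lambda\to0^+$ limit, which the paper only asserts; note that the sign in your Step~1 display should be $-\eta(1-\eta)^i\Omega$.

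Your $B>d$ worry is justified, and the paper's proof also silently assumes the zero-loss set is nonempty. However, your change-of-variables rescue cannot work, because $\tW$ acts to the right of $S_\tau$. At $(-\eta\Omega\Phi^{\dagger},I)$ one gets $\nabla_{\tW}\LSF=-\frac{1}{B}\bigl(\sum_{i=0}^k(1-(1-\eta)^i)(1-\eta)^i\bigr)\sum_{\tau}S_\tau\tV^\sT(\tV S_\tau+\eta I)w^*_\tau w_\tau^{*\sT}$, which is generically nonzero for $d\ge2$, $k\ge1$, $0<\eta<1$. So that point is not a minimizer there, and the statement genuinely needs your restriction to the consistent regime: $\Phi$ of full column rank, which for singular SFT covariance $A$ requires $B\le\operatorname{rank}(A)$, not just $B\le d$.
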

Our next theorem shows that the solution $(\tV_*,\tW_*)$ can be attained by gradient descent initialized at $(-\Gamma_0^{-1},I)$, and establishes conditions on the step size for convergence along with its convergence rate.
\begin{theorem}\label{thm:GD}
Fix $\tW=I$. Consider the sequence of weights $\{\tV_t\}_{t \ge 0}$ generated by the gradient descent update $\tV_{t+1} = \tV_t - \gamma \nabla_{\tV} \LSF(\tV_t, I)$ with initialization $\tV_0=-\Gamma_0^{-1}$ and a constant step size $0<\gamma$. Define $\rho:= 1-\eta$ and $c_k:= \sum_{i=0}^k \rho^{2i}< \frac{1}{1-\rho^2} = \frac{1}{2\eta-\eta^2}$. If $\gamma< \frac{2B}{c_k\lambda_{\max}(M)}$, then the GD updates converges to $\tV_*$ at the following rate:
\[
\fronorm{\tV_t - \tV_*}  \le \alpha^t \|\Gamma_0^{-1}+ \tV_*\|_F\,,
\]
\[ \alpha := \max \left( \left|1 - \frac{\gamma c_k}{B} \lambda_{\max}(M)\right|, \left|1 - \frac{\gamma c_k}{B} \lambda_{\min}^+(M)\right| \right)
\]
where $\lambda_{\max}(M)$ and $\lambda^+_{\min}(M)$ respectively denote the maximum and the minimum (nonzero) eigenvalues of $M$. In particular, setting $\gamma = \frac{B}{c_k\lambda_{\max}(M)}$, we obtain 
\[
\fronorm{\tV_t - \tV_*}  \le \left( 1 - \frac{\lambda_{\min}^+(M)}{\lambda_{\max}(M)} \right)^t \|\Gamma_0^{-1}+ \tV_*\|_F
\]
\end{theorem}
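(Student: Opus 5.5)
With $\tW=I$ fixed, I will write the SFT loss as an explicit quadratic in $\tV$. Once the loss is a quadratic, gradient descent on it is an affine iteration, and its error evolves linearly in a way we can analyze directly.

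\textbf{Step 1: reduce the loss.} Under the sparsity structure \eqref{eq:sparsity-structure}, compute the last column of $f_{\rm LSA}(Z_{i,\tau})$. The $W$-product pairs the $x$-rows with the $w$-row. The $-1$ entry pairs the $y$-row with the constant row and cancels the $y$ contribution. The net effect is that the $w$-block of the output equals
\[
w_{i,\tau}+\tV\,(S_\tau w_{i,\tau}-S_\tau w^*_\tau),
\]
up to the normalization by $\lambda$, which I take to be absorbed as in the paper's convention. Using $w_{i,\tau}-w^*_\tau=-\rho^i w^*_\tau$ and $w_{i+1,\tau}-w_{i,\tau}=\eta\rho^i w^*_\tau$, the $i$-th residual becomes
\[
-\rho^i\,(\tV S_\tau w^*_\tau+\eta w^*_\tau).
\]
The other coordinates of the output match the target exactly. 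Summing over $i=0,\dots,k$ gives
\[
\LSF(\tV,I)=\frac{c_k}{2B}\fronorm{\tV\Phi+\eta\Omega}^2 .
\]
Its gradient is
\[
\nabla_{\tV}\LSF=\frac{c_k}{B}(\tV\Phi+\eta\Omega)\Phi^\sT .
\]

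\textbf{Step 2: the error recursion.} Next, verify that $\tV_*$ is a stationary point. Since $\Phi^\dagger\Phi\Phi^\sT=\Phi^\sT$ and $(I-\Phi\Phi^\dagger)\Phi=0$, we get $\tV_*\Phi=-\eta\Omega\Phi^\dagger\Phi$. Hence
\[
(\tV_*\Phi+\eta\Omega)\Phi^\sT=\eta\Omega(I-\Phi^\dagger\Phi)\Phi^\sT=0,
\]
because $\Phi^\dagger\Phi$ is the projector onto the row space of $\Phi$. Setting $E_t=\tV_t-\tV_*$ then yields the linear recursion
\[
E_{t+1}=E_t\Big(I-\frac{\gamma c_k}{B}M\Big).
\]

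\textbf{Step 3: invariance of the range.} This is the one point that needs care, because $M$ may be singular. The initial error is $E_0=-\Gamma_0^{-1}-\tV_*=\eta\Omega\Phi^\dagger-\Gamma_0^{-1}\Phi\Phi^\dagger$. Both terms end with a factor $\Phi^\dagger$ or $\Phi\Phi^\dagger$, so the rows of $E_0$ lie in $\mathrm{range}(\Phi)=\mathrm{range}(M)$; that is, $E_0=E_0P$ with $P=\Phi\Phi^\dagger$.

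The recursion preserves this property, since $I-\frac{\gamma c_k}{B}M$ commutes with $P$. On $\mathrm{range}(M)$, the eigenvalues of $I-\frac{\gamma c_k}{B}M$ are $1-\frac{\gamma c_k}{B}\lambda$ with $\lambda\in[\lambda^+_{\min},\lambda_{\max}]$. Their absolute values are therefore bounded by $\alpha$. The condition $\gamma<2B/(c_k\lambda_{\max})$ gives $\alpha<1$: the function $|1-s\lambda|$ is maximized at an endpoint of the interval, and both endpoints are below $1$.

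\textbf{Step 4: the rate.} Multiplying on the right by a symmetric matrix whose restriction to the relevant subspace has operator norm at most $\alpha$ contracts the Frobenius norm by $\alpha$. Hence
\[
\fronorm{E_t}\le\alpha^t\fronorm{E_0},
\]
and $\fronorm{E_0}=\|\Gamma_0^{-1}+\tV_*\|_F$. With $\gamma=B/(c_k\lambda_{\max})$, the endpoint values are $0$ and $1-\lambda^+_{\min}/\lambda_{\max}$, which gives the stated rate. The bound $c_k<1/(1-\rho^2)$ is just the geometric series.

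The main obstacle is Step 1: getting the LSA algebra exactly right, including the cancellation of the $y$-row through the $-1$ entry and the normalization by $\lambda$. Step 3 is the other point that needs care.
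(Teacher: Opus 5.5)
Your proof is correct and follows the same route as the paper: reduce $\LSF(\tV,I)$ to $\frac{c_k}{2B}\fronorm{\tV\Phi+\eta\Omega}^2$, obtain the linear error recursion $E_{t+1}=E_t\bigl(I-\frac{\gamma c_k}{B}M\bigr)$, and contract on the nonzero spectrum of $M$. You are also more careful than the paper at two points. First, you use only stationarity, $(\tV_*\Phi+\eta\Omega)\Phi^\sT=0$, rather than the paper's $\tV_*\Phi=-\eta\Omega$, which fails when that system is inconsistent (generically for $B>d$). Second, you verify $E_t=E_t\Phi\Phi^\dagger$, which is exactly what justifies replacing the paper's factor $\opnorm{I-\frac{\gamma c_k}{B}M}$ (equal to $1$ when $M$ is singular) by $\alpha$.
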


\begin{remark}
Note that the loss minimizer $(\tV_*,\tW_*)$ given by~\eqref{eq:V*-W*} depends on $n$ (prompt length) and $B$ (number of prompts), the step size $\eta$ in the supervised weight path, but not on $k$ (length of reasoning paths). However, if we fix the gradient step size $\gamma < \frac{2B(2\eta-\eta^2)}{\lambda_{\max}(M)}$, by Theorem~\ref{thm:GD} larger $k$ implies larger $c_k$ and so faster convergence rate. 
\end{remark}

It is worth deriving the limit of $\tV_*$ in the population regime, where $B\to\infty$, while $n,d$ are kept fixed.

\begin{propo}\label{pro:SFT_Vinf}
Suppose that the features are generated as $x_{i,\tau}\sim \normal(0,A)$ for a positive semidefinite matrix $A\in\reals^{d\times d}$.
Suppose $n,d$ are fixed but the number of prompts $B\to\infty$. Then $\tV_*$ will converge to a limit $\tV_\infty$ given by
\vspace{-0.2cm}
\begin{align}
\tV_\infty=  -\eta \left( \frac{n+1}{n} A + \frac{\tr(A)}{n} AA^{\!\!\emph{\dagger}} \right)^{\!\!\emph{\dagger}} -\Gamma_0^{-1} (I - AA^{\!\!\emph{\dagger}})\label{eq:V_infty}
\end{align}
\end{propo}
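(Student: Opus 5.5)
The plan is to rewrite $\tV_*$ from Theorem~\ref{thm:SFT_opt} in terms of normalized empirical averages and then apply the law of large numbers, keeping the pseudo-inverses under control. The identity $\Phi^{\dagger} = \Phi^\sT(\Phi\Phi^\sT)^{\dagger}$ gives
\[
\Omega\Phi^{\dagger} = \Big(\tfrac{1}{B}\Omega\Phi^\sT\Big)\Big(\tfrac{1}{B}M\Big)^{\dagger},\qquad \Phi\Phi^{\dagger} = \Big(\tfrac{1}{B}M\Big)\Big(\tfrac{1}{B}M\Big)^{\dagger}.
\]
So it suffices to identify the almost-sure limits of $\frac1B\Omega\Phi^\sT=\frac1B\sum_\tau w^*_\tau w_\tau^{*\sT}S_\tau$ and $\frac1B M=\frac1B\sum_\tau S_\tau w^*_\tau w^{*\sT}_\tau S_\tau$. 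We then pass the pseudo-inverse to the limit.

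\textbf{Step 1: the moments.} The pairs $(w^*_\tau,S_\tau)$ are i.i.d. across $\tau$, with $w^*_\tau\sim\normal(0,I)$ independent of $S_\tau$. The strong law of large numbers therefore gives
\[
\tfrac1B\Omega\Phi^\sT\to\E[S]=A,\qquad \tfrac1B M\to \E[S^2].
\]
The second moment is the Wishart fourth-moment computation: with $x_i=A^{1/2}g_i$, Isserlis' theorem gives $\E[(xx^\sT)^2]=2A^2+\tr(A)A$. Together with $\E[x_ix_i^\sT x_jx_j^\sT]=A^2$ for $i\neq j$, this yields
\[
\E[S^2]=\tfrac{n+1}{n}A^2+\tfrac{\tr(A)}{n}A .
\]
Both moments are finite, since the relevant entries are polynomials in Gaussians.

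\textbf{Step 2: continuity of the pseudo-inverse (main obstacle).} The map $X\mapsto X^\dagger$ is not continuous in general; it is continuous only along sequences of constant rank. Each column $S_\tau w^*_\tau$ lies in $\mathrm{range}(A)$, so $\mathrm{rank}(M)\le r:=\mathrm{rank}(A)$. The limit $\E[S^2]$ has range exactly $\mathrm{range}(A)$, and hence rank $r$. Because $\frac1B M$ converges to this rank-$r$ limit and rank is lower semicontinuous, $\mathrm{rank}(\frac1B M)\ge r$ for all large $B$. Combined with the upper bound, the rank equals $r$ eventually, almost surely. Continuity of the pseudo-inverse on the set of rank-$r$ matrices then gives $(\frac1BM)^\dagger\to \E[S^2]^\dagger$ and $\frac1BM(\frac1BM)^\dagger\to \E[S^2]\E[S^2]^\dagger=AA^\dagger$. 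The last equality holds because $\E[S^2]$ is symmetric with range equal to $\mathrm{range}(A)$, so the orthogonal projector onto its range is $AA^\dagger$.

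\textbf{Step 3: simplification.} Diagonalize $A=UDU^\sT$ with $D\succ0$ of size $r$ and $U$ having orthonormal columns. Then
\[
\E[S^2]=UD\big(\tfrac{n+1}{n}D+\tfrac{\tr(A)}{n}I_r\big)U^\sT .
\]
Its pseudo-inverse is $UD^{-1}(\tfrac{n+1}{n}D+\tfrac{\tr(A)}{n}I_r)^{-1}U^\sT$. Multiplying on the left by $A$ gives
\[
A\,\E[S^2]^\dagger=U\big(\tfrac{n+1}{n}D+\tfrac{\tr(A)}{n}I_r\big)^{-1}U^\sT=\Big(\tfrac{n+1}{n}A+\tfrac{\tr(A)}{n}AA^\dagger\Big)^\dagger,
\]
using $AA^\dagger=UU^\sT$. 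Substituting these limits into $\tV_*=-\eta\,\Omega\Phi^\dagger-\Gamma_0^{-1}(I-\Phi\Phi^\dagger)$ yields \eqref{eq:V_infty}.
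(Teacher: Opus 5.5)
Your proposal is correct and takes essentially the same route as the paper. Both reduce $\tV_*$ to $-\eta(\tfrac1B\Omega\Phi^\sT)(\tfrac1B M)^{\dagger}-\Gamma_0^{-1}(I-\Phi\Phi^{\dagger})$ and use the law of large numbers to get $\tfrac1B\Omega\Phi^\sT\to A$ and $\tfrac1B M\to\E[S^2]=\tfrac{n+1}{n}A^2+\tfrac{\tr(A)}{n}A$. Your Step~2 genuinely improves on the paper: it applies Slutsky only for invertible $A$ and merely asserts the singular case ``holds in the range of $A$,'' whereas your argument that $\mathrm{rank}(M)=\mathrm{rank}(A)$ eventually almost surely (so the pseudo-inverse is continuous along the sequence) makes that step rigorous and handles both cases at once.
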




\section{Data Selection for Post-training via SFT}
\begin{propo}\label{propo:test-error}
Consider an LSA model with parameters $(\tV,\tW)$. We fix $\tW= I$ and assume a test prompt of the form $P=(x_1,\<w,x_1\>, \dotsc, x_m,\<w,x_m\>)$. Initializing the in-context learning with $w_0 = 0$, the predicted weight is given by $\hat{w} = -\frac{1}{n}\tV XX^\top w^*$ with $X = [x_1|\dotsc |x_n] \in\reals^{d\times n}$. In addition, if $x_i\sim\normal(0,\Sigma)$, we have
\begin{align}
&\E_{X,w^*}[\|\hat{w}-w^*\|^2] =\E_{X}[\fronorm{I+\tV\hSigma}^2]=\nonumber\\
&=\fronorm{I+\tV\Sigma}^2+ 
\frac{1}{n}
\left(\tr(\tV\Sigma^2\tV^\sT) + \tr(\tV\Sigma\tV^\sT) \tr(\Sigma)\right)\label{eq:test-error}
\end{align}
where the expectation is with respect to randomness in $X$ and $w^*\sim\normal(0,I_d)$.
\end{propo}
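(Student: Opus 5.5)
The proof has two parts: an explicit one-step forward pass of the sparse LSA model, followed by a Gaussian moment computation.

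\textbf{Step 1: the forward pass.} Form $\hat Z_0$ with last column $(0,0,w_0,1)=(0,0,0,1)$ and evaluate $f_{\rm LSA}(\hat Z_0)_{[:,-1]}$ under the sparsity pattern~\eqref{eq:sparsity-structure}, with $\tW=I$ and $W_{24}=-1$. We need $Z^\top W z_{\rm last}$, where $z_{\rm last}=(0,0,0,1)$.

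The only nonzero entries of $W$ acting on the last column are the $(1,3)$ block, which multiplies the $w_0$ entry and so contributes $x_j^\top w_0=0$, and the $(2,4)$ entry $-1$, which multiplies the trailing $1$. Hence the $j$-th entry of $Z^\top W z_{\rm last}$ equals $-y_j$ for context columns and $0$ for the last column.

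Next, $VZ$ has nonzero rows only in the third block, namely $\tV x_j$, and zero in the last column. Therefore the update to the $w$-block is
\[
-\frac1\lambda\sum_j \tV x_j y_j = -\frac1\lambda \tV X X^\top w^*.
\]
Taking the normalization $\lambda=n$, consistent with the SFT analysis, and $w_0=0$, this gives $\hat w=-\frac1n\tV XX^\top w^*=-\tV\hSigma w^*$.

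\textbf{Step 2: expectation over $w^*$.} The error is $\hat w-w^*=-(I+\tV\hSigma)w^*$. Since $w^*\sim\normal(0,I_d)$ is independent of $X$, we have $\E_{w^*}\|Aw^*\|^2=\fronorm{A}^2$. This yields the first equality, $\E_X\fronorm{I+\tV\hSigma}^2$.

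\textbf{Step 3: expectation over $X$.} Expand
\[
\fronorm{I+\tV\hSigma}^2=\tr(I)+2\tr(\tV\hSigma)+\tr(\tV\hSigma\hSigma\tV^\sT).
\]
Using $\E\hSigma=\Sigma$, the first two terms reassemble with $\tr(\tV\Sigma\Sigma\tV^\sT)$ into $\fronorm{I+\tV\Sigma}^2$. What remains is $\tr\bigl(\tV(\E\hSigma^2-\Sigma^2)\tV^\sT\bigr)$.

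The key computation is the Wishart second moment. Write $\hSigma^2=\frac1{n^2}\sum_{i,j}x_ix_i^\top x_jx_j^\top$. Terms with $i\ne j$ contribute $\frac{n(n-1)}{n^2}\Sigma^2$. For the diagonal terms, Isserlis' theorem gives $\E[xx^\top xx^\top]=2\Sigma^2+\tr(\Sigma)\Sigma$, so they contribute $\frac1n(2\Sigma^2+\tr(\Sigma)\Sigma)$. Summing,
\[
\E\hSigma^2=\Sigma^2+\frac1n\bigl(\Sigma^2+\tr(\Sigma)\Sigma\bigr).
\]
Substituting this gives the correction $\frac1n\bigl(\tr(\tV\Sigma^2\tV^\sT)+\tr(\Sigma)\tr(\tV\Sigma\tV^\sT)\bigr)$, which is exactly~\eqref{eq:test-error}.

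\textbf{Main obstacle.} The only delicate point is the fourth-moment identity $\E[xx^\top xx^\top]=2\Sigma^2+\tr(\Sigma)\Sigma$. I would verify it by writing $x=\Sigma^{1/2}g$ with $g\sim\normal(0,I)$ and using $\E[gg^\top Bgg^\top]=B+B^\top+\tr(B)I$ with $B=\Sigma$. A secondary check is the bookkeeping in Step 1: the sign and the normalization $\lambda=n$ need to match the convention implicitly fixed by the pretrained solution~\eqref{eq:Vs-Ws}.
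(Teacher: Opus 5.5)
Your proposal is correct and follows essentially the same route as the paper. The paper also specializes the one-step recursion $\hat w = w_0+\tV\hSigma(\tW w_0-w^*)$ to $w_0=0$, averages over $w^*$ to reach $\E_X\fronorm{I+\tV\hSigma}^2$, and then uses the Wishart second moment; it imports this from a prior lemma as $\E[\hSigma B\hSigma]$ with $B=\tV^\sT\tV$, whereas you need only the case $B=I$ (via $\fronorm{\tV\hSigma}^2=\tr(\tV\hSigma^2\tV^\sT)$) and derive it from Isserlis, which yields exactly the stated coefficients (the paper's appendix proof of its supporting lemma instead carries a sign slip, $-2\E[\tr(A\Sigma)]$ in place of $+2\tr(A\Sigma)$, and a stray factor $2$ on the $\tr(A\Sigma A^\sT)\tr(\Sigma)$ term in its last line).
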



In the test error~\eqref{eq:test-error}, we focus on the dominant term $\fronorm{I+\tV\Sigma}$
for large prompt length 
$n$. Assuming post-training features are i.i.d. from $\normal(0,A)$ for some 
$A\succeq 0$, the post-training weights $\tV_*(A)$ depend on the covariance 
$A$ via $\Phi$ in~\eqref{eq:V*-W*}. Thus, optimal data selection reduces to choosing  covariance $A$ that minimizes the post-test error.



\subsection{Optimal Data Allocation}\label{sec:optimal_data}

To analyze the interaction between pre-training and post-training, we consider the test-time covariance $\Sigma = \Sigma_0 + \Delta$, where $\Sigma_0$ represents the distribution seen during pre-training and $\Delta$ denotes the adaptation task shift. We now characterize how the choice of the post-training covariance $A$ affects the post-test error across different subspaces. 

Let $U = \text{range}(A)$. From \eqref{eq:V*-W*}, the term $\Phi$ shares the range $U$, while on the orthogonal complement $U^\perp$, the weight matrix $\tilde{V}_*$ acts simply as the pre-trained inverse $-\Gamma_0^{-1}$. Furthermore, outside the range of the adaptation shift $\Delta$, the test-time covariance $\Sigma$ coincides with the pre-training covariance $\Sigma_0$. Since $\Gamma_0^{-1}\Sigma_0 \approx I$ by the definition of $\Gamma_0$ in \eqref{eq:Gamma-one-task}, the residual error $I + \tilde{V}\Sigma$ on $U^\perp$ becomes negligible if we align $U$ with $\text{range}(\Delta)$. This alignment ensures that the post-training resources are concentrated exclusively on the subspace where the pre-trained model exhibits a deficit.

Restricted to the adaptation subspace $\mathcal{U} = \text{range}(\Delta)$, the population-limit error operator is expressed as:
\begin{align*}
&P_U (I + V_\infty \Sigma) P_U \\
&=I - \eta \left( \frac{n+1}{n} A + \frac{\text{tr}(A)}{n} I \right)^{-1} (P_U \Sigma_0 P_U + \Delta)
\end{align*}
In the high-dimensional regime (large $n$), the trace term and the $1/n$ scaling factors become secondary, implying that the optimal choice for the post-training covariance is approximately $A \approx \eta (P_U \Sigma_0 P_U + \Delta)$.


\noindent{\bf Connection to example hardness.} In practice, post-training is often employed to address ``gaps'' in the model—specifically, skills or topics that were missing or underrepresented during pre-training. To capture such scenarios, we assume that the range of the pre-training covariance $\Sigma_0$ and the range of the adaptation shift $\Delta$ have a small inner product (i.e., they are nearly orthogonal). Consequently, $P_U \Sigma_0 P_U$ constitutes only a small component of $\Sigma_0$. We argue that in these scenarios, the most effective strategy is to select post-training examples that the pre-trained model finds ``hard''. Specifically, Proposition~\ref{propo:test-error} establishes that the error of a pre-trained model on a task with prompts $x_{i,\tau} \sim \mathcal{N}(0, A)$ is approximately $\mathcal{L}_{pre} \approx \|I - \Gamma_0^{-1} A\|_F^2$. Because the support of $\Sigma_0$ is small on $\text{range}(\Delta)$, the operator $\Gamma_0^{-1}$—which essentially acts as the inverse of the pre-training density—takes its largest values on this space. Therefore, examples whose covariance is spanned by $\text{range}(\Delta)$ represent directions where the pre-trained model has the least confidence and highest residual error. This leads to our first key insight:
\begin{quote}\emph{{\bf Insight 1:} Selecting examples that are ``hard'' for the pre-trained model (i.e., those aligned with the adaptation shift $\Delta$) is the most effective strategy for post-training.}\end{quote}


\subsection{Data Scaling in SFT}\label{sec:SFT_scale}

We study how SFT data size affects post-training performance by analyzing the expected error (Proposition~\ref{propo:test-error}) on post-test prompts  $\normal(0,\Sigma)$.We examine how this error varies with the number of prompts $B$ and the prompt length 
$n$ during SFT.

We first present experiments, followed by theory supporting the resulting insights. The pretraining distribution is 
$\normal(0,\Sigma_0)$ with $\Sigma_0 = \diag{\rho 1_m, 0_{n-m}}$, $d = 400$ and $m=200$. The post-test distribution uses $\Sigma = \Sigma_0+\Delta$, where $\Delta = \diag{1_{m}, 0_{d-m}}$. During post-training, data is drawn from 
$\normal (0,A)$ with $A  = \diag{\eta(\rho+1)1_m,r 1_{n-m}}$, matching 
$\eta\Sigma$ on the first $m$ coordinates and using $r$ on the rest. We set $\rho$ and 
$r$ small so the first $m$ directions are underrepresented in pretraining and can be strengthened during post-training. When 
$r=0$, the post-train distribution matches the optimal allocation of Section~\ref{sec:optimal_data}. However, nonzero $r$ introduces interference between post-training and pretraining data, which is often the case in practice. By \eqref{eq:V*-W*}, the transformer parameters depend on the pseudo-inverse of the empirical covariance, so smaller nonzero $r$ yields stronger interference.


In the first experiment, we vary the number of  prompts $B$ from 50 to 2000, for prompt lengths $n\in\{400,800,1200\}$, fix 
$\rho=0.1$, and consider interference levels 
$r\in\{0,0.01,0.1\}$. Fig.~\ref{fig:main_figure} shows that the error exhibits double descent, with an overshoot at $B=m$ when $r=0$ and at $B=d$ when $r\neq 0$. The error first decreases with 
$B$, then increases again, and the crossover point grows with the prompt length $n$. When interference is strong, the error remains above its value at optimal $B$ even in large $B$ limit (Fig.~\ref{fig:second_sub}).
%
%
\begin{figure*}[!t]
     \centering
     \begin{subfigure}[b]{0.32\textwidth}
         \centering
         \includegraphics[width=\textwidth]{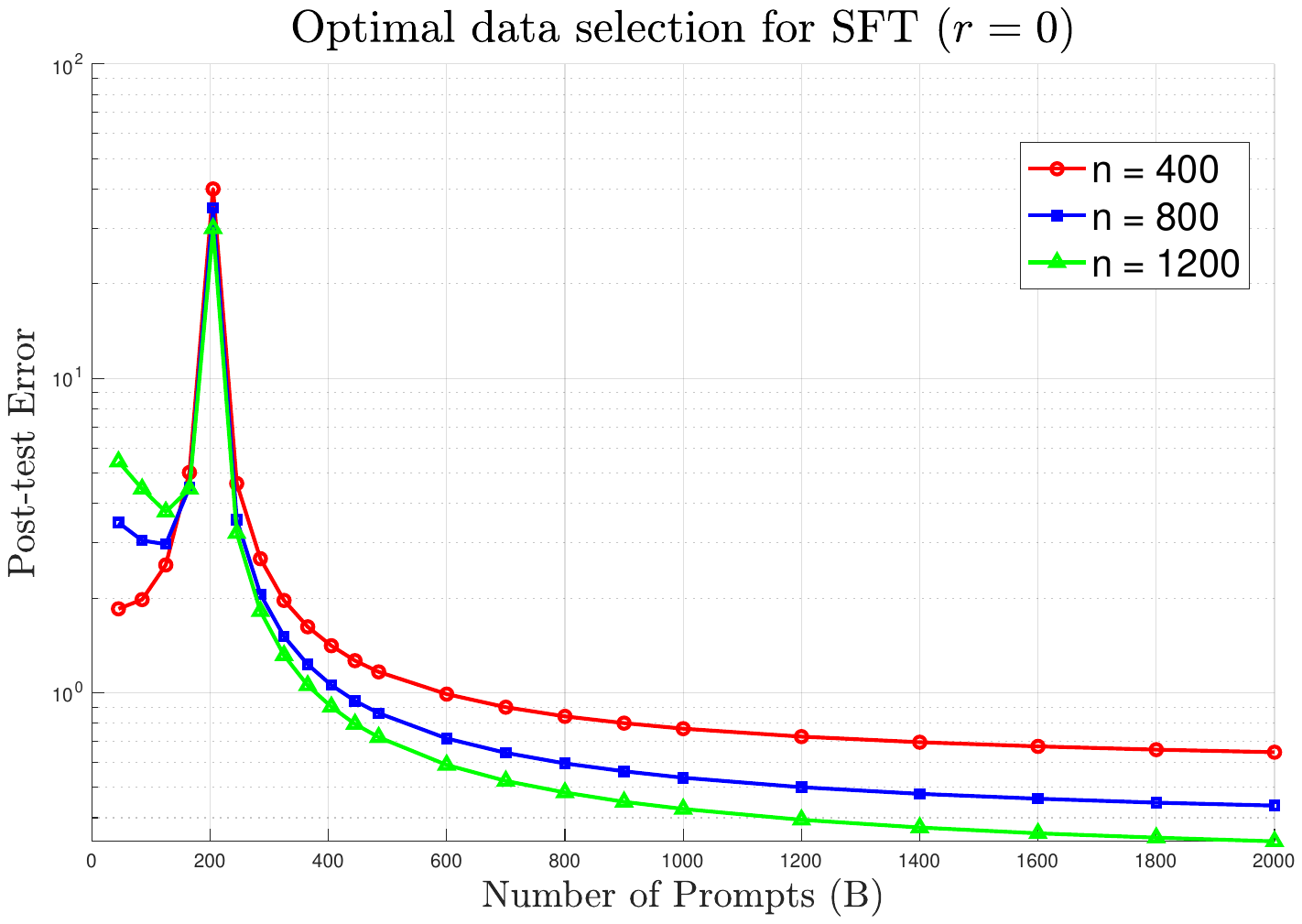}
         \caption{Optimal data selection for SFT ($r=0$)}
         \label{fig:first_sub}
     \end{subfigure}
     \hfill 
     \begin{subfigure}[b]{0.32\textwidth}
         \centering
         \includegraphics[width=\textwidth]{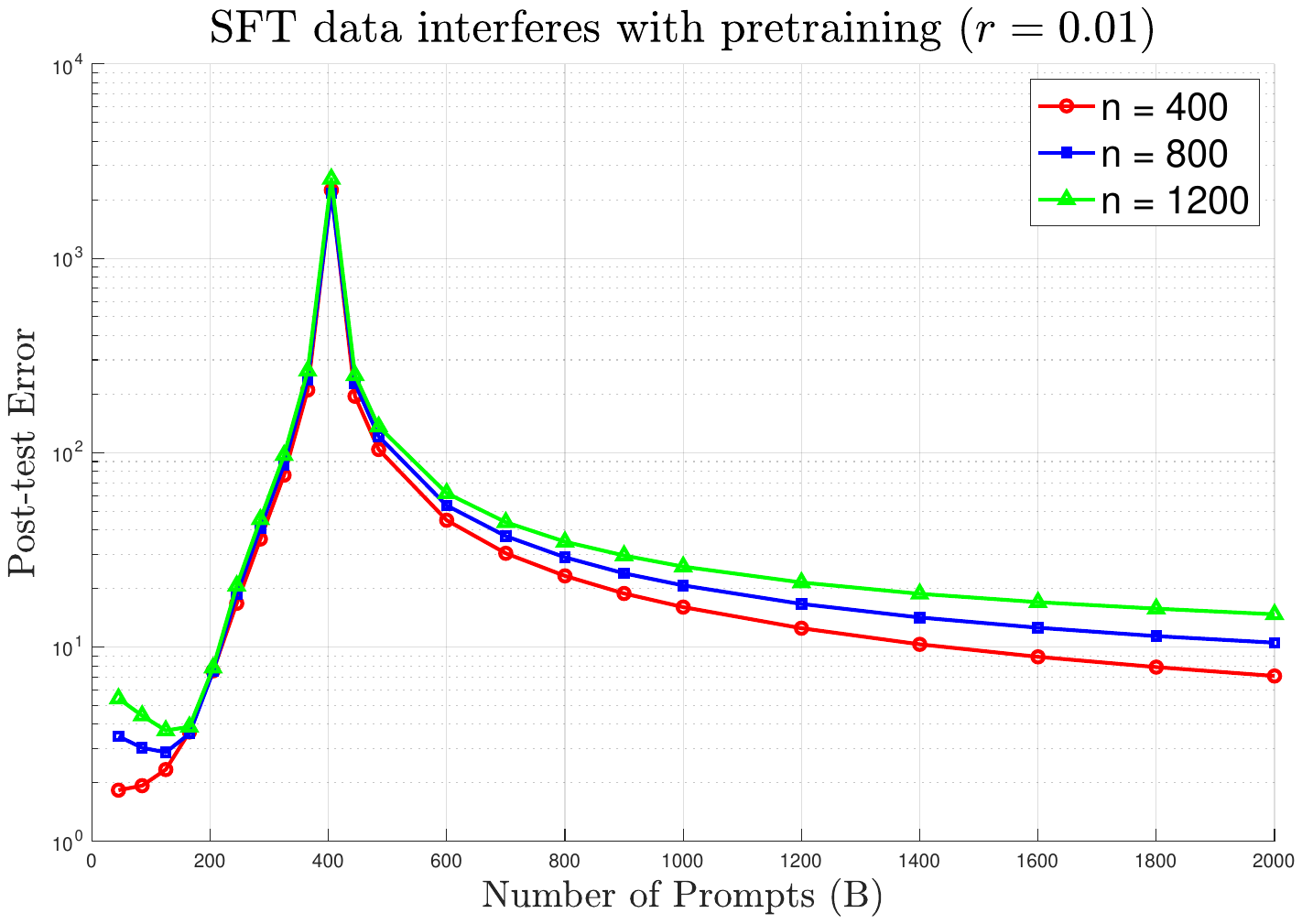}
         \caption{Data selection for SFT under interference ($r=0.01$).}
         \label{fig:second_sub}
     \end{subfigure}
          \hfill 
     \begin{subfigure}[b]{0.32\textwidth}
         \centering
         \includegraphics[width=\textwidth]{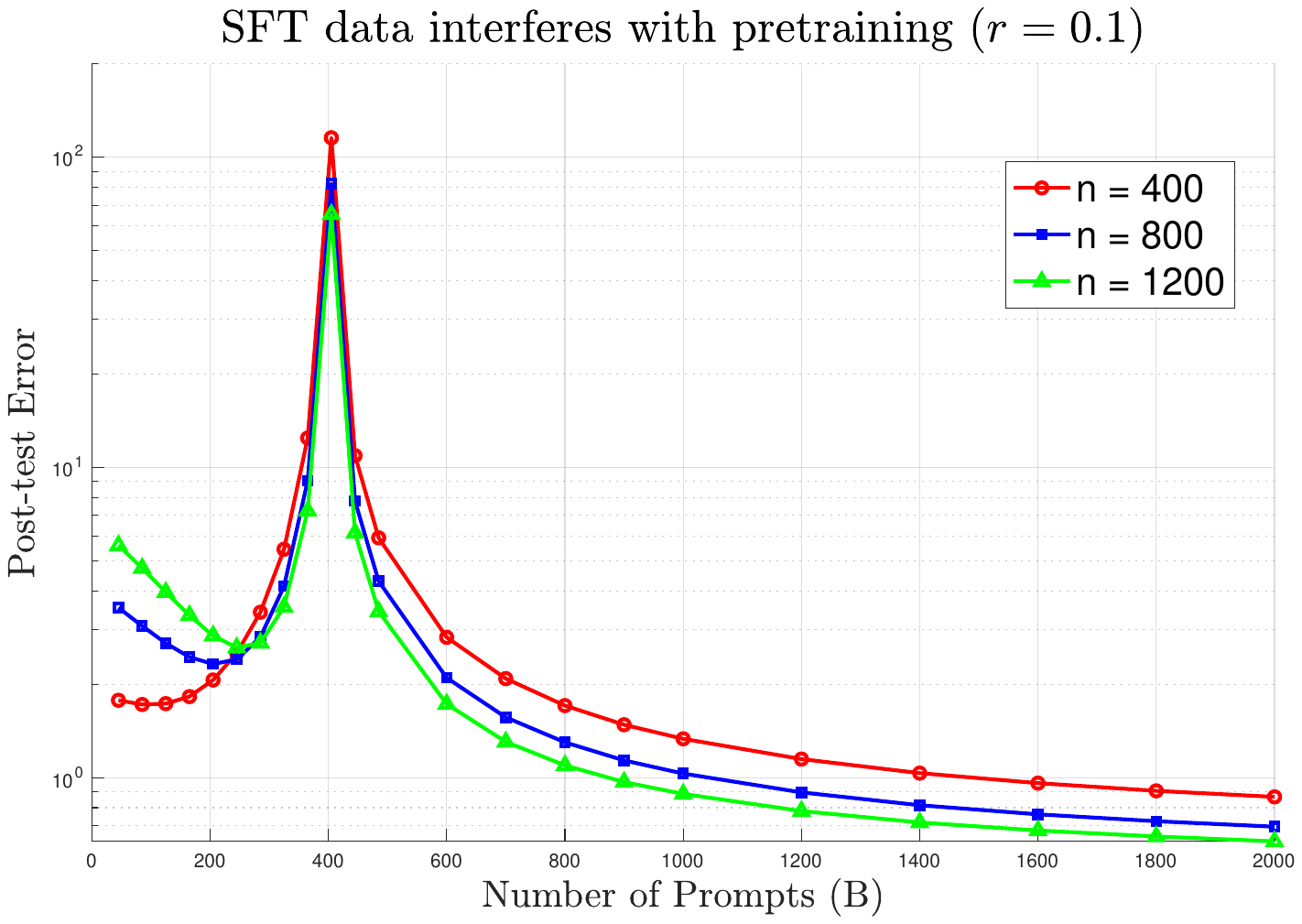}
         \caption{Data selection for SFT under interference ($r=0.1$).}
         \label{fig:third_sub}
     \end{subfigure}
     
     \caption{Post-test error as the number or prompts $B$ varies. Here, $d = 400, m= 200$ with different prompt lengths $(n)$. Pre-trained covariance is $\Sigma_0 = \diag{\rho 1_{m}, 1_{d-m}}$, $\Delta = \diag{1_m, 0_{n-m}}$. Left panel represents the optimal SFT data allocation with covariance $A= \diag{ \eta (\rho+1) 1_m, 0_{n-m}}$, with $\rho = 0.1$. The right panel represents the case that SFT data distribution interferes with the pretraining distribution. Here, $A = \diag{ \eta (\rho+1) 1_m, r 1_{n-m}}$, with $r = 0.01$.
     }
     \label{fig:main_figure}
\end{figure*}
In the second experiment, we vary the prompt length $n$ from 20 to 1000 and evaluate post-test error at $B\in\{50,150,300,500\}$.
As shown in Figure~\ref{fig:SFT_vary_n}, the error trends differ across choice of$B$. Under interference and for small to moderate values of $B$, it first decreases with 
$n$ and then becomes monotonically increasing, yielding a U-shaped curve and indicating an optimal prompt length that minimizes test error.


These results show that increasing SFT data volume—either the number of prompts $B$ or the prompt length $n$—can paradoxically degrade performance in the presence of interference. The key trade-off is that more SFT data helps the model learn underrepresented dimensions from pretraining, but also amplifies interference that erodes pretrained capabilities. Our findings therefore suggest an optimal data size that balances these competing effects. This further supports the empirical preference for small, high-quality datasets, whose high information density enables effective adaptation without the catastrophic costs of over-parameterization and interference. We formalize this observation as follows:

\begin{quote}\emph{{\bf Insight 2:} To mitigate the effects of interference between pretraining and post-training, SFT datasets should be curated to be relatively small in volume and high in quality.}\end{quote}

\begin{figure*}[!t]
     \centering
     \begin{subfigure}[b]{0.32\textwidth}
         \centering
         \includegraphics[width=\textwidth]{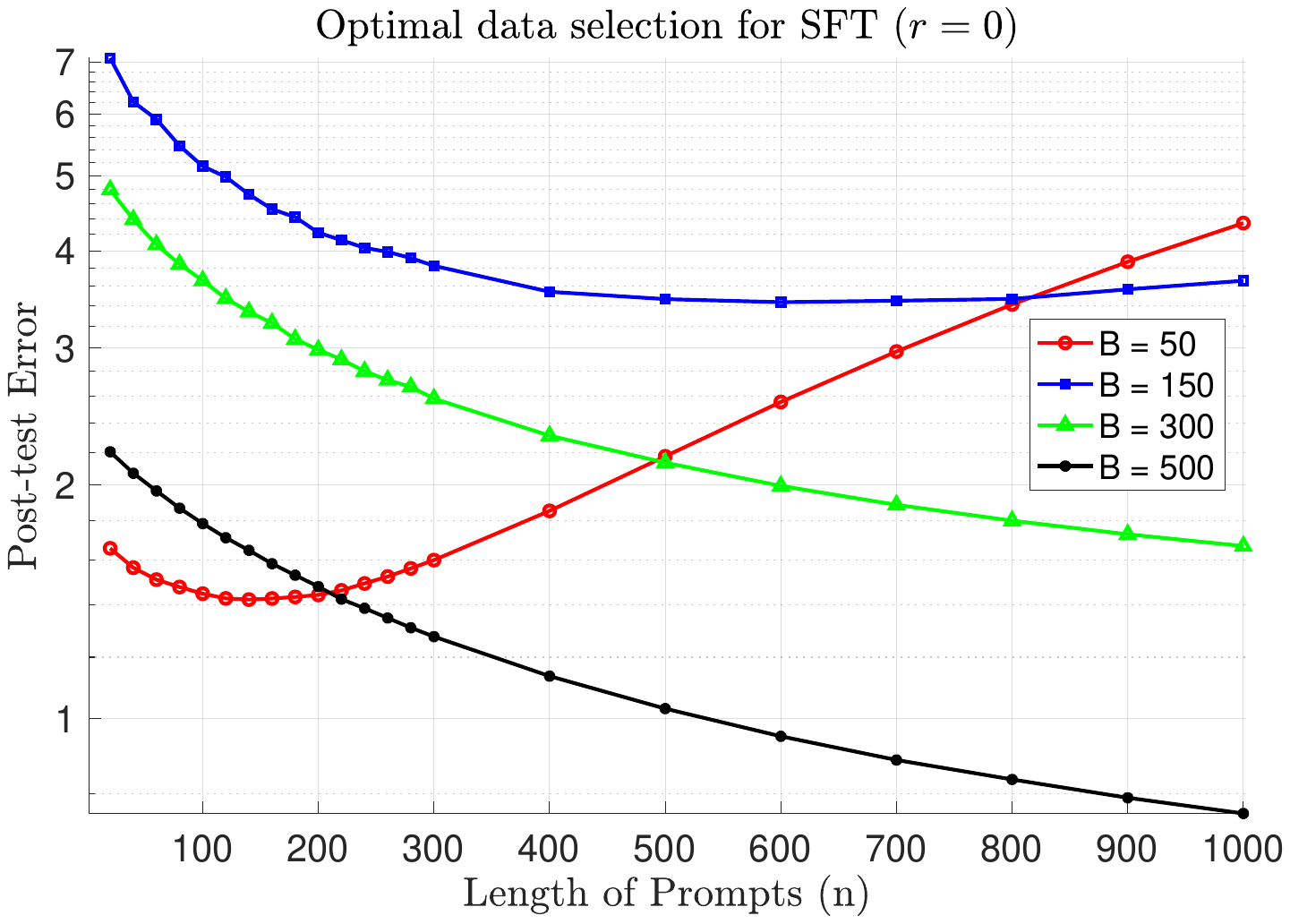}
         \caption{Optimal data selection for SFT ($r=0$)}
         \label{fig:first_sub_vary_n}
     \end{subfigure}
     \hfill 
     \begin{subfigure}[b]{0.32\textwidth}
         \centering
         \includegraphics[width=\textwidth]{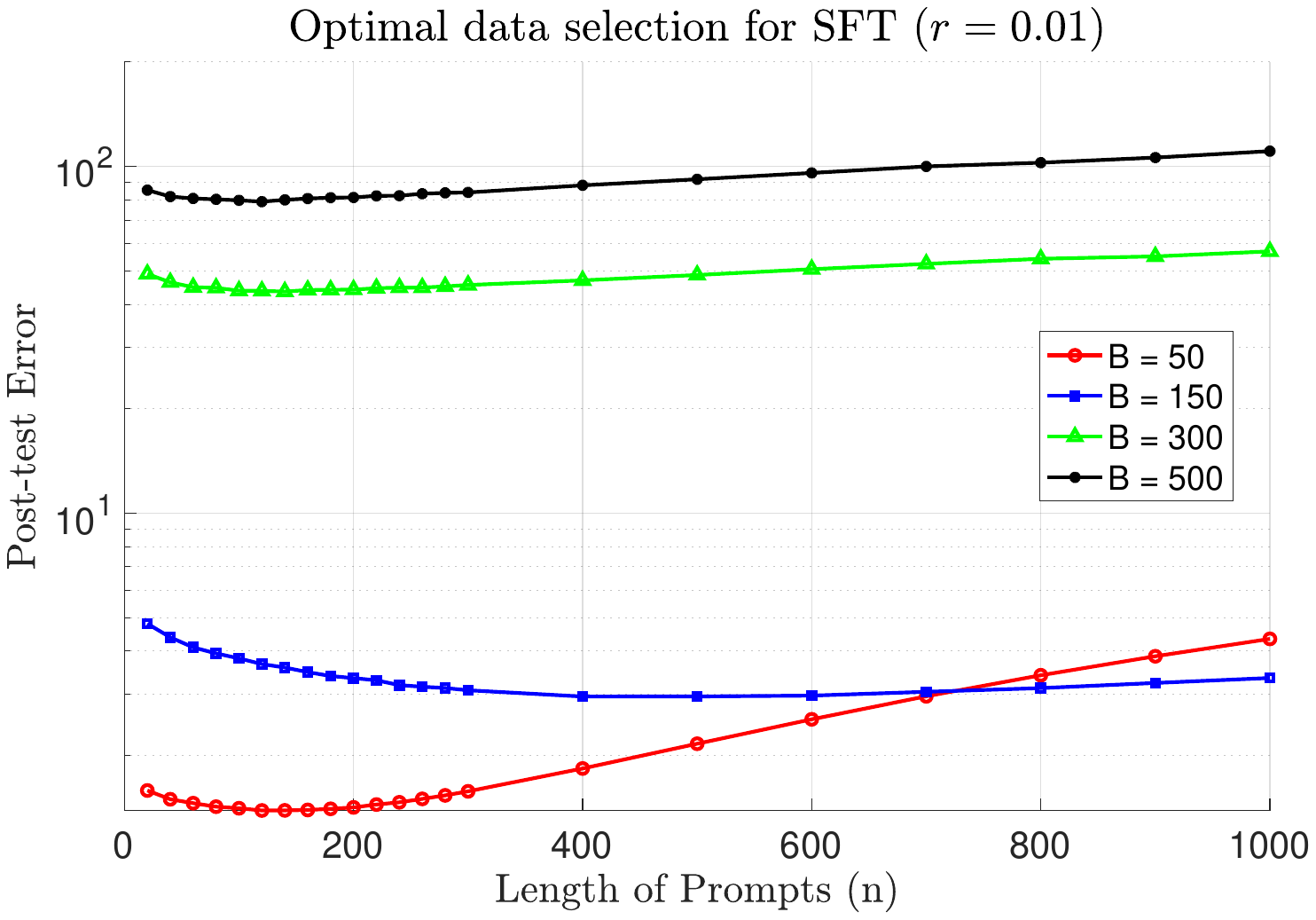}
         \caption{Data selection for SFT under interference ($r=0.01$).}
         \label{fig:second_sub_vary_n}
     \end{subfigure}
          \hfill 
     \begin{subfigure}[b]{0.32\textwidth}
         \centering
         \includegraphics[width=\textwidth]{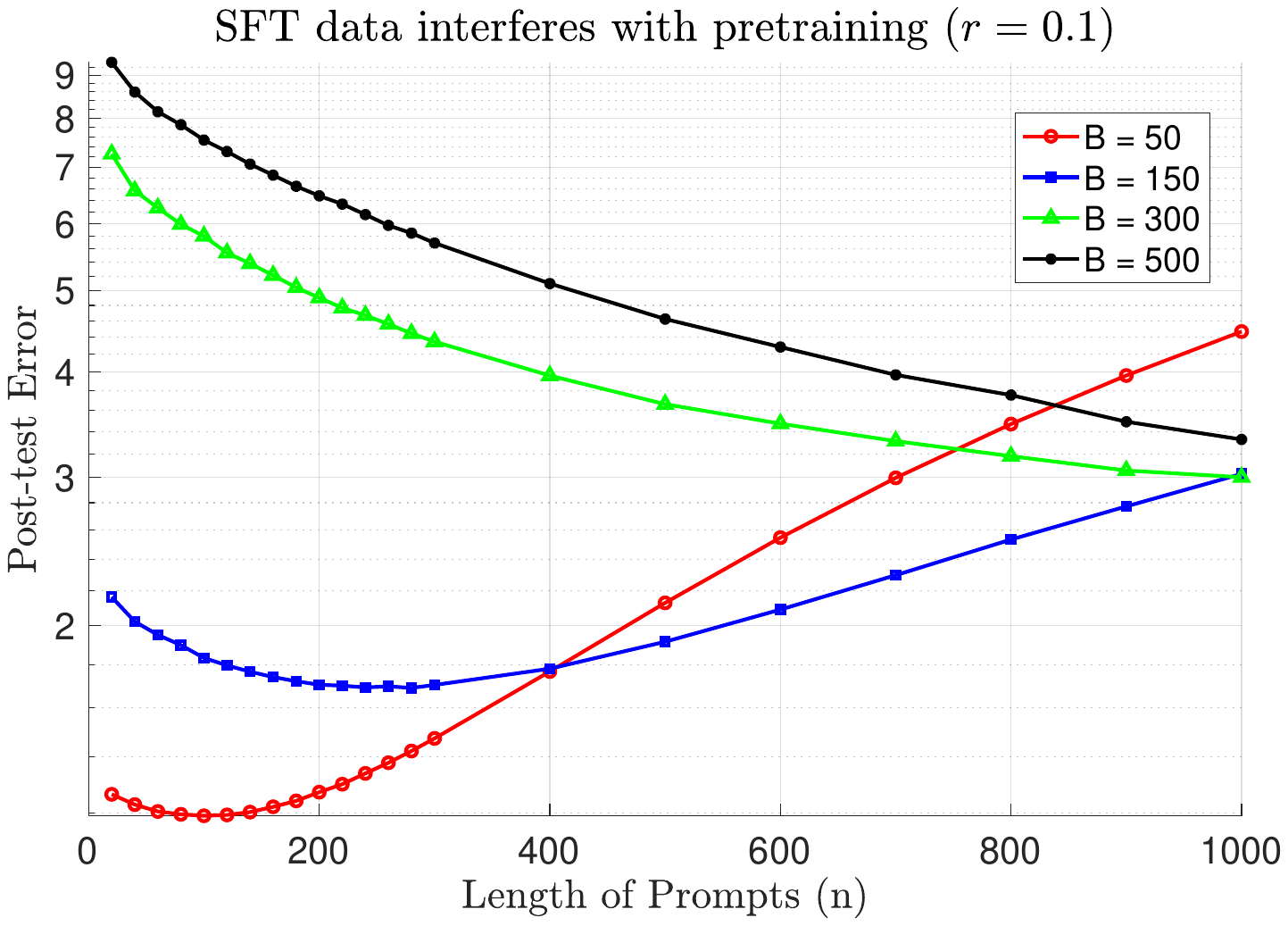}
         \caption{Data selection for SFT under interference ($r=0.1$).}
         \label{fig:third_sub-vary_n}
     \end{subfigure}
     
     \caption{Behavior of the post-test error as we varying the prompt length
$n$, under the same setup as in Figure~\ref{fig:main_figure}.}
     \label{fig:SFT_vary_n}
\end{figure*}
In Appendix~\ref{sec:sft-asym} we analyze the post-test error. The analysis, consistent with our experiments, predicts that the test error diverges as $B\to d$ when interference is present ($r\neq 0$) and as $B\to m$ when $r = 0$.
We further characterize the asymptotic limit of the post-test error in the scaling regime where $d, m,$ and $B \to \infty$ while their relative ratios remain constant. This analysis demystifies the quantitative effect of different factors on the test error behavior.
\section{Analysis of the OS loss}\label{sec:RL-analysis}
We begin by deriving a more direct characterization of the outcome supervision (OS) loss.
\begin{propo}\label{propo:RL}
For the LSA model with k-step of thinking during the post-training the OS loss can be written as
\[
\LRL(\tV,\tW)\!\! =\!\! \frac{1}{2B}\!\sum_{\tau=1}^B \twonorm{\left(I+\sum_{i=0}^{k-1} (\tV S_\tau\tW+I)^i\tV S_\tau  \right) w^*_\tau}^2
\]
\vspace{-0.2cm}
\end{propo}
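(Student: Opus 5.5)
The plan is to compute the last column of $f_{\rm LSA}(\hat Z_{\tau,i})$ explicitly under the sparsity structure~\eqref{eq:sparsity-structure}. This gives a linear recursion $\hat w_{\tau,i}\mapsto \hat w_{\tau,i+1}$, which we then unroll from $w_{\tau,0}=0$. Throughout, I take the normalization $\lambda=n$, consistent with Proposition~\ref{propo:test-error}, where $\hat w=-\frac1n \tV XX^\top w^*$. I write $z=(0_d,0,\hat w_{\tau,i},1)$ for the last column of $\hat Z_{\tau,i}$.

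\textbf{Step 1 (invariant on generated columns).} Every generated column has the form $(0_d,0,\hat w_{\tau,j},1)$. We prove this by induction on $i$. The $x$-, $y$- and last-row entries of $f_{\rm LSA}(\hat Z_{\tau,i})_{[:,-1]}$ equal those of $z$ plus the corresponding rows of $VZ(\cdot)$. Since only the third row-block of $V$ is nonzero, those rows of $VZ(\cdot)$ vanish. Hence the $x$-part stays $0$, the $y$-part stays $0$, and the last entry stays $1$. This is the step I expect to need the most care, because the recursion only closes once we know the $*$ entries in~\eqref{eq:inference} are zero.

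\textbf{Step 2 (one-step update).} The last column of the output is $z+\frac1n VZ Z^\top W z$. By the sparsity pattern, $Wz=(\tW\hat w_{\tau,i},\,-1,\,0,\,0)$. For a context column $j\le n$, the corresponding entry of $Z^\top Wz$ is therefore $x_{\tau,j}^\top\tW\hat w_{\tau,i}-y_{\tau,j}$. For every later column ($w_{\tau,0}$ and generated ones), Step 1 gives zero $x$- and $y$-parts, so the entry is $0$. Moreover, $VZ$ has nonzero third block $\tV[x_{\tau,1},\dots,x_{\tau,n},0,\dots,0]$. Using $y_{\tau,j}=\langle w^*_\tau,x_{\tau,j}\rangle$, we obtain
\[
\hat w_{\tau,i+1}=\hat w_{\tau,i}+\tV S_\tau(\tW\hat w_{\tau,i}-w^*_\tau)=(I+\tV S_\tau\tW)\hat w_{\tau,i}-\tV S_\tau w^*_\tau .
\]

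\textbf{Step 3 (unrolling).} Starting from $\hat w_{\tau,0}=0$, a simple induction gives $\hat w_{\tau,k}=-\sum_{i=0}^{k-1}(\tV S_\tau\tW+I)^i\tV S_\tau w^*_\tau$. Consequently
\[
\hat w_{\tau,k}-w^*_\tau=-\Big(I+\sum_{i=0}^{k-1}(\tV S_\tau\tW+I)^i\tV S_\tau\Big)w^*_\tau .
\]
Taking the squared norm, which is insensitive to the sign, and averaging over $\tau\in[B]$ yields the claimed expression for $\LRL$.
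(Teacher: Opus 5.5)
Your proposal is correct and follows the paper's proof. Both derive the one-step recursion $\hat w_{\tau,i+1}=(I+\tV S_\tau\tW)\hat w_{\tau,i}-\tV S_\tau w^*_\tau$ from the sparse LSA structure with $\lambda=n$ and unroll it from $\hat w_{\tau,0}=0$; your Step 1 (generated columns keep zero $x$- and $y$-parts) makes explicit an invariant the paper uses implicitly when it reuses the recursion from the proof of Theorem~\ref{thm:SFT_opt}.
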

%
The parameters $(\tV,\tW)$ are initialized at $(-\Gamma_0^{-1}, I)$ from the pretraining stage. We next study the landscape of the  OS loss which demystifies several intriguing characteristics of post-training via OS and how it compares with SFT post training. To simplify our discussions and derivations, we fix $\tW=I$ and only update $\tV$ via gradient descent. However, we expect our discussion to extend to the general case of updating both parameters, albeit with a more complicated derivations. In our experiments, we update all of the transformer weights and showing our insights from analysis are empirically observed as well. 

By fixing $\tW=I$, the OS loss simplifies to:
\vspace{-0.15cm}
\[
\LRL(\tV,I) = \frac{1}{2B}\sum_{\tau=1}^B \twonorm{(I+\tV S_\tau)^k w^*_\tau}^2\,.
\]
\vspace{-0.2cm}

Let $M_\tau = I+ \tV S_\tau$. As derived in Appendix~\ref{app:GD_H}, the gradient of the OS loss with respect to the operator $V$ is given by:
\vspace{-0.15cm}
$$\nabla_{{V}} \LRL = \frac{1}{B}\sum_{\tau=1}^B\sum_{j=0}^{k-1} (M_\tau^T)^j M_\tau^k w_\tau^* (w_\tau^*)^T (M_\tau^T)^{k-1-j} S_\tau^T\,.$$

\noindent{\bf Vanishing and growing gradients in OS Loss.} The gradient contains the term $M_\tau^k$, which acts as a powerful scaling factor. In the stable region ($\rho(M_\tau) < 1$), the term $M_\tau^k$ shrinks the gradient toward zero exponentially fast as the chain length $k$ increases. In this regime, the model is already stable on the task, but the vanishing gradient makes it increasingly difficult to ``nudge'' the matrix $\tV$ into the optimal subspace for further refinement. Conversely, if $\rho(M_\tau) > 1$, the gradient has an exponential growth in $k$. This creates a sharp ``cliff'' in the loss landscape near the edge of stability ($\rho \approx 1$), and training requires infinitesimally small step sizes to prevent numerical divergence.


\noindent{\bf Sharpness and curvature of the landscape.} Because the OS loss is effectively a degree-$2k$ polynomial, the Hessian $\nabla^2 \mathcal{L}$ is highly sensitive to the operator's spectral properties. As shown in Appendix~\ref{app:GD_H}, near a global minimum where $M_\tau^k w^*_\tau \approx 0$, the Hessian spectral norm $\lambda_{\max}$ scales as:
\begin{align}\label{eq:lambda_H}
\lambda_{\max}(H) \propto \frac{1}{B}\sum_{\tau=1}^B k^2 \cdot \rho(M_\tau)^{2k-2}
\end{align}
This indicates that the curvature grows quadratically with the number of iterations $k$ near the boundary of stability. If gradient descent is not run for a sufficient duration, the model remains near this high-curvature ``cliff.'' In this state, small variations—arising from finite $n$, $B$, or sample noise during post-test evaluations—can push the model back into the unstable region, leading to  ``overthinking'', even if it pulled into the stable region during training.


\begin{quote} {\bf Insight 3: High sensitivity to sample variation.} The sharp curvature near $\rho \approx 1$ suggests that Outcome Supervision (OS) is prone to instability unless trained with large amounts of data ($n, B$) and many gradient steps. Insufficient training leaves the model at a ``sharp'' minimum where minor distribution shifts cause large errors.\end{quote}

\noindent{\bf Pretraining and Generalization.} The pretrained model, which serves as the initialization for the OS loss, plays a critical role in OS stability. Consider a new task drawn from the test-time covariance $\Sigma = \Sigma_0 + \Delta$, with $\Sigma_0$ the pretraining covariance and $\Delta$ the adaptation shift. Near initialization, and assuming a sufficiently large prompt length $n$ such that $S_\tau \to \Sigma$, the learned operator $V$ is dominated by the prior $V_0 \approx -\Gamma_0^{-1}$. Consequently, we have $V S_\tau \approx -\Gamma_0^{-1}(\Sigma_0 + \Delta) \approx -I - \Gamma_0^{-1}\Delta$. Thus, the transition matrix becomes:$$M_\tau = I + V S_\tau \approx -\Gamma_0^{-1}\Delta \implies \rho(M_\tau) \approx \rho(\Gamma_0^{-1}\Delta).$$This relationship reveals two distinct optimization regimes based on the spectral alignment between the pretraining distribution and the adaptation shift:
\vspace{-0.2cm}
\begin{itemize}[leftmargin=*]
\item{\bf Case 1: Incremental adaptation (spectral alignment).} When $\Gamma_0$ is large in the directions where $\Delta$ is prominent—implying the pretraining distribution effectively covers the shift—the spectral radius $\rho(M_\tau)$ remains small. In this regime, the model initializes within the stable region ($\rho < 1$), permitting a safe, albeit gradual, refinement of the model parameters.

\item{\bf Case 2: New task adaptation (spectral misalignment).} If the task involves novel subspaces where $\Gamma_0$ is small but $\Delta$ is large, the spectral radius becomes large, i.e., $\rho(\Gamma_0^{-1}\Delta) \gg 1$. 
The model starts deep in the unstable region, requiring a drastically reduced step size $\eta$ to maintain stability:
\vspace{-0.2cm}
$$\eta < \frac{2}{\lambda_{\max}(H)} \propto\frac{C}{k^2 \rho(M_\tau)^{2k-2}}\,,$$
\end{itemize}
by~\eqref{eq:lambda_H}. These observations are summarized below: 

\begin{quote} {\bf Insight 4: Synergy of pretraining and Outcome Supervision.} OS is most effective at improving performance on tasks already partially learned during pretraining. For novel tasks, the high initial spectral radius necessitates a slow and potentially unstable training procedure.\end{quote}


\noindent{\bf Practical Implications for Training.} The requirement for stability dictates several constraints on Outcome Supervision and RL. To ensure the eigenvalues remain within the stable regime, the learning rate must be carefully tuned to the sharpest direction of the Hessian. This creates a stark disparity in the optimization landscape: the step size $\eta$, forced to be infinitesimally small by the unstable directions, can be too small to make meaningful progress in the data-aligned directions. In addition, while RL does not require the high-quality, human-curated labels necessary for SFT, it compensates by requiring massive data diversity and volume. A large number of gradient steps is needed to overcome the slow progress in ``flat'' directions, while a high volume of data ensures the model is pushed deep into the stable region across a broad spectrum of tasks, reducing the risk of ``overthinking'' during inference.

\vspace{-0.2cm}
\section{Data Diversity and Distributional Balance in Pretraining}

In our analysis, the influence of the pretrained model on post-trained model is mathematically encapsulated in the initialization $V_0= -\Gamma_0^{-1}$, where by definition~\eqref{eq:Gamma-one-task}, $\Gamma_0\approx \Sigma_0$ the pretraining covariance. The post-test error, characterized by Proposition~\ref{propo:test-error}, is governed by the product $V\Sigma = V(\Sigma_0 + \Delta)$; at initialization, this yields $V\Sigma \approx -I - \Gamma_0^{-1}\Delta$. Consequently, an imbalanced pretraining distribution—characterized by a singular or ill-conditioned $\Gamma_0$—imposes a severe penalty on adaptation in new directions where $\Gamma_0$ is small but $\Delta$ is large. While SFT can partially mitigate a misaligned prior through the stabilizing influence of supervised signals, the OS and RL optimization is strictly bottlenecked by the spectral alignment between $\Gamma_0$ and $\Delta$. If $\Gamma_0$ lacks sufficient diversity, even minor shifts in novel subspaces trigger an exponential escalation of the Hessian's spectral norm, scaling as $k^2 \rho^{2k-2}$. This spectral divergence necessitates infinitesimally small step sizes and renders the model sensitive to variations in sample prompts in training. Such instability often manifests as ``overthinking'' during inference. Therefore, pretraining must prioritize distributional balance and data diversity as essential mechanisms for optimization stability. A broad spectral prior ensures the model initializes within the stable regime ($\rho < 1$), effectively smoothing the high-curvature ``cliffs'' of the RL landscape into manageable, flat regions for downstream adaptation.

\section{Experiments}\label{sec:main-experiment}
In this section, we conduct experiments to validate our theoretical results.

\textbf{Setting.}
We conduct experiments in two settings. First, we consider a transformer with a single linear self-attention (LSA) to confirm the results of our theorems. Then, we consider large, nonlinear transformer architecture namely GPT2 to validate the generality of our conclusions. 

In both sets of experiments, the data distribution follows our in-context weight prediction task in Sec. \ref{sec:problem}, where in the pre-training, data has a covariance of $\Sigma_0$, 
and in the post-testing with SFT or OS we have $\Sigma=\Sigma_0+\Delta$.
 During post-training, we let the model to output multiple steps before returning the final predicted weight vector, i.e., at each step $i$ we concatenate the embedding with $[0_{d}, \hat{w}_{i}, 1]$ as in Eq. \eqref{eq:inference} and input the concatenated embedding matrix to the model. The estimated $\hat{w}_{k}$ will be returned after $k$ steps of Chain of Thought (CoT). We report the average results and error bars over 10 runs. \looseness=-1
\begin{figure*}[t]
    \centering
    \begin{subfigure}[t]{0.29\textwidth}
        \includegraphics[width=\textwidth]{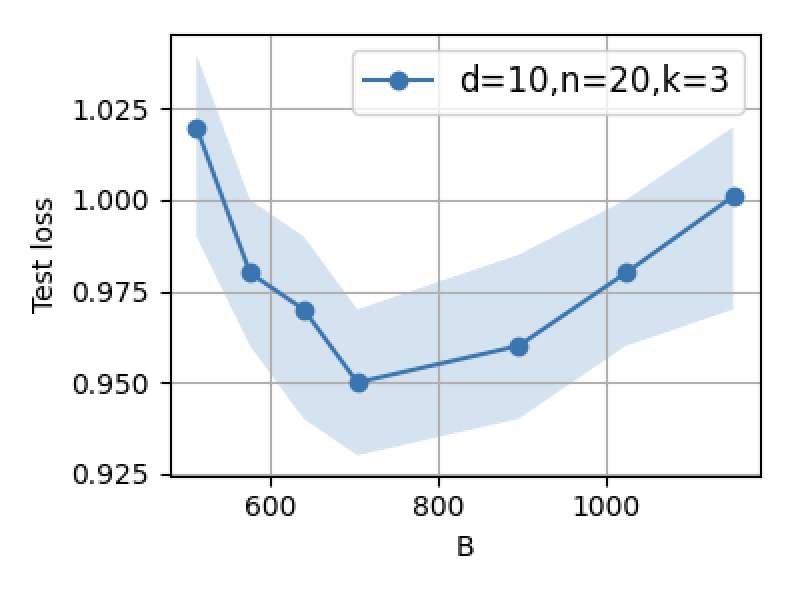}
        \caption{Supervised fine-tuning}\label{fig:GPT2_B}
    \end{subfigure}%
    \begin{subfigure}[t]{0.29\textwidth}
        \includegraphics[width=\textwidth]{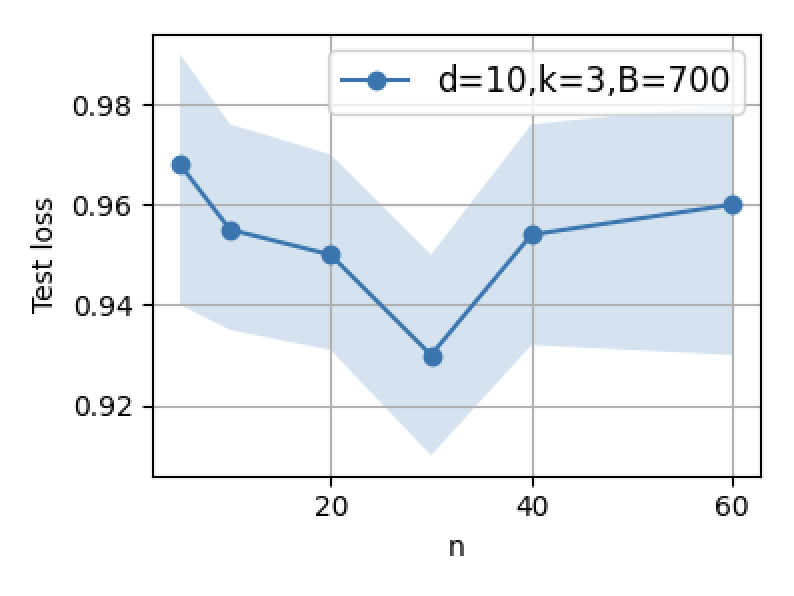}
        \caption{Supervised fine-tuning}\label{fig:GPT2_n}
    \end{subfigure}%
    \begin{subfigure}[t]{0.29\textwidth}
        \includegraphics[width=\textwidth]{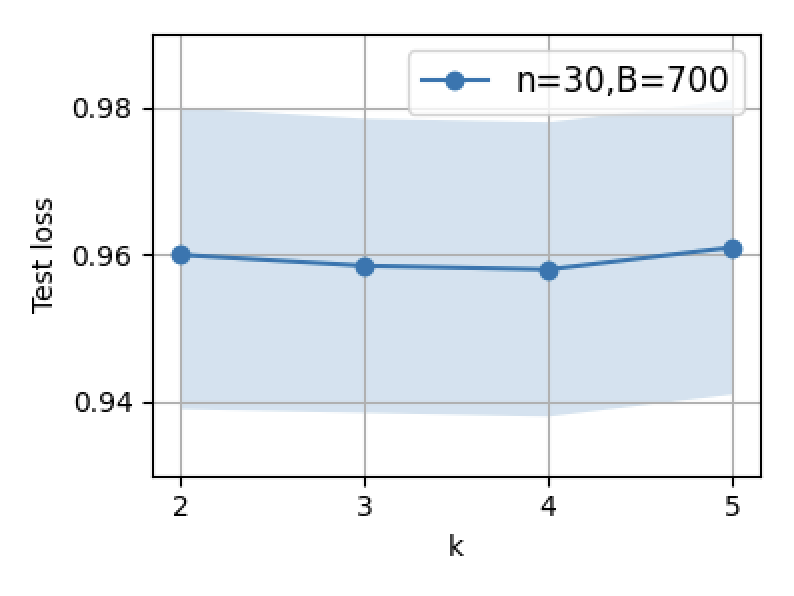}
        \caption{Supervised fine-tuning}\label{fig:GPT2_k}
    \end{subfigure}\\
    \begin{subfigure}[t]{0.29\textwidth}
        \includegraphics[width=\textwidth]{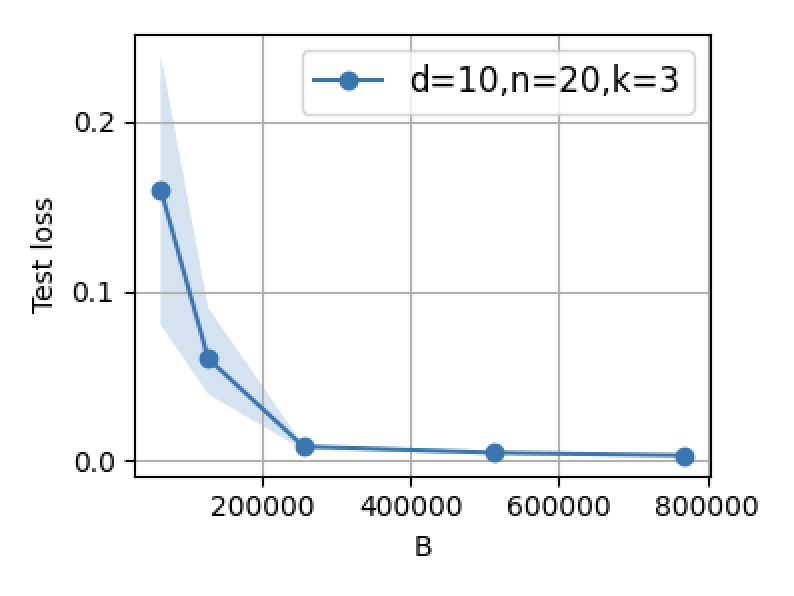}
        \caption{Outcome Supervision}\label{fig:GPT2_B_RL}
    \end{subfigure}%
    \begin{subfigure}[t]{0.29\textwidth}
        \includegraphics[width=\textwidth]{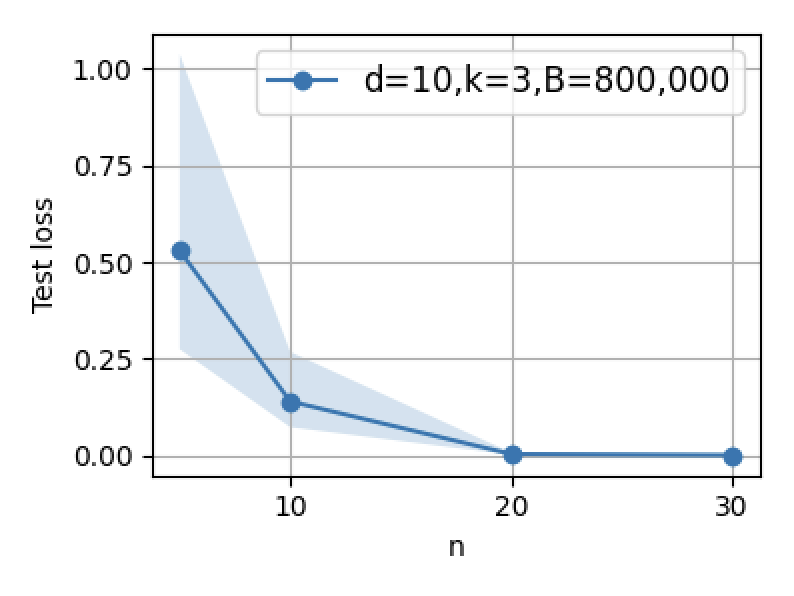}
        \caption{Outcome Supervision}\label{fig:GPT2_n_RL}
    \end{subfigure}%
    \begin{subfigure}[t]{0.29\textwidth}
        \includegraphics[width=\textwidth]{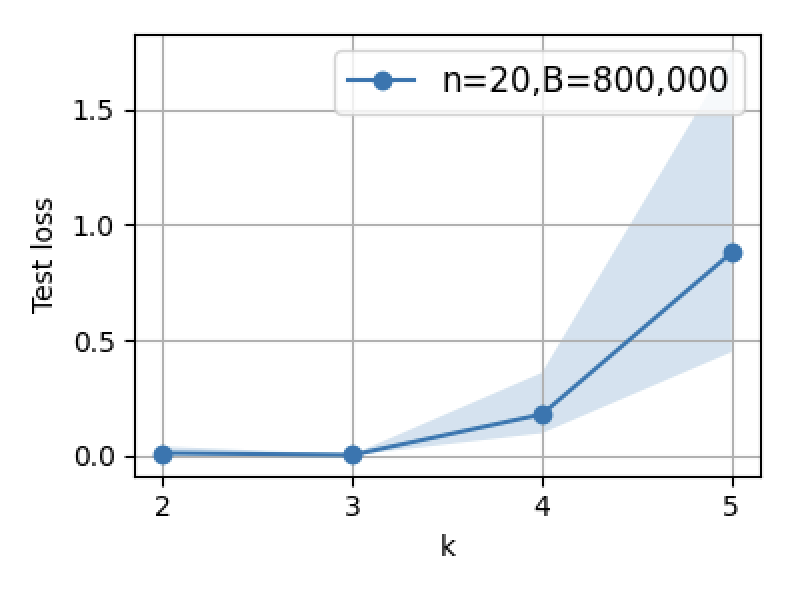}
        \caption{Outcome Supervision}\label{fig:GPT2_k_RL}
    \end{subfigure}
    \caption{GPT-2 experiments: Test loss for (a)-(c) post-training with SFT, and (d)-(f) post-training with Outcome Supervision (OS). 
    For SFT, there is a turning point where larger sample size ($B$) and context-length ($n$) hurt the performance. 
    In contrast, for OS larger $B, n$ improves the performance. 
    }
    \label{fig:GPT2}
\end{figure*}

\textbf{Pretrain, post-train, and test data.}
We generate pretraining data using $\Sigma_0$ where $\Sigma_{i,i}=0.1$ for $i\in \{1, \dotsc, d/5\}$ and $\Sigma_{i,i}=1$ for $i \in \{d/5, \dotsc, d\}$. 
Then, we post-train the transformer on the synthetic data generated with $\Delta$, where $\Delta$ is a low rank PSD matrix with $\Delta_{i,i}=1$. For testing the model, we use $\Sigma=\Sigma_0+\Delta$.

\medskip

\textbf{Large, nonlinear transformer architectures.}
We use a decoder-only Transformer architecture \citep{vaswani2017attention} from the GPT-2 family \citep{radford2019language}, consisting of 12 layers, 8 attention heads and a 256-dimensional embedding space. In total model contains 9.5M parameters. This architecture takes as input a sequence of vectors in its embedding space and predicts the weight vector within the same space. We apply this architecture to prompts of form $(x_{\tau,1}, y_{\tau,1}, \cdots , x_{\tau,m}, y_{\tau,m}, w_0, 1)$ in the following manner. 
In line with \citep{garg2022can}, we map each $y_{\tau,i}$ to the same dimension as $x_{\tau,i}$ by appending zeros, and map $x_{\tau,i}, y_{\tau,i}$ into the latent embedding space of the Transformer through a (learnable) linear transformation. 
We 
get the predicted $w_{\tau}$ as the model output. Similarly, we map the model output, i.e., $w_{\tau}$ from the latent embedding space of the Transformer to a d-dimensional vector through another (learnable) linear transformation.
Training is performed with a batch size of 64 over $100$ steps for SFT and $12k$ steps for OS. 
The model is first pretrained with a CoT length 
$k=8$. During both training and test, we apply CoT with length $k=3$. 
We used curriculum learning \citep{garg2022can} to speed up training.

Fig. \ref{fig:GPT2} (a)-(c) show the results when post-training is done with the SFT loss. Fig.~\ref{fig:GPT2_B},\ref{fig:GPT2_n} show that increasing the sample size ($B$) or context length ($n$) initially yields a lower test loss but further increasing the sample size or context length increases the test loss.  
Fig. \ref{fig:GPT2_k} shows that the 
test loss is relatively robust and not sensitive to the length of post-training CoT ($k$).
Fig \ref{fig:GPT2} (d)-(f) show the results when post-training is done with the OS loss. 
In contrast to SFT, we see that OS benefits from larger sample size ($B$) and context length ($n$). In addition, longer CoT ($k$) during post-training increases the test loss and degrades the performance, confirming insight 4 in Section~\ref{sec:RL-analysis}.
\medskip

\textbf{Linear self-attention (LSA) experiments.} We next present our results on transformers with a single linear self-attention (LSA) layer. 
%
%
We choose the token dimensions $d=100$, and 
post-train the model for 130 epochs
using Adam with learning rate $\eta = 0.001$. During inference, we 
return the final predicted weight vector without CoT, i.e. at test time we use $k=1$. 

Fig. \ref{fig:LSA} (a)-(c) show the results when post-training is done with the SFT loss. Fig. \ref{fig:LSA_B}, \ref{fig:LSA_n} show that increasing the sample size ($B$) or context length ($n$) initially yields a lower test loss but further increasing the sample size or context length increases the test loss. Fig. \ref{fig:LSA_k} shows that the test loss is relatively robust and not sensitive to the length of post-training CoT ($k$).
Fig \ref{fig:LSA} (d)-(f) show the results when post-training is done with the OS loss. In contrast to SFT, Fig. \ref{fig:LSA_RL_B}, \ref{fig:LSA_RL_n} show that OS benefits from larger sample size ($B$) and context length ($n$), and Fig. \ref{fig:LSA_RL_k} shows that longer CoT ($k$) during post-training increases the test loss and degrades the performance. 

\begin{figure*}[!t]
    \centering
    \begin{subfigure}[t]{0.3\textwidth}
        \includegraphics[width=\textwidth]{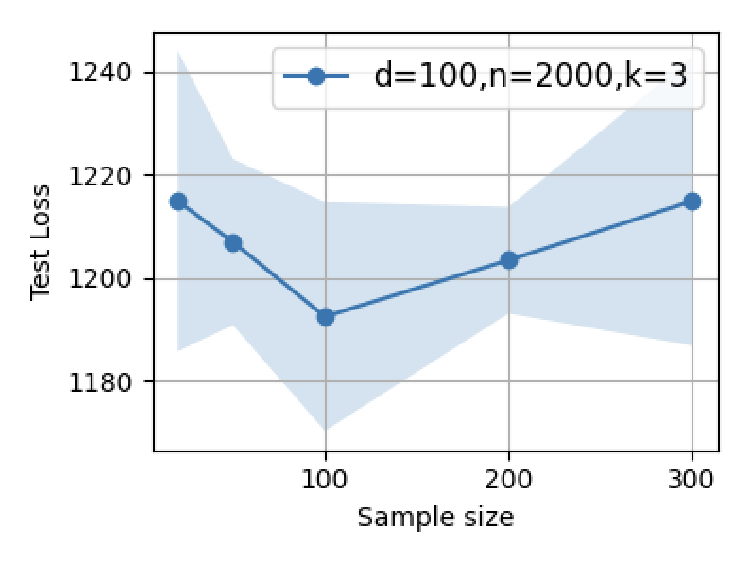}
        \caption{Supervised fine-tuning}\label{fig:LSA_B}
    \end{subfigure}%
    \begin{subfigure}[t]{0.3\textwidth}
        \includegraphics[width=\textwidth]{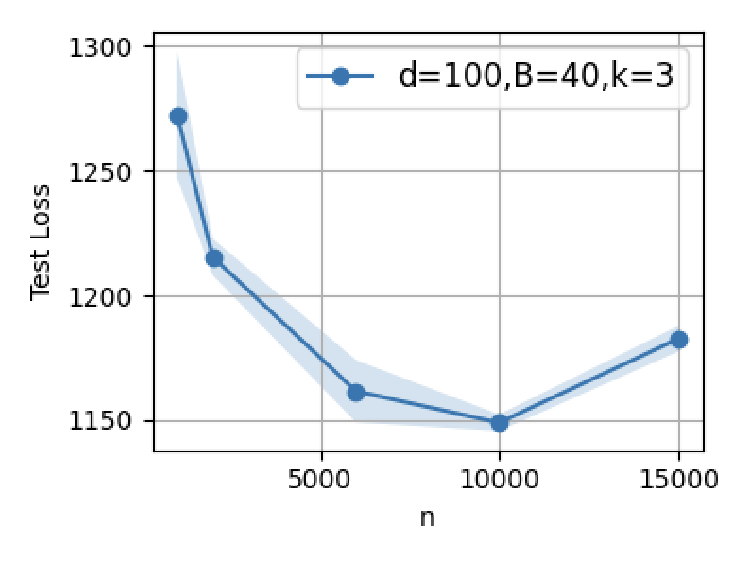}
        \caption{Supervised fine-tuning}\label{fig:LSA_n}
    \end{subfigure}%
    \begin{subfigure}[t]{0.3\textwidth}
        \includegraphics[width=\textwidth]{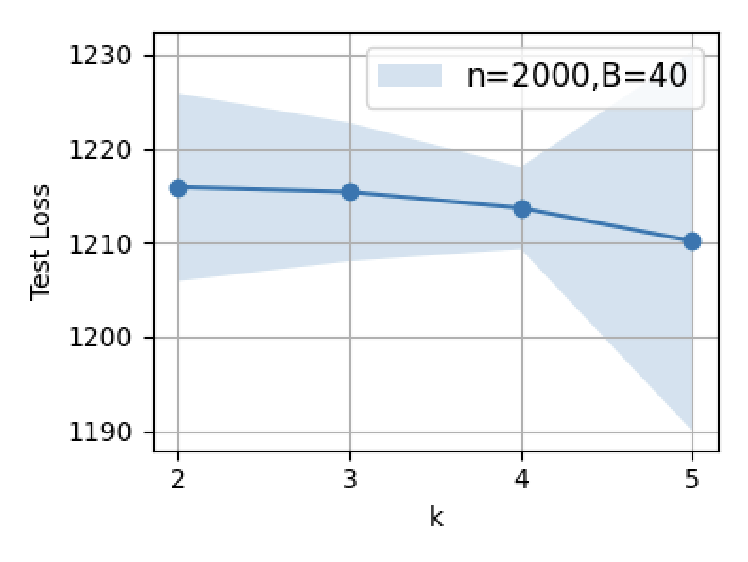}
        \caption{Supervised fine-tuning}\label{fig:LSA_k}
    \end{subfigure}\\
    \begin{subfigure}[t]{0.3\textwidth}
        \includegraphics[width=\textwidth]{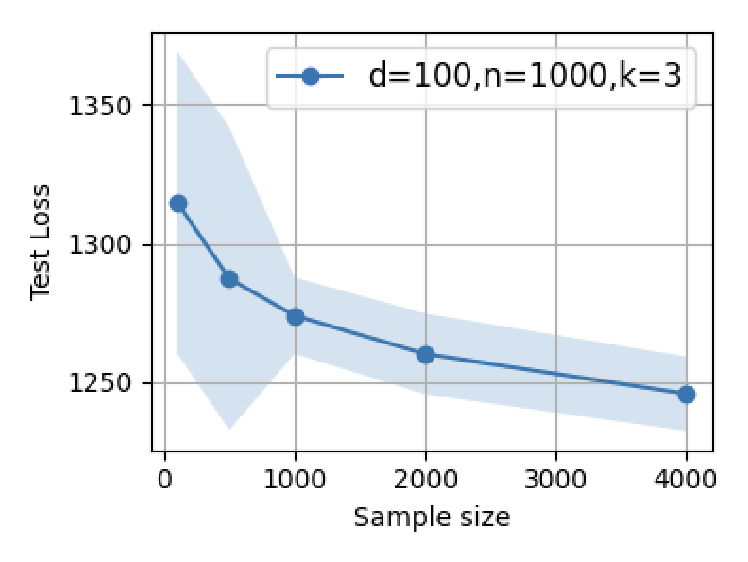}
        \caption{Outcome Supervision}\label{fig:LSA_RL_B}
    \end{subfigure}%
    \begin{subfigure}[t]{0.3\textwidth}
        \includegraphics[width=\textwidth]{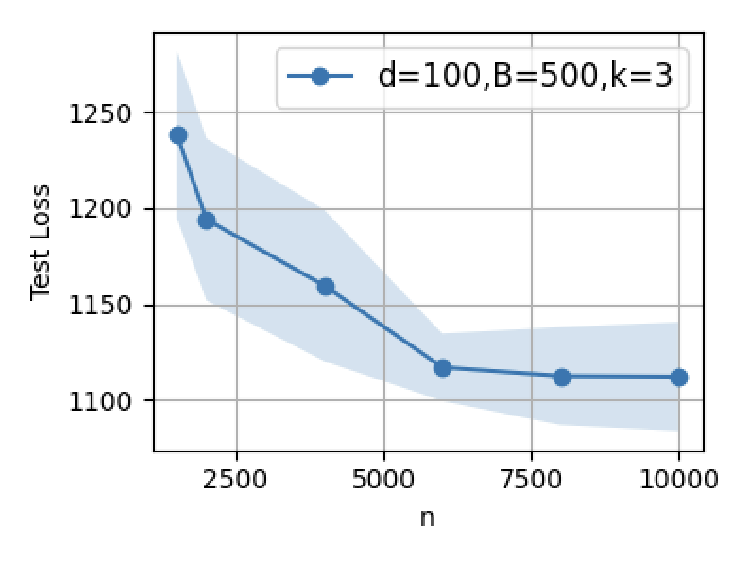}
        \caption{Outcome Supervision}\label{fig:LSA_RL_n}
    \end{subfigure}%
    \begin{subfigure}[t]{0.3\textwidth}
        \includegraphics[width=\textwidth]{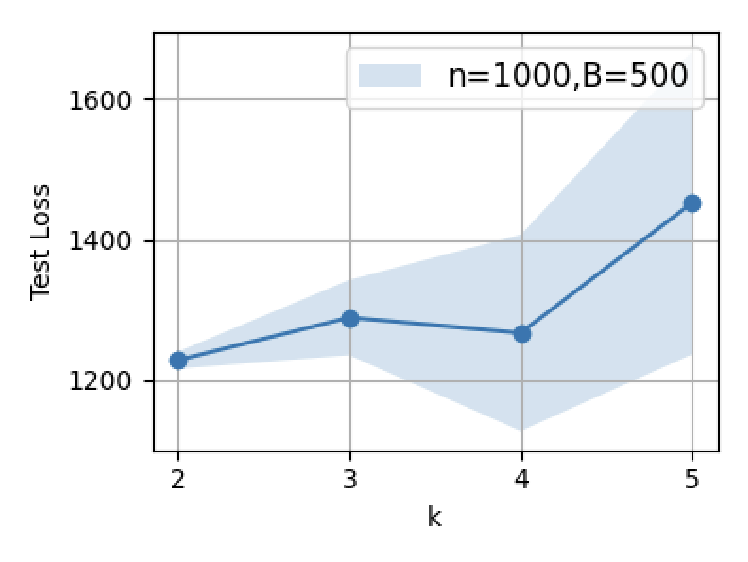}
        \caption{Outcome Supervision}\label{fig:LSA_RL_k}
    \end{subfigure}\\
    \caption{LSA experiments: Test loss for (a)-(c) post-training with SFT, and (d)-(f) post-training with Outcome Supervision (OS). For SFT, there is a turning point where larger sample size ($B$) and context-length ($n$) hurt the performance. In contrast, for OS larger $B, n$ improves the performance. 
    }
    \label{fig:LSA}
\end{figure*}




\section{Conclusion}
Our work provides a theoretical and empirical framework for jointly designing pretraining and post-training for LLMs. Balanced pretraining creates latent capabilities best activated by SFT on small numbers of carefully selected, hard examples aligned with the target shift. Scaling up SFT data introduces interference that erodes pretrained structure, favoring small, high-quality datasets. Outcome Supervision and RL have a sharply curved, unstable landscape that make them data-hungry, yet effective for refining partially learned pretrained capabilities. These insights guide optimal combined use: targeted SFT for efficient adaptation on challenging examples, complemented by large-scale RL (Outcome Supervision) for robust skill refinement.

\section*{Acknowledgments}
AJ was supported in part by the NSF Award DMS-2311024, an Amazon Faculty Research Award, an Adobe Faculty Research Award, and an iORB grant form USC Marshall School of Business. BM was supported in part by the NSF CAREER Award 2146492, NSF-Simons AI Institute for Cosmic Origins (CosmicAI) and NSF AI Institute for Foundations of Machine Learning (IFML).

\clearpage
\appendix
\section{Proof of theorems and technical lemmas}
\subsection{Proof of Theorem~\ref{thm:SFT_opt}}

As $\lambda \to 0^+$, the minimizer $(\tV_\lambda, \tW_\lambda)$ must converge to a point $(\tV_*, \tW_*)$ in the zero-loss manifold of $L(\tV,\tW)$ that is closest to the initialization $(-\Gamma_0^{-1}, I)$ in the Frobenius norm.

We first simplify the dynamic of LSA into a recurrent update on the estimated weight $\hat{w}_i$.  We have
We have
\begin{align*}
 f_{\rm LSA}(Z_i,\theta^*)_{[:,-1]}
&= \begin{bmatrix}
0_{d\times 1}\\0\\\hat{w}_i\\1
\end{bmatrix} +  V Z_i\cdot\frac{Z_i^\top W {Z_i}_{[:,-1]}}{n}\\
&= \begin{bmatrix}
0_{d\times 1}\\0\\\hat{w}_i\\1
\end{bmatrix}+ \frac{1}{n} V Z_i Z_i^\top \begin{bmatrix}
\tW \hat{w}_i\\-1\\0\\0
\end{bmatrix}\nn\\
&= \begin{bmatrix}
0_{d\times 1}\\0\\\hat{w}_i\\1
\end{bmatrix}+ \frac{1}{n} \begin{bmatrix}
0_{d\times n} & 0_{d\times 1} &0_{d \times 1} &0_{d\times 1}\\
0_{1\times n} & 0&0&0\\
\tV X & 0_{d\times 1} & 0_{d\times 1} & 0_{d\times 1}\\
0_{1\times n} & 0 & 0 & 0
\end{bmatrix}
\begin{bmatrix} X&  0& 0 &\dotsc&0 \\
y&0&0&\dotsc&0\\
0_{d\times n}&w_{0}& \hat{w}_{1} & \dotsc & \hat{w}_{i}\\
0_{1\times n}&1&1&\dotsc&1\end{bmatrix}^\sT\begin{bmatrix}
\tW w_0\\-1\\0\\0
\end{bmatrix}\nn\\
&= \begin{bmatrix}
0_{d\times 1}\\0\\\hat{w}_i\\1
\end{bmatrix}+ \frac{1}{n} 
\begin{bmatrix}
0_{d\times n} & 0_{d\times 1} &0_{d \times 1} &0_{d\times 1}\\
0_{1\times n} & 0&0&0\\
\tV X & 0_{d\times 1} & 0_{d\times 1} & 0_{d\times 1}\\
0_{1\times n} & 0 & 0 & 0
\end{bmatrix}\begin{bmatrix}
X^\sT \tW \hat{w}_i- y^\sT\\0
\end{bmatrix}\\
&= \begin{bmatrix}
0_{d\times 1}\\0\\\hat{w}_i\\1
\end{bmatrix}+ \frac{1}{n} 
\begin{bmatrix}
0_{d\times 1}\\
0 \\
\tV XX^\sT (\tW \hat{w}_i-w^*) \\
0
\end{bmatrix}\,.
\end{align*}
Hence, we obtain the following recursions for each of the prompt weight vectors:
\begin{align}\label{eq:prompt-update}
\hat{w}_{i+1,\tau} = \hat{w}_{i,\tau} + \tV S_\tau (\tW \hat{w}_{i,\tau} - w^*_\tau)\,. 
\end{align}
Now note that in the SFT loss, at each step we give the model the CoT ground-truth sequence $(w_{1,\tau},\dotsc, w_{i,\tau})$ and compute the error $\twonorm{w_{i+1,\tau} - \hat{w}_{i,\tau}}^2$.  
Let $\rho = 1-\eta$. Given $w_{i,\tau} = (1-\rho^i)w^*_\tau$, we define the residual $R_{i,\tau}$ for $i=0,\dotsc,k$ and $\tau = 1,\dotsc, B$ as follows:
\begin{align*}
R_{i,\tau} &= w_{i,\tau} + \tV S_\tau(\tW w_{i,\tau} - w^*_\tau) - w_{i+1,\tau} \\
&= (1-\rho^i)w^*_\tau + \tV S_\tau(\tW(1-\rho^i)w^*_\tau - w^*_\tau) - (1-\rho^{i+1})w^*_\tau \\
&= \tV S_\tau(\tW-I)w^*_\tau - \rho^i (\tV S_\tau \tW + \eta I)w^*_\tau
\end{align*}
We characterize this manifold by analyzing the residual $R_{i,\tau}$ for each block $\tau$ and iteration $i \in \{0, \dots, k\}$. The loss function can be written as
\[
\LSF(\tV,\tW) = \frac{1}{2B}\sum_{\tau=1}^B\sum_{i=0}^k \twonorm{R_{i,\tau}}^2\,.
\]

To characterize the zero-loss manifold, note that for $L(\tV,\tW) = 0$, we require $R_{i,\tau} = 0$ for all $i$. Since $1$ and $\rho^i$ are linearly independent for $i\neq 0$, the coefficients of the polynomial in $\rho^i$ must vanish independently:
\begin{enumerate}
    \item $\tV S_\tau(\tW-I)w^*_\tau = 0$
    \item $(\tV S_\tau \tW + \eta I)w^*_\tau = 0 \implies \tV S_\tau \tW w^*_\tau = -\eta w^*_\tau$
\end{enumerate}

Substituting the second condition into the first, we obtain:
\[\tV S_\tau \tW w^*_\tau - \tV S_\tau w^*_\tau = 0 \implies -\eta w^*_\tau - \tV S_\tau w^*_\tau = 0 \implies \tV S_\tau w^*_\tau = -\eta w^*_\tau\,, \]
for all $\tau=1,\dotsc, B$. Let $\Omega = [w^*_1, \dots, w^*_B]$ and $\Phi = [S_1 w^*_1, \dots, S_B w^*_B]$. The system is expressed as $\tV \Phi = -\eta \Omega$. The limit $\tV_*$ minimizes $\fronorm{\tV+\Gamma_0^{-1}}^2$ subject to $\tV \Phi = -\eta \Omega$, which is solved via the Moore-Penrose pseudoinverse:
\[ \tV_* = -\eta \Omega \Phi^\dagger -\Gamma_0^{-1} (I - \Phi \Phi^\dagger). \]
The term $(I - \Phi \Phi^\dagger)$ is the orthogonal projection onto the null space of $\Phi^\top$, ensuring $\tV$ follows the initialization $-\Gamma_0^{-1}$ in directions not spanned by the data.

Now that $\tV_*$ is characterized, we proceed with proving that $\tW_*=I$. Note that this choice of $\tV_*,\tW_*$ satisfies both of the gradient condition (1) and (2) above. In addition, due to the penalty $\lambda\fronorm{\tW-I}^2$, we get $\tW_* = I$ as the unique minimizer.
\subsection{Proof of Theorem~\ref{thm:GD}}
Let $\rho = 1-\eta$ and $c_k = \sum_{i=0}^k \rho^{2i}$. Given $\tW=I$ and $w_{i,\tau} = (1-\rho^i)w^*_\tau$, the residual is $R_{i,\tau} = -\rho^i(\tV S_\tau + \eta I)w^*_\tau$ and the loss can be written as
\[ \LSF(\tV, I) = \frac{c_k}{2B} \left\| \tV\Phi + \eta \Omega \right\|_F^2\,, \]
where we recall $\Phi = [S_1w^*_1, \dots, S_Bw^*_B]$ and $\Omega = [w^*_1, \dots, w^*_B]$. The gradient of the loss is given by
\[ \nabla_{\tV} \mathcal{L}_{SF} = \frac{c_k}{B} (\tV\Phi + \eta \Omega)\Phi^\top \]
Defining $\Delta_t = \tV_t - \tV_*$ and noting $\tV_*\Phi = -\eta\Omega$, the GD update $\tV_{t+1} = \tV_t - \gamma \nabla_{\tV} \mathcal{L}_{SF}(\tV_t,I)$ yields:
\[ \Delta_{t+1} = \Delta_t \left( I - \frac{\gamma c_k}{B} M \right), \quad M = \Phi\Phi^\top \]
The error norm evolves as $\|\Delta_{t+1}\|_F \le \|\Delta_t\|_F \cdot \opnorm{I - \frac{\gamma c_k}{B} M}$, with $\opnorm{\cdot}$ indicating the operator norm. 

Note that the condition $\gamma < \frac{2B}{c_k\lambda_{\max}(M)}$ ensures that $\opnorm{I - \frac{\gamma c_k}{B} M}< 1$ and so the GD updates converges to $\tV_*$. Specifically, the contraction factor is determined by the most extreme eigenvalues that the error $\Delta_t$ sees in the subspace spanned by the data $\Phi$. On the range of $\Phi$, the contraction factor is given by $$\alpha := \max \left( \left|1 - \frac{\gamma c_k}{B} \lambda_{\max}(M)\right|, \left|1 - \frac{\gamma c_k}{B} \lambda_{\min}^+(M)\right| \right)$$By choosing $\gamma = \frac{B}{c_k \lambda_{\max}(M)}$, the rate simplifies to $$\alpha = 1 - \frac{\lambda_{\min}^+(M)}{\lambda_{\max}(M)}$$Substituting $\Delta_0 = \tV_0 - \tV_* = -\Gamma_0^{-1} - \tV_*$, we obtain the desired bound:$$\|\tV_t - \tV_*\|_F \le \left( 1 - \frac{\lambda_{\min}^+(M)}{\lambda_{\max}(M)} \right)^t \|\Gamma_0^{-1} + \tV_*\|_F\,,$$
which completes the proof.

\subsection{Proof of Proposition~\ref{pro:SFT_Vinf}}

Recalling from~\eqref{eq:V*-W*}, $V_*$ satisfies the system $\tV \Phi = -\eta \Omega$. To find the explicit limit as $B \to \infty$, we analyze the normal equations:$$\tV \left( \frac{1}{B} \Phi \Phi^\top \right) = -\frac{\eta}{B} \Omega \Phi^\top$$
Recall $w^*_\tau \sim \mathcal{N}(0, I)$ and $S_\tau$ being the empirical covariance of $n$ samples from $\mathcal{N}(0, A)$. In addition, $w^*_\tau$ and $S_\tau$ are independent. 

We have
$$\mathbb{E} \left[ \frac{1}{B} \Omega \Phi^\top \right] = \mathbb{E} \left[ \frac{1}{B} \sum_{\tau=1}^B w^*_\tau (S_\tau w^*_\tau)^\top \right] = \mathbb{E} [w^* w^{*\top} S_\tau^\top]$$By independence and the fact that $\mathbb{E}[w^* w^{*\top}] = I$ and $\mathbb{E}[S_\tau] = A$, we get $$\mathbb{E} \left[ \frac{1}{B} \Omega \Phi^\top \right] = A$$

In addition, $$\mathbb{E} \left[ \frac{1}{B} \Phi \Phi^\top \right] = \mathbb{E} \left[ \frac{1}{B} \sum_{\tau=1}^B (S_\tau w^*_\tau)(S_\tau w^*_\tau)^\top \right] = \mathbb{E} [S_\tau w^* w^{*\top} S_\tau^\top] = \mathbb{E} [S_\tau^2]$$
Using the properties of the Wishart distribution for $S_\tau = \frac{1}{n} \sum_{i=1}^n x_i x_i^\top$ with $x_i \sim \mathcal{N}(0, A)$, (see Lemma A.2 in~\citep{javanmard2025understanding}) we have
$$\mathbb{E}[S_\tau^2] = \frac{n+1}{n} A^2 + \frac{1}{n} \text{tr}(A) A$$

First consider the case $A$ is invertible. By Slutsky's Theorem and the consistency of the sample covariance, as $B \to \infty$, the learned operator $\tV$ converges in probability to:$$\tV_\infty = -\eta A \left( \mathbb{E}[S_\tau^2] \right)^{-1}$$Substituting the explicit form of $\mathbb{E}[S_\tau^2]$:$$\tV_\infty = -\eta A \left( \frac{n+1}{n} A^2 + \frac{\text{tr}(A)}{n} A \right)^{-1} = -\eta \left( \frac{n+1}{n} A + \frac{\text{tr}(A)}{n} I \right)^{-1}$$

When $A$ is singular, the same derivation holds in the range of $A$. In the null space of $A$, $\tV_\infty$ stays at its initialization $-\Gamma_0^{-1}$. Both cases can be unified as follows:
$$\tV_\infty =  -\eta \left( \frac{n+1}{n} A + \frac{\tr(A)}{n} AA^\dagger \right)^\dagger -\Gamma_0^{-1} (I - AA^\dagger)\,,$$
which completes the proof.
\subsection{Proof of Proposition~\ref{propo:test-error}}
Specializing the recursion~\eqref{eq:prompt-update} to $i=0$ and $\tW= I$, we have $\hat{w} = w_0+ \frac{1}{n}\tV XX^\sT (w_0 - w^*)$. By choosing the initialization $w_0 = 0$ we arrive at
$\hat{w} = -\frac{1}{n}\tV XX^\sT w^*$.

Letting $\hSigma = \frac{1}{n} XX^\sT$, we have
\begin{align*}
\E[\twonorm{\hat{w}- w^*}^2] = \E[\fronorm{I+\tV\hSigma}^2]=\fronorm{I+\tV\Sigma}^2+ 
\frac{1}{n}
\left(\tr(\tV\Sigma^2\tV^\sT) + \tr(\tV\Sigma\tV^\sT) \tr(\Sigma)\right)
\end{align*}
where the last step follows from Lemma~\ref{lem:test-pop} below.

\begin{lemma}\label{lem:test-pop}
Let $X = [x_1|\dotsc|x_n]^\sT$ with $x_i\sim\normal(0,\Sigma)$ with $\Sigma\in\reals^{d\times d}$. Define $\hSigma:=\frac{1}{n} X^\sT X$. Then, for any matrix $A\in\reals^{d\times d}$, we have
\begin{align}
\E[\fronorm{I+A\hSigma}^2] = \fronorm{I+A\Sigma}^2+ 
\frac{1}{n}
\left(\tr( A\Sigma^2 A^\sT) + \tr(A\Sigma A^\sT) \tr(\Sigma)\right)
\end{align}
\end{lemma}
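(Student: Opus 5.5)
Expand the squared Frobenius norm and reduce everything to the second moment of the Wishart-type matrix $\hSigma$. Write
\[
\fronorm{I+A\hSigma}^2 = \tr(I) + 2\tr(A\hSigma) + \tr(A\hSigma\hSigma A^\sT).
\]
Since $\E[\hSigma]=\Sigma$, the expectation of the first two terms is $d + 2\tr(A\Sigma)$. These match the corresponding terms in the expansion of $\fronorm{I+A\Sigma}^2 = d + 2\tr(A\Sigma)+\tr(A\Sigma^2A^\sT)$. The claim therefore reduces to computing
\[
\E[\tr(A\hSigma^2 A^\sT)] = \tr\bigl(A\,\E[\hSigma^2]\,A^\sT\bigr),
\]
using linearity of trace and expectation.

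The key step is the identity
\[
\E[\hSigma^2] = \tfrac{n+1}{n}\Sigma^2 + \tfrac{1}{n}\tr(\Sigma)\Sigma.
\]
This is exactly the Wishart moment already used in the proof of Proposition~\ref{pro:SFT_Vinf} (Lemma A.2 in~\citep{javanmard2025understanding}), so it can be invoked directly. To keep things self-contained, it can also be checked quickly. Write $\hSigma^2 = \frac{1}{n^2}\sum_{i,j} x_ix_i^\sT x_jx_j^\sT$. The $n(n-1)$ off-diagonal terms $i\neq j$ each contribute $\Sigma^2$ by independence. For the $n$ diagonal terms, $\E[x x^\sT x x^\sT] = \E[\|x\|^2 xx^\sT]$, which by Isserlis' theorem (with $x=\Sigma^{1/2}g$, $g$ standard Gaussian) equals $2\Sigma^2 + \tr(\Sigma)\Sigma$. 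Summing gives $\frac{n-1}{n}\Sigma^2 + \frac{1}{n}(2\Sigma^2+\tr(\Sigma)\Sigma)$, which is the stated identity.

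Substituting back,
\[
\tr(A\E[\hSigma^2]A^\sT) = \tr(A\Sigma^2A^\sT) + \tfrac{1}{n}\bigl(\tr(A\Sigma^2A^\sT) + \tr(\Sigma)\tr(A\Sigma A^\sT)\bigr).
\]
Adding $d+2\tr(A\Sigma)$ reassembles $\fronorm{I+A\Sigma}^2$ plus the $\frac1n$ correction, which is the claim.

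The only nontrivial point is the fourth-moment computation $\E[\|x\|^2xx^\sT]$. It is handled by whitening to $g\sim\normal(0,I)$ and using $\E[g_ag_bg_cg_d]=\delta_{ab}\delta_{cd}+\delta_{ac}\delta_{bd}+\delta_{ad}\delta_{bc}$. One should also note that the row convention $X=[x_1|\dotsc|x_n]^\sT$ gives $\hSigma=\frac1n\sum_i x_ix_i^\sT$, consistent with the above.
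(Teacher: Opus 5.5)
Your proof is correct and takes essentially the paper's route. You expand $\fronorm{I+A\hSigma}^2$ and evaluate the quadratic term with the Wishart second-moment identity (Lemma A.2 of \citep{javanmard2025understanding}); the paper applies that identity in the form $\E[\hSigma (A^\sT A)\hSigma]$, whereas you cycle the trace so that you only need $\E[\hSigma^2]$, which you also verify directly via Isserlis. Your bookkeeping is cleaner than the paper's written proof, which has a sign slip ($-2\E[\tr(A\Sigma)]$ instead of $+2\tr(A\Sigma)$) and a stray factor $2$ on $\tr(A\Sigma A^\sT)\tr(\Sigma)$ in its final line; neither slip affects the correct statement you proved.
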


\begin{proof}(Proof of Lemma~\ref{lem:test-pop})
We write
\begin{align}\label{eq:Gamma-hSigma1}
\E[\fronorm{I+A\hSigma}^2] =
d + \E[\fronorm{A\hSigma}^2] - 2\E[\tr(A\Sigma)]
\end{align}
From~\citep{javanmard2025understanding}(Lemma A.2) we have
\[
\E[\hSigma(A^\sT A)\hSigma)] = \frac{n-1}{n} \Sigma (A^\sT A) \Sigma + \frac{1}{n} \left(2\Sigma(A^\sT A)\Sigma +\tr(\Sigma A^\sT A)\Sigma \right)\,.
\]
Hence, by taking the trace of both sides and changing the orde of expectation and trace (since it is a linear operator), we get
\[
\E[\fronorm{A\hSigma}^2] = \frac{n+1}{n} \tr(A\Sigma^2 A^\sT) + \frac{1}{n}
\tr(A\Sigma A^\sT)\tr(\Sigma)\,.
\]
Here we also used the identity $\tr(AB) = \tr(BA)$ for square matrices of the same size.

Substituting back in~\eqref{eq:Gamma-hSigma1} we obtain
\begin{align*}
\E[\fronorm{I+A\hSigma}^2] &=
d + \tr(A^\sT\Sigma^2 A) - 2\E[\tr(A\Sigma)] + \frac{1}{n}
\left(\tr(A\Sigma^2 A^\sT) + \tr(A\Sigma A^\sT) \tr(\Sigma)\right)\\
&= \fronorm{I+A\Sigma}^2+ 
\frac{1}{n}
\left(\tr(A\Sigma^2 A^\sT) + 2\tr(A \Sigma A^\sT) \tr(\Sigma)\right)
\end{align*}
which completes the proof of lemma.
\end{proof}

\subsection{Proof of Proposition~\ref{propo:RL}}
We begin by recalling the recursion~\eqref{eq:prompt-update}:
\begin{align*}
\hat{w}_{i+1,\tau} &= \hat{w}_{i,\tau} + \tV S_\tau (\tW \hat{w}_{i,\tau} - w^*_\tau)\\
&= (I+\tV S_\tau\tW)\hat{w}_{i,\tau} - \tV S_\tau w^*_\tau
\end{align*}
Solving this recursion, we obtain
\begin{align}
\hat{w}_{k,\tau}=  (I+\tV S_\tau \tW)^{k} \hat{w}_0 - \sum_{i=0}^{k-1}(I+\tV S_\tau \tW)^{i} \tV S_\tau w^*_\tau\,. 
\end{align}
Next, using that $\hat{w}_0 = w_0 = 0$, we get
\begin{align*}
\LRL(V,W) &= \frac{1}{2B}\sum_{\tau=1}^B \twonorm{\hat{w}_{\tau,k}-w^*_\tau}^2\\
&=\frac{1}{2B}\sum_{\tau=1}^B \twonorm{\left(I+\sum_{i=0}^{k-1} (\tV S_\tau\tW+I)^i\tV S_\tau  \right) w^*_\tau}^2\,,
\end{align*}
which completes the proof.
\section{Asymptotic Analysis of SFT post-training}\label{sec:sft-asym}
We recall our notations from Section~\ref{sec:SFT}. 
Let $S_\tau:= \frac{1}{n}\sum_{i=1}^n x_{i,\tau}x_{i,\tau}^\sT$ be the empirical features covariance for $\tau=1,\dotsc, B$. We also define the following matrices: 
\begin{align}
\Omega := [w^*_1, \dots, w^*_B] \in \mathbb{R}^{d \times B}\,,
\quad \Phi:= [S_1 w^*_1, \dots, S_B w^*_B] \in \mathbb{R}^{d \times B}\,,
\end{align}
Also recall that the SFT data are generated as $x_i\sim\normal(0,A)$ where $A = \eta(P_U \Sigma_0 P_U +\Delta) + r P_{U^{\perp}}$, with $U= \text{range}(\Delta)$. When $r=0$ this corresponds to the optimal data allocation discussed in Section~\ref{sec:optimal_data} and $r\neq 0$ models the interference between SFT data and the pretrained model.

We consider the following specific structure for the pretrained covariance $\Sigma_0$ and distribution shift covariance $\Delta$ similar to our experiments in Section~\ref{sec:SFT_scale}, namely 
\[
\Sigma_0 = \diag{\rho 1_m, 1_{d-m}},\quad\quad 
\Delta = \diag{1_{m}, 0_{d-m}}\,.
\]
During post-training, SFT data is generated from $\normal (0,A)$ with 
\begin{align}\label{eq:A}
A  = \diag{\eta(\rho+1)1_m,r 1_{d-m}},
\end{align}
and the post-test distribution is given by the covariance $\Sigma = \Sigma_0+\Delta$. Notably, Our asymptotic framework generalizes to arbitrary covariance structures $\Gamma_0, \Delta$, and $A$, provided the empirical spectral distributions  of these matrices converge weakly to probability measures on $\mathbb{R}_{\ge 0}$ with finite second moments. Under this Mean-Field regime, the macroscopic behavior of the learned operator $\tV_*$ is determined by the spectral densities of the data and shift matrices, rather than their specific coordinate-level realizations.   
\medskip

\noindent{\bf Decomposition of $\tV_*$:} Starting from $\tV_* = -\eta \Omega \Phi^\dagger - \Gamma_0^{-1}(I - \Pi_\Phi)$, with projection $\Pi_\Phi = \Phi \Phi^\dagger$.

Let $\Phi = M + \mathcal{E}$, where $M = A\Omega$ and $\mathcal{E}$ is the perturbation of $\Phi$ from its expectation $A\Omega$ with respect to randomness in the empirical features covariances $S_\tau$, for $\tau\in[B]$. Using the first-order expansion of the pseudoinverse:$$\Pi_\Phi \approx (M + \mathcal{E})(M^\dagger - M^\dagger \mathcal{E} M^\dagger + \dots)$$Multiplying this out and keeping only terms up to the first power of $\mathcal{E}$:$$\Pi_\Phi \approx \underbrace{M M^\dagger}_{\Pi_\Omega} + \underbrace{\mathcal{E} M^\dagger - M M^\dagger \mathcal{E} M^\dagger}_{\text{First-order correction}}$$We can simplify the correction term by factoring $M M^\dagger$:
\begin{align*}
\Pi_\Phi &\approx  M M^\dagger+ (I - M M^\dagger) \mathcal{E} M^\dagger\,,\\
(I - \Pi_\Phi) &\approx (I - M M^\dagger) - (I - M M^\dagger) \mathcal{E} M^\dagger\,.
\end{align*}

By substituting this expanded projection back into the definition of $\tV_*$, we get the following first-order approximation:
\begin{align}\label{eq:first-order}
\tV_* \approx -\eta \Omega (M^\dagger - M^\dagger \mathcal{E} M^\dagger) - \Gamma_0^{-1} \left[ (I - M M^\dagger) - (I - M M^\dagger) \mathcal{E} M^\dagger \right]
\end{align}
Now, group the terms into deterministic ($V_S$) and stochastic ($V_N$) components. The Zero order component is given by: $$V_S = -\eta \Omega M^\dagger - \Gamma_0^{-1}(I - M M^\dagger)$$
The first order component is given by: $$V_N = -V_S \mathcal{E} M^\dagger$$





Equation~\eqref{eq:first-order} can be written as 
\begin{align}\label{eq:tV-short}
\tV_*\approx \tV:= V_S+V_N.
\end{align}

We next characterize the limit of test error  using~\eqref{eq:test-error}. For convenience we rewrite the characterization for the expected test error  below:
\begin{align}\label{eq:term-I-term-II}
{\sf Err}(\tV) = \underbrace{\frac{1}{d}\E[\fronorm{I+\tV\Sigma}^2]}_\text{Term I}+ 
\underbrace{\frac{1}{nd}
\E\left[\tr(\tV\Sigma^2\tV^\sT) + \tr(\tV\Sigma\tV^\sT) \tr(\Sigma)\right]}_\text{Term II}
\end{align}
where expectation is with respect to both the training and the test data. We also normalized the test error by the dimension $d$.

\noindent{\bf Proportional regime.} We consider the proportional asymptotic regime, where $d, m, n, B\to \infty$, with $n$ the prompt length and $B$ the number of prompts. In addition, $B/d\to\beta$, $m/d\to \mu_1$, $d/n\to\gamma$ for some arbitrary but fixed constants $\beta,\mu_1,\gamma$. We also let $\mu_2 = 1-\mu_1$.

\noindent{\bf Notations.} The deterministic feature covariance $A$ is diagonal with block entries $a_1, a_2$. The test covariance $\Sigma$ is also diagonal with block diagonals $\Sigma_1, \Sigma_2$, namely
\begin{eqnarray}
\begin{split}\label{eq:notation1}
&a_1:= \eta(\rho+1), \quad &a_2:= r\\
&\Sigma_1:= \rho+1, \quad &\Sigma_2 := 1
\end{split}
\end{eqnarray}
Let $D_1 = I - \Gamma_0^{-1}\Sigma$ and $D_{pre} = \Gamma_0^{-1} - \eta A^{-1}$. Both are block-diagonal deterministic matrices and in the proportional asymptotic regime, we let $\alpha_k$ and $\tilde{\delta}_k$ be their respective scalar values on block $k \in \{1, 2\}$, and let $\delta_k = \tilde{\delta}_k \Sigma_k$.
A simple calculation shows that with $\kappa:= \gamma(\mu\rho+1-\mu)$, we have
\begin{align}
\begin{split}\label{eq:notation2}
 \alpha_1 &= \dfrac{\kappa-1}{\rho+\kappa}, \quad\quad\quad \alpha_2 = \frac{\kappa}{\kappa+1}\\
\delta_1 &= \dfrac{1-\kappa}{\rho+\kappa},  \quad\quad\quad \tilde{\delta}_1 = \frac{\delta_1}{\Sigma_1}=\frac{1-\kappa}{(\rho+\kappa)(\rho+1)}\\
\delta_2 &= \dfrac{1}{1+\kappa}-\frac{\eta}{r}, \quad\,\tilde{\delta}_2 = \frac{\delta_2}{\Sigma_2} = \delta_2
\end{split}
\end{align}
In addition, $-\Gamma_0^{-1}$ is also a block-diagonal deterministic matrix and in the asymptotic regime, we let $g_k$ be its respective scalar values on block $k\in\{1,2\}$. It is easy to see that
\begin{align}\label{eq:notation3}
g_1 = -\frac{1}{\rho+\kappa}, \quad g_2 = -\frac{1}{1+\kappa}
\end{align}
The matrix $\Sigma_A:= \frac{1}{d} \left( \tr(A) A + A^2 \right)$ is also block-diagonal and in the proportional regime, its respective scalar values on block $k\in\{1,2\}$ converge to 
\begin{eqnarray}
\begin{split}\label{eq:notation4}
s_1 &= (\mu_1a_1+\mu_2 a_2) a_1 = (\mu_1\eta(\rho+1) + \mu_2 r)\eta(\rho+1)\\
s_2 &= (\mu_1a_1+\mu_2 a_2) a_2 = (\mu_1\eta(\rho+1) + \mu_2 r)r 
\end{split}
\end{eqnarray}

\begin{theorem}\label{thm:main-SFT}
Consider $\tV =V_S+V_N$ the first order approximation of $\tV_*$ as in~\eqref{eq:tV-short}. Under the proportional asymptotic regime, the following holds true:
\begin{align}\label{thm:pred_err}
    \lim_{d \to \infty} {\sf Err}(\tV) 
    &= \text{Bias} + \gamma \mathcal{T}_{inv} \mathcal{T}_{var} + \gamma \bar{\Sigma} \mathcal{T}_{var, \Sigma} + \gamma^2 \bar{\Sigma} \mathcal{T}_{inv, \Sigma} \mathcal{T}_{var}
\end{align}
where the terms are defined as follows, in terms of the notations defined by~\eqref{eq:notation1}, \eqref{eq:notation2}, \eqref{eq:notation3} and \eqref{eq:notation4}:
\begin{itemize}
\item Bias: For $\beta<1$, let $q$ be the non-negative solution to:
\begin{equation}
     \beta = \sum_{k=1}^2 \mu_k \frac{a_k^2 q}{1 + a_k^2 q}
\end{equation}
and define $w_k = \frac{a_k^2 q}{1+a_k^2 q}$, $v_k = w_k(1-w_k)$, for $k\in\{1,2\}$ and 
\begin{equation}
    T_{12} = \frac{\mu_1 \mu_2 v_1 v_2}{\mu_1 v_1 + \mu_2 v_2}
\end{equation}
For $\beta\ge 1$, set $w_k=1$, $v_k=0$, and $T_{12}=0$. We then have
\begin{align}
\text{Bias}:=\sum_{k=1}^2 \mu_k \left[ \alpha_k^2(1-w_k) + (\alpha_k + \delta_k)^2 w_k \right] + T_{12} (\tilde{\delta}_2^2 - \tilde{\delta}_1^2)(\Sigma_1^2 - \Sigma_2^2)
\end{align}
\item The terms $\mathcal{T}_{inv}$ and $\mathcal{T}_{inv,\Sigma}$ are given by
\begin{align} 
    \mathcal{T}_{inv} &= \left\{q \frac{ \sum_{k=1}^2 \mu_k \frac{\Sigma_k^2}{a_k^2} w_k^2 }{ \sum_{k=1}^2 \mu_k \frac{1}{a_k^2} w_k^2 } \right\} \ind(\beta<1)
    +\left\{\frac{1}{\beta - 1} \sum_{k=1}^2 \mu_k \frac{\Sigma_k^2}{a_k^2}\right\} \ind(\beta>1)\\ 
    \mathcal{T}_{inv, \Sigma}  &=  \left\{q \frac{ \sum_{k=1}^2 \mu_k \frac{\Sigma_k}{a_k^2} w_k^2 }{ \sum_{k=1}^2 \mu_k \frac{1}{a_k^2} w_k^2 } \right\} \ind(\beta<1)
    +\left\{\frac{1}{\beta - 1} \sum_{k=1}^2 \mu_k \frac{\Sigma_k}{a_k^2}\right\} \ind(\beta>1)
\end{align}
    \item The terms $\mathcal{T}_{var,\mathcal{E}}$ and $\mathcal{T}_{var,\Sigma}$ are given by
    \begin{align}  &\mathcal{T}_{var} = \sum_{k=1}^2 \mu_k s_k \left[ g_k^2(1-w_k) + (g_k + \tilde{\delta}_k)^2 w_k \right] - T_{12} (\tilde{\delta}_1 - \tilde{\delta}_2)(\tilde{\delta}_1 s_1 - \tilde{\delta}_2 s_2)\\
    &\mathcal{T}_{var, \Sigma} = \sum_{k=1}^2 \mu_k \Sigma_k \left[ g_k^2(1-w_k) + (g_k + \tilde{\delta}_k)^2 w_k \right] - T_{12} (\tilde{\delta}_1 - \tilde{\delta}_2)(\tilde{\delta}_1 \Sigma_1 - \tilde{\delta}_2 \Sigma_2)
    \end{align} 
\end{itemize} 
\end{theorem}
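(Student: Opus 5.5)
The plan is to substitute $\tV=V_S+V_N$ into Term I and Term II of \eqref{eq:term-I-term-II} and expand. The structural point is that $\mathcal{E}$ has mean zero conditionally on $\Omega$, since $\E[S_\tau]=A$ and $S_\tau$ is independent of $w^*_\tau$. Hence every term linear in $V_N$ vanishes in expectation:
\begin{align*}
\E\fronorm{I+\tV\Sigma}^2=\E\fronorm{I+V_S\Sigma}^2+\E\fronorm{V_N\Sigma}^2 .
\end{align*}
The first piece will produce Bias. The second, together with Term II evaluated at $V_S$ (Term II is already $O(1/n)=O(\gamma/d)$), will produce the three $\gamma$-weighted terms. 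The contributions of $V_N$ to Term II are of higher order in $\gamma/d$ and will be dropped.

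\textbf{Step 1: the bias.} Since $M=A\Omega$, we have $MM^\dagger=A\Pi_\Omega A^{-1}$ restricted appropriately, where $\Pi_\Omega$ is the projection onto the range of $A\Omega$. I will write
\begin{align*}
I+V_S\Sigma=D_1+(\Gamma_0^{-1}-\eta\Omega M^\dagger\cdots)\Sigma
\end{align*}
and rearrange it as $D_1 + D_{pre}$-type terms times a projection $P$. Here $P$ is the orthogonal projection onto the column space of $A\Omega$, an oblique/whitened projection of a $d\times B$ Gaussian matrix.

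The key quantity is $\frac1d\E\tr(P\,\text{diag}(c_k)P\,\text{diag}(e_k))$ for block-constant diagonals. I will use the standard deterministic equivalent for the projection onto the range of $A\Omega$: its diagonal blocks equal $w_k=a_k^2q/(1+a_k^2q)$, where $q$ solves the trace constraint $\sum_k\mu_kw_k=\beta$ (the rank constraint $\tr P=B$). For $\beta\ge1$ we have $P=I$. Squared-projection cross terms between the two blocks generate the correction $T_{12}$; I would derive it from the second-order deterministic equivalent, via the derivative of the resolvent with respect to the block weights. Combining these gives $\sum_k\mu_k[\alpha_k^2(1-w_k)+(\alpha_k+\delta_k)^2w_k]$ plus the $T_{12}$ term.

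\textbf{Step 2: the fluctuation terms.} We have $V_N=-V_S\mathcal{E}M^\dagger$. Conditionally on $\Omega$, the columns of $\mathcal{E}$ are $(S_\tau-A)w^*_\tau$. By the Wishart covariance formula (Lemma A.2 of \citep{javanmard2025understanding}, as used in Lemma~\ref{lem:test-pop}), these columns are independent with covariance $\frac1n(\tr(A)A+A^2)=\frac dn\Sigma_A$ to leading order. Therefore
\begin{align*}
\E\fronorm{V_N\Sigma}^2\approx\gamma\,\E\bigl[\tr(V_S\Sigma_AV_S^\sT)\bigr]\cdot\tfrac1d\E\tr\bigl(M^{\dagger}\Sigma^2M^{\dagger\sT}\bigr)\cdot d/d .
\end{align*}
This factorizes, after a concentration argument, into $\gamma\mathcal{T}_{var}\mathcal{T}_{inv}$. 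The factor $\mathcal{T}_{var}$ is computed exactly as in Step 1, with weight $s_k$ and $g_k$ in place of $\alpha_k$. The factor $\mathcal{T}_{inv}$ is a weighted trace of the pseudoinverse Gram matrix, $\tr((\Omega^\sT A^2\Omega)^{-1}\Omega^\sT A^{-1}\Sigma^2A^{-1}\Omega\ldots)$.

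For $\beta>1$, $\mathcal{T}_{inv}$ follows from the inverse-Wishart mean $\E(\Omega\Omega^\sT)^{-1}$-type identity, which gives $\frac{1}{\beta-1}\sum_k\mu_k\Sigma_k^2/a_k^2$. For $\beta<1$, I will use the same resolvent $q$, giving the ratio form.

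Term II at $V_S$ gives $\gamma\bar\Sigma\mathcal{T}_{var,\Sigma}$, using $\tr(\Sigma)/d\to\bar\Sigma$ and the same projection computation with weights $\Sigma_k$. The cross piece involving $V_N$ in the $\tr(\Sigma)$ part gives $\gamma^2\bar\Sigma\mathcal{T}_{inv,\Sigma}\mathcal{T}_{var}$.

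\textbf{Main obstacle.} The hardest step is the $\beta<1$ deterministic equivalents with two different spectral blocks. This includes justifying the cross term $T_{12}$, and showing that the fluctuation term factorizes even though $V_S$ and $M^\dagger$ both depend on $\Omega$. I would first condition on $\Omega$, treat $\mathcal{E}$ as independent Gaussian noise, and then apply free-probability or leave-one-out concentration to the projection onto the range of $A\Omega$.
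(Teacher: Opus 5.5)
Your overall route is the paper's. The conditional centering of $\mathcal{E}$ kills the cross terms, and the bias comes from $V_S=-\Gamma_0^{-1}+D_{pre}\Pi_M$ together with the block traces $w_k$ and $T_{12}$. The noise term factors as $\gamma\,\mathcal{T}_{inv}\mathcal{T}_{var}$ after conditioning on $\Omega$.

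\textbf{Term II bookkeeping.} Your opening claim about Term II is wrong and contradicts your own last paragraph. Because $\tr(\Sigma)=\Theta(d)$, we have $\frac{1}{nd}\tr(\Sigma)\,\E\tr(\tV\Sigma\tV^\sT)\to\gamma\bar\Sigma\cdot\frac1d\E\tr(\tV\Sigma\tV^\sT)$, which is of order one. Only $\frac{1}{nd}\E\tr(\tV\Sigma^2\tV^\sT)$ is $O(1/n)$. Moreover, $\frac1d\E\tr(V_N\Sigma V_N^\sT)=\gamma\,\mathcal{T}_{inv,\Sigma}\mathcal{T}_{var}$ is itself $\Theta(\gamma)$; this is your Step 2 with $\Sigma^{1/2}$ in place of $\Sigma$. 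So the $V_N$ contribution to Term II is exactly the fourth term $\gamma^2\bar\Sigma\,\mathcal{T}_{inv,\Sigma}\mathcal{T}_{var}$. It is the quadratic term in $V_N$, not a cross term. It is higher order in $\gamma$, not in $1/d$, so it cannot be dropped.

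\textbf{Two smaller repairs.}
\begin{enumerate}
\item $MM^\dagger$ is already the orthogonal projection $\Pi_M$. The identity you need is $\Omega M^\dagger=A^{-1}\Pi_M$ (valid since $r\neq0$). The symmetry of $\Pi_M$ is used in the Frobenius expansion, so nothing oblique enters.
\item For $\beta<1$, the correct quantity is $\mathcal{T}_{inv}=\tr\bigl((\Omega^\sT A^2\Omega)^{-2}\,\Omega^\sT A\Sigma^2A\,\Omega\bigr)$, an unnormalized $O(1)$ trace. Your $A^{-1}\Sigma^2A^{-1}$ form belongs to the $\beta>1$ expression $\tr\bigl(\Sigma^2A^{-1}(\Omega\Omega^\sT)^{-1}A^{-1}\bigr)$. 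Your ``same $q$'' step becomes exact once you note two facts: $\tr\bigl((\Omega^\sT A^2\Omega)^{-1}\bigr)\to q$, and $\mathcal{T}_{inv}=-\partial_t\tr\bigl((\Omega^\sT A^2(I+t\Sigma^2)\Omega)^{-1}\bigr)\big|_{t=0}$. Differentiating the fixed-point equation under $a_k^2\mapsto a_k^2(1+t\Sigma_k^2)$ then yields the stated ratio. This is exactly the differentiation carried out in the paper.
\end{enumerate}

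\textbf{The $T_{12}$ correction in $\mathcal{T}_{var}$.} Computing $\mathcal{T}_{var}$ ``exactly as in Step 1'' will not reproduce the printed $T_{12}$ correction, and here the fault is the paper's. We have $\mathcal{T}_{var}=\frac1d\fronorm{(-\Gamma_0^{-1}+D_{pre}\Pi_M)\Sigma_A^{1/2}}^2$. Its quadratic part is $\frac1d\tr(D_{pre}^2\Pi_M\Sigma_A\Pi_M)=\frac1d\sum_{i,j}\tilde\delta_i^2s_j\tr(\Pi_{ij}\Pi_{ji})$. The Bias bookkeeping then gives $\sum_k\mu_ks_k[g_k^2(1-w_k)+(g_k+\tilde\delta_k)^2w_k]+T_{12}(\tilde\delta_2^2-\tilde\delta_1^2)(s_1-s_2)$.

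The paper instead expands $\frac1d\tr(D_{pre}\Pi_M\Sigma_AD_{pre}\Pi_M)$. That is the quadratic part of $\tr(V_S\Sigma_AV_S)$, not of $\tr(V_S\Sigma_AV_S^\sT)$. Since $V_S$ is not symmetric, the two differ by $T_{12}(\tilde\delta_1-\tilde\delta_2)(\tilde\delta_1s_2-\tilde\delta_2s_1)$. The same slip affects $\mathcal{T}_{var,\Sigma}$ (replace $s_k$ by $\Sigma_k$). The two versions agree only when $T_{12}=0$ (e.g.\ $\beta\ge1$) or $\tilde\delta_1=\tilde\delta_2$.

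\textbf{The factorization obstacle is mild.} Conditionally on $\Omega$, $\frac1d\E[\fronorm{V_N\Sigma}^2\mid\Omega]$ equals $\gamma\,\tr(M^\dagger\Sigma^2M^{\dagger\sT})\cdot\frac1d\tr(V_S\Sigma_AV_S^\sT)$ up to vanishing errors. That is a product of two $\Omega$-measurable traces, each of which concentrates. This uses $w_\tau^{*\sT}Aw^*_\tau\approx\tr(A)$ uniformly in $\tau$. The rank-one part $Aw^*_\tau w_\tau^{*\sT}A$ of the conditional covariance contributes only $O(1/n)$ after normalization, consistent with $s_k=(\mu_1a_1+\mu_2a_2)a_k$ having no $a_k^2$ term. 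This conditioning argument is cleaner than the paper's rotational-invariance step.
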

%
We next compare the predicted asymptotic limit of ${\sf Err}$ with numerical experiment. Recall $\tV_*$ as the SFT loss minimizer given by~\eqref{eq:V*-W*}, $\tV$ its first order approximation, given by~\eqref{eq:tV-short}. 
In Figure~\ref{fig:theory-match} we plot ${\sf Err}(\tV_*)$, ${\sf Err}(\tV)$ and our theoretical curve \eqref{thm:pred_err}. 
As we see there is a great match between our theoretical prediction and simulation result for $({\sf Err}(\tV))$. In addition, it approximates ${\sf Err}(\tV_*)$ reasonably well and the approximation becomes tighter as the prompt length ($n$) grows (Figure~\ref{fig:comp-5K} shows a better approximation at $n=5000$ compared to Figure~\ref{fig:comp-1K} for $n=1000$).

Using Theorem \ref{thm:pred_err} we prove several properties of the asymptotic error and show that under interference, its minimum is achieved in the regime of $\beta<1$. This confirms our Insight 2 in the main text, namely that SFT datasets should be curated to be relatively small in volume.

We denote the predicted theoretical error (right hand side of~\eqref{thm:pred_err}) by $F(\beta)$, as function of $\beta$, as we would like to understand its behavior as $\beta$ varies. 

\begin{propo}\label{pro:F-prop}
The followings hold true:
\begin{itemize}
\item[$(i)$] $\lim_{\beta\to 1}F(\beta) = \infty$. For $\beta>1$, $F(\beta)$ is strictly decreasing. As $\beta \to \infty$, it converges to a finite asymptotic floor:$$ F(\uparrow\!\infty):=\lim_{\beta \to \infty} F(\beta) = \sum_{k=1}^2 \mu_k (\alpha_k + \delta_k)^2 + \gamma \bar{\Sigma} \sum_{k=1}^2 \mu_k \Sigma_k (g_k + \tilde{\delta}_k)^2$$
\item[$(ii)$] We have
$$F(0) = \sum_{k=1}^2 \mu_k \alpha_k^2 + \gamma \bar{\Sigma} \sum_{k=1}^2 \mu_k \Sigma_k g_k^2$$
Also, $F(\uparrow\!\infty) - F(0)$ scales as $O(1/r^2)$. Consequently, for sufficiently small $r > 0$, $F(\uparrow\!\infty) > F(0)$. This guarantees that the global minimum of $F(\beta)$ is strictly achieved in the overparameterized regime ($\beta < 1$).
\item[$(iii)$] Suppose that $\mu_1\ge \frac{\rho^2}{1+\rho^2}$. For sufficiently small $r$ and $\gamma$, the initial derivative is strictly negative ($F'(0)<0$). Hence, introducing a small number of prompts immediately and strictly decreases the test error.
\end{itemize}
\end{propo}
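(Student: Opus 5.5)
The proof works directly with the closed form of $F(\beta)$ given by Theorem~\ref{thm:main-SFT}, treating the two regimes $\beta<1$ and $\beta>1$ separately. In the regime $\beta>1$ we have $w_k=1$, $v_k=0$ and $T_{12}=0$. Hence Bias, $\mathcal{T}_{var}$ and $\mathcal{T}_{var,\Sigma}$ do not depend on $\beta$. Only $\mathcal{T}_{inv}$ and $\mathcal{T}_{inv,\Sigma}$ carry $\beta$-dependence, and both are of the form $c/(\beta-1)$ with $c>0$, since $\Sigma_k,a_k>0$. Consequently
\[
F(\beta)=C_0+\frac{C_1}{\beta-1},\qquad C_1=\gamma\Big(\sum_k\mu_k\tfrac{\Sigma_k^2}{a_k^2}\Big)\mathcal{T}_{var}+\gamma^2\bar\Sigma\Big(\sum_k\mu_k\tfrac{\Sigma_k}{a_k^2}\Big)\mathcal{T}_{var}.
\]
We check that $\mathcal{T}_{var}>0$: at $w_k=1$ it equals $\sum_k\mu_ks_k(g_k+\tilde\delta_k)^2$, which is a sum of nonnegative terms, strictly positive unless every $g_k+\tilde\delta_k$ vanishes. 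Strict decrease for $\beta>1$ and the blow-up as $\beta\downarrow1$ then follow. Letting $\beta\to\infty$ sends the $1/(\beta-1)$ terms to zero, which leaves exactly Bias$|_{w=1}+\gamma\bar\Sigma\mathcal{T}_{var,\Sigma}|_{w=1}$, the claimed floor.

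For the limit $\beta\uparrow1$, the defining equation $\beta=\sum_k\mu_k a_k^2q/(1+a_k^2q)$ forces $q\to\infty$, so $w_k\to1$. The ratio in $\mathcal{T}_{inv}$ tends to the finite positive value $\sum\mu_k\Sigma_k^2a_k^{-2}/\sum\mu_ka_k^{-2}$, so $\mathcal{T}_{inv}\sim q\cdot\text{const}\to\infty$, while $\mathcal{T}_{var}$ converges to its positive value at $w=1$. This gives $\lim_{\beta\to1}F=\infty$ from both sides.

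For (ii), as $\beta\to0$ we get $q\to0$, so $w_k\to0$, $v_k\to0$, $T_{12}\to0$, and $\mathcal{T}_{inv},\mathcal{T}_{inv,\Sigma}=O(q)\to0$. What remains is $\sum_k\mu_k\alpha_k^2+\gamma\bar\Sigma\sum_k\mu_k\Sigma_kg_k^2$. Next we compute $F(\uparrow\!\infty)-F(0)$ blockwise as $\mu_k[(\alpha_k+\delta_k)^2-\alpha_k^2]+\gamma\bar\Sigma\mu_k\Sigma_k[(g_k+\tilde\delta_k)^2-g_k^2]$. Only $\delta_2=\tilde\delta_2=\frac1{1+\kappa}-\eta/r$ depends on $r$, and it is of order $1/r$. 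Its square therefore contributes $\mu_2(1+\gamma\bar\Sigma)\eta^2/r^2+O(1/r)$, which has positive leading coefficient; the block-1 contributions are $r$-independent. So the difference is positive for small $r$, giving $\min_{\beta>1}F>F(0)$. Combined with the blow-up at $\beta=1$, the global minimum must occur in $[0,1)$, and in fact in $(0,1)$ once (iii) holds.

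For (iii), we expand in $q$ near $\beta=0$. From the defining equation, $\beta=q\sum\mu_ka_k^2+O(q^2)$, so $dq/d\beta|_0=1/\sum\mu_ka_k^2$ and $w_k=a_k^2q+O(q^2)$. Also $v_k\approx w_k$, so $T_{12}=O(q)$ with coefficient $\mu_1\mu_2a_1^2a_2^2/(\mu_1a_1^2+\mu_2a_2^2)$. We differentiate each term at $q=0$. The Bias contributes $\sum\mu_ka_k^2[(\alpha_k+\delta_k)^2-\alpha_k^2]$ plus the $T_{12}$ term. The $\mathcal{T}_{inv}$ terms contribute $\gamma\,\frac{\sum\mu_k\Sigma_k^2a_k^2}{\sum\mu_ka_k^2}\mathcal{T}_{var}(0)$, and similarly for $\mathcal{T}_{inv,\Sigma}$, at order $\gamma$ and $\gamma^2$. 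The $\mathcal{T}_{var,\Sigma}$ derivative is also $O(\gamma)$. Sending $\gamma\to0$ (hence $\kappa\to0$, $\alpha_1\to-1/\rho$, $\delta_1\to1/\rho$, $\alpha_2\to0$) leaves the sign determined by the Bias derivative. Block 1 gives $\mu_1a_1^2[0-\rho^{-2}]<0$. Block 2 gives $\mu_2a_2^2\delta_2^2=\mu_2r^2(1-\eta/r)^2\to\mu_2\eta^2$ as $r\to0$, and the $T_{12}$ term is $O(r^2)$-suppressed. The resulting condition $-\mu_1a_1^2/\rho^2+\mu_2\eta^2<0$ has to reduce to $\mu_1\ge\rho^2/(1+\rho^2)$ after inserting $a_1=\eta(\rho+1)$; if it does not match exactly, we will retain the $T_{12}$ cross term to absorb the discrepancy.

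The main obstacle is this bookkeeping in (iii): tracking which terms survive the joint limit $r,\gamma\to0$, and reconciling the resulting constant with the stated threshold $\mu_1\ge\rho^2/(1+\rho^2)$.
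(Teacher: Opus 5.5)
Your parts (i) and (ii) follow the paper's argument: the form $C_0+C_1/(\beta-1)$ on $\beta>1$, $q\to\infty$ as $\beta\uparrow 1$, $q=0$ at $\beta=0$, and the leading $\mu_2(1+\gamma\bar{\Sigma})\eta^2/r^2$ term in the gap. You are slightly more careful than the paper in checking $C_1>0$; indeed $g_k+\tilde{\delta}_k=-\eta/a_k\neq 0$ on each block. Your plan for (iii) is also the paper's: expand in $q$ at $\beta=0$, remove the variance penalty by sending $\gamma\to 0$, and read the sign off $\mathrm{Bias}'(0)$ as $r\to 0^+$.

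The gap is your claim that the $T_{12}$ term is $O(r^2)$-suppressed; this is false. The $q$-derivative of $T_{12}$ at $0$ is $\mu_1\mu_2a_1^2a_2^2/c$ with $c=\mu_1a_1^2+\mu_2a_2^2$, and it does carry $a_2^2=r^2$. But it multiplies $(\tilde{\delta}_2^2-\tilde{\delta}_1^2)(\Sigma_1^2-\Sigma_2^2)$, and $\tilde{\delta}_2^2\sim\eta^2/r^2$. This is the same $r^2\cdot r^{-2}$ mechanism that made your block-2 term tend to $\mu_2\eta^2$. The cross term therefore tends to $\mu_2\eta^2\big((\rho+1)^2-1\big)$, a positive $O(1)$ contribution. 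In your normalization, the limit of the $q$-derivative of Bias at $q=0$ (as $r\to0^+$, $\gamma\to 0$) is
\[
-\frac{\mu_1a_1^2}{\rho^2}+\mu_2\eta^2+\mu_2\eta^2\big((\rho+1)^2-1\big)=a_1^2\Big(\mu_2-\frac{\mu_1}{\rho^2}\Big),
\]
using $a_1^2=\eta^2(\rho+1)^2$. Equivalently, in $\beta$, $\lim_{r\to0^+}\mathrm{Bias}'(0)=-\alpha_1^2+\mu_2/\mu_1$, as in the paper. This is negative iff $\mu_1>\rho^2/(1+\rho^2)$, which is where the stated threshold comes from. Without the cross term you get $\mu_1>\rho^2/(\rho^2+(1+\rho)^2)$, a strictly weaker condition. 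For $\mu_1$ between the two thresholds the true $F'(0)$ is positive for small $r,\gamma$. So dropping $T_{12}$ is not just loose but wrong, and keeping the cross term is an essential step, not an optional reconciliation.

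Two smaller points. First, your argument needs the hypothesis to be strict. Since $\kappa$ increases with $\gamma$, $d\alpha_1/d\kappa=(\rho+1)/(\rho+\kappa)^2>0$, and $\alpha_1<0$, the quantity $\alpha_1^2$ decreases from $1/\rho^2$ as $\gamma$ grows. At $\mu_1=\rho^2/(1+\rho^2)$ the limiting bias slope is then positive, of order $\gamma$, which is the same order as the variance penalty, so the argument concludes nothing. The paper's ``by continuity'' step has the same issue. Second, when you say the $\mathcal{T}_{inv}$, $\mathcal{T}_{inv,\Sigma}$ and $\mathcal{T}_{var,\Sigma}$ contributions are $O(\gamma)$, you need that bound uniformly in $r$. $\mathcal{T}_{var,\Sigma}'(0)$ contains products of $w_2'(0)$ or $T_{12}'(0)$ (each $O(r^2)$) with quantities of size $\eta^2/r^2$. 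These products are $O(1)$, not small, though they do stay bounded as $r\to0^+$, so the bound holds. It is the same bookkeeping trap.
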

\begin{figure}[!t]
    \centering
    \begin{subfigure}[t]{0.45\textwidth}
        \includegraphics[width=\textwidth]{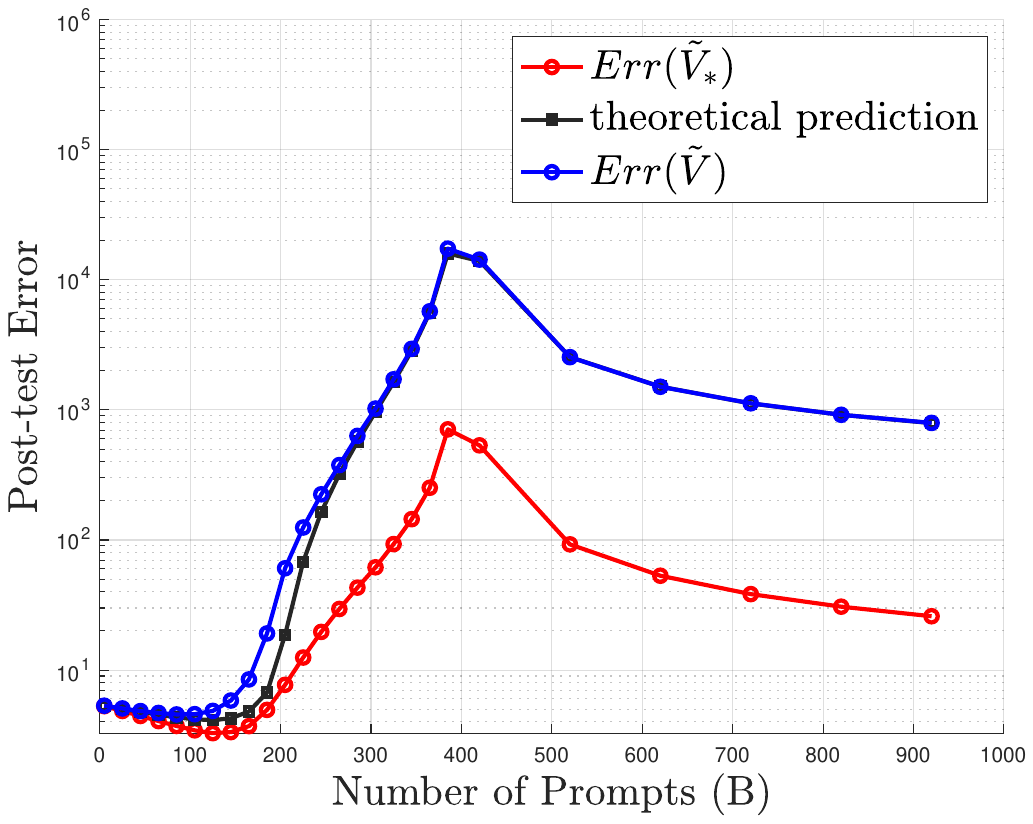}
        \caption{prompt length ($n=1000$)}\label{fig:comp-1K}
    \end{subfigure}
    \hspace{0.3cm}
    \begin{subfigure}[t]{0.45\textwidth}
        \includegraphics[width=\textwidth]{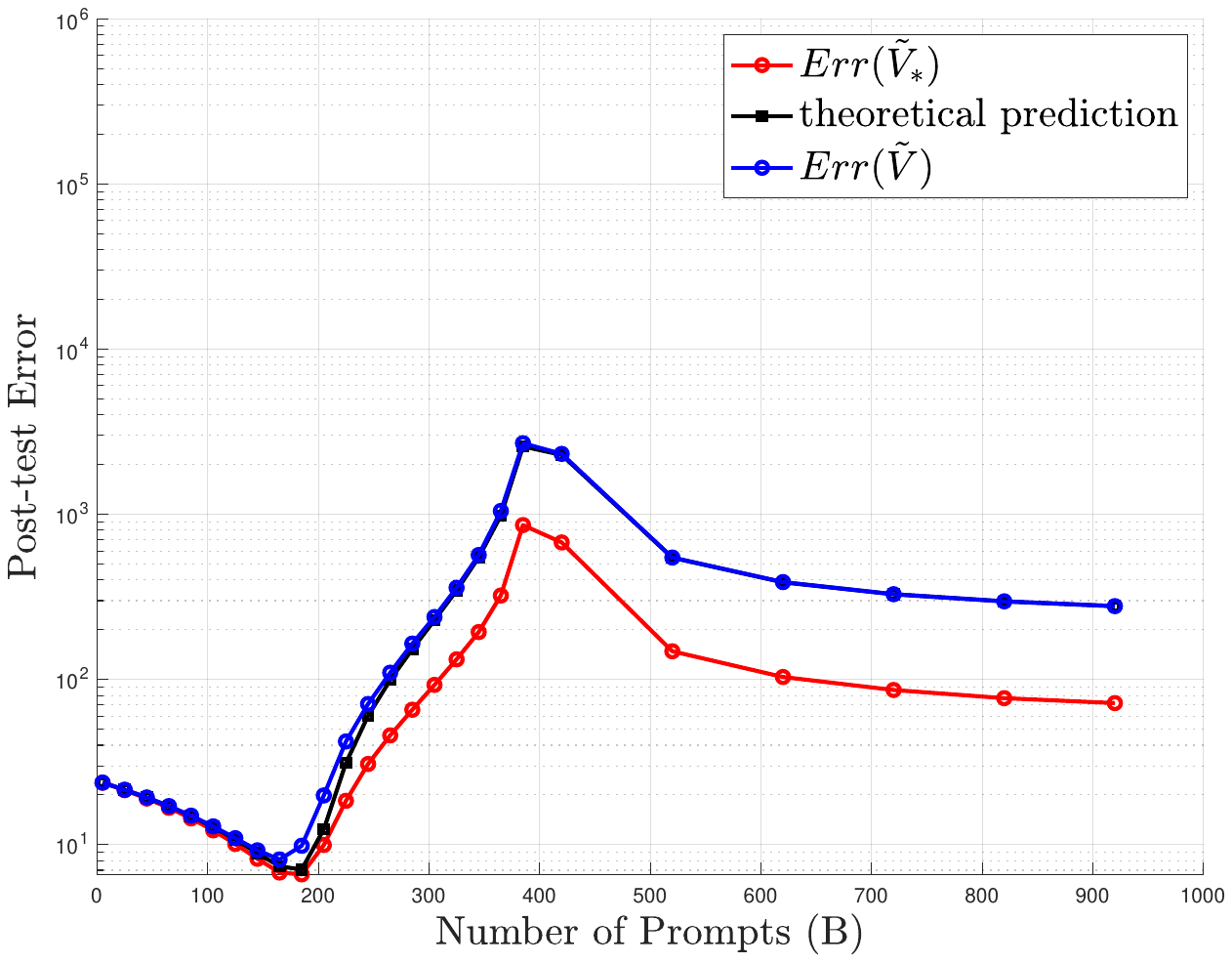}
        \caption{prompt length ($n=5000$)}\label{fig:comp-5K}
    \end{subfigure}%
    \caption{Comparison between theoretical prediction of the asymptotic error ${\sf Err}(\tV)$, the simulation results for ${\sf Err}(\tV)$ and ${\sf Err}(\tV_*)$. We see a great match between theoretical prediction and simulation results. Here, $d = 600$, $m=300$, $n= 600$ (prompt size), $\rho=0.1$, $\eta = 0.2$, $r = 0.1$ (interference parameter). The simulations are averaged over 10 realizations.}
    \label{fig:theory-match}
\end{figure}

\section{Proof of Theorem~\ref{thm:main-SFT}}
\subsection{Analysis of Term I}
We start by analyzing Term I. We have
\[
\E[\fronorm{I+\tV\Sigma}^2] = \E[\fronorm{I+V_S \Sigma + V_N\Sigma}^2]
= \E[\fronorm{I+V_S \Sigma}^2] + \E[\fronorm{V_N\Sigma}^2]
\]
because conditioned on $\Omega$, $\mathcal{E} = [(S_1-A)w^*_1,\dotsc, (S_B-A)w^*_B]$ is zero mean and independent of $V_S$. Hence,
\begin{align}\label{eq:IVF}
\lim_{d\to \infty} \frac{1}{d}\E[\fronorm{I+\tV\Sigma}^2] = \lim_{d\to \infty} \frac{1}{d}\E[\fronorm{I+V_S\Sigma}^2]
+\lim_{d\to \infty} \frac{1}{d}\E[\fronorm{V_N\Sigma}^2]\,.
\end{align}

\medskip

\noindent{\bf Analysis of the Bias term.} The deterministic component of the test error (Bias) is governed by the matrix $M_S = I + V_S \Sigma$. We first express $V_S$ in terms of the orthogonal projection matrix $\Pi_M = M M^\dagger$. Using the identity $A^{-1}\Pi_M = \Omega M^\dagger$, we have:
\begin{align}
    V_S &= -\eta A^{-1} \Pi_M - \Gamma_0^{-1}(I - \Pi_M) \\
    M_S &= (I - \Gamma_0^{-1}\Sigma) + (\Gamma_0^{-1} - \eta A^{-1})\Pi_M \Sigma
\end{align}
Let $D_1 = I - \Gamma_0^{-1}\Sigma$ and $D_{pre} = \Gamma_0^{-1} - \eta A^{-1}$. Thus, $M_S = D_1 + D_{pre} \Pi_M \Sigma$. Note that $D_1$ and $D_{pre}$ are block-diagonal deterministic matrices, and in the asymptotic regime, their respective scalar values on block $k \in \{1, 2\}$ converge to $\alpha_k$ and $\tilde{\delta}_k$ given by~\eqref{eq:notation2}. Let $\delta_k = \tilde{\delta}_k \Sigma_k$. 

Expanding the normalized squared Frobenius norm, we obtain:
\begin{equation}
    \frac{1}{d}\|M_S\|_F^2 = \frac{1}{d}\text{tr}(D_1^2) + \frac{2}{d}\text{tr}(D_1 \Sigma \Pi_M D_{pre}) + \frac{1}{d}\text{tr}(D_{pre}^2 \Pi_M \Sigma^2 \Pi_M)
\end{equation}
We next note that 
\begin{align}\label{eq:linear1}
\frac{1}{d}\text{tr}(D_1^2) = \sum_{k=1}^2 \frac{d_k}{d} \alpha_k^2 = \sum_{k=1}^2 \mu_k \alpha_k^2
\end{align}
In addition, we have
\begin{align}\label{eq:linear2}
\frac{2}{d}\text{tr}(D_1 \Sigma \Pi_M D_{pre}) = \frac{2}{d}\text{tr}(D_{pre} D_1 \Sigma \Pi_M)= \frac{2}{d} \sum_{k=1}^2 \alpha_k \delta_k \text{tr}(\Pi_{kk})
\end{align}

To evaluate the quadratic trace $\text{Quad}:=\frac{1}{d}\text{tr}(D_{pre}^2 \Pi_M \Sigma^2 \Pi_M)$, we partition the projection matrix into blocks $\Pi_{ij}$ with $i,j\in\{1,2\}$ with $\Pi_{11}$ of size $m$ and $\Pi_{2,2}$ of size $d-m$. Let $T_{ij} = \frac{1}{d}\text{tr}(\Pi_{ij}\Pi_{ji})$. Expanding the trace block-by-block yields:
\begin{equation}
    \text{Quad} = \delta_1^2 T_{11} + \delta_2^2 T_{22} + \left( \tilde{\delta}_2^2 \Sigma_1^2 + \tilde{\delta}_1^2 \Sigma_2^2 \right) T_{12}\,.
\end{equation}

Because $\Pi_M$ is a true orthogonal projection, $\Pi_M^2 = \Pi_M$. Examining the diagonal blocks of this identity gives 
\begin{align}\label{eq:proj-square}
\Pi_{kk}^2 + \Pi_{k j}\Pi_{j k} = \Pi_{kk},
\end{align}
with $k\neq j\in\{1,2\}$.

Our next lemma characterizes the limit of normalized trace of $\Pi_{kk}$, using Stieltjes transform and Silverstein equation from the Random Matrix Theory.
\begin{lemma}\label{lem:w_k}
Let $w_k:= \lim_{d_k\to\infty} \frac{1}{d_k}\text{tr}(\Pi_{kk})$. Then, the following holds: For $\beta<1$,
\begin{align}
w_k&= \frac{a_k^2 q}{1 + a_k^2 q}\,,
\end{align}
with $q$ being the non-negative solution to:
\begin{equation}
     \beta = \sum_{k=1}^2 \mu_k \frac{a_k^2 q}{1 + a_k^2 q}
\end{equation}
For $\beta\ge 1$, we have $w_k = 1$. 
\end{lemma}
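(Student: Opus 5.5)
We need the limiting normalized trace of the diagonal blocks of $\Pi_M = MM^\dagger$, where $M = A\Omega$, $\Omega\in\reals^{d\times B}$ has i.i.d.\ $\normal(0,1)$ entries, and $A=\diag{a_1 I_m, a_2 I_{d-m}}$. The plan is to write the projection as a regularized limit and then use deterministic equivalents for the resulting resolvent.

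\textbf{Reduction to a resolvent.} For $\beta\ge 1$, $M$ has rank $d$ almost surely whenever $a_1,a_2>0$. Hence $\Pi_M=I$, so $\Pi_{kk}=I$ and $w_k=1$. (The boundary $\beta=1$ follows by continuity or monotonicity.)

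For $\beta<1$, the range of $M$ is a random $B$-dimensional subspace. We write
\[
\Pi_M=\lim_{z\to0^+} MM^\sT(MM^\sT+zI)^{-1} = I-\lim_{z\to0^+} z\,(A\Omega\Omega^\sT A+zI)^{-1}.
\]
It then suffices to compute the deterministic equivalent of the resolvent $Q(z)=(\tfrac1d A\Omega\Omega^\sT A+zI)^{-1}$, after rescaling $z$ by $d$.

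\textbf{Deterministic equivalent.} By the standard Silverstein / Couillet--Debbah equivalent for sample-covariance-type matrices $A\Omega\Omega^\sT A=\sum_{\tau} (A\omega_\tau)(A\omega_\tau)^\sT$, with population covariance $A^2$, we get
\[
Q(z)\asymp \Big(\frac{\beta}{1+\delta(z)}A^2+zI\Big)^{-1},
\qquad
\delta(z)=\frac1d\tr\big(A^2 Q(z)\big).
\]
This is derived by a leave-one-out argument plus the Sherman--Morrison identity, together with concentration of quadratic forms $\omega_\tau^\sT A Q_{-\tau} A\omega_\tau$. Because $A$ is block-constant, $\delta$ solves the scalar fixed point
\[
\delta=\sum_k \mu_k \frac{a_k^2}{\beta a_k^2/(1+\delta)+z}.
\]
Then $1-\frac1{d_k}\tr(zQ)_{kk}\to \frac{\beta a_k^2/(1+\delta)}{\beta a_k^2/(1+\delta)+z}$.

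\textbf{Limit $z\to0^+$ and identification of $q$.} In the regime $\beta<1$, $\delta(z)\to\infty$ as $z\to0$, since the matrix is singular. We set $q:=\lim_{z\to0}\beta/\big(z(1+\delta(z))\big)$. Then $w_k=\frac{a_k^2q}{1+a_k^2q}$. To pin down $q$, multiply the fixed point by $z$ and use $z\delta/(1+\delta)\to z\cdot 1$ behaviour. Equivalently, take the trace identity $\frac1d\tr\Pi_M=B/d=\beta$, which holds exactly since $\operatorname{rank}(M)=B$. Together with $\frac1d\tr\Pi_M=\sum_k\mu_k\frac1{d_k}\tr\Pi_{kk}$, this gives $\beta=\sum_k\mu_k\frac{a_k^2q}{1+a_k^2q}$. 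Existence and uniqueness of a non-negative solution follow from strict monotonicity of the right-hand side in $q$: it runs from $0$ to $1>\beta$.

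\textbf{Main obstacle.} The delicate step is interchanging $d\to\infty$ with $z\to0^+$, i.e.\ showing that the projection limit equals the $z\to0$ limit of the resolvent equivalent. I would handle this by bounding the smallest nonzero eigenvalue of $\frac1d M^\sT M$ away from zero for $\beta<1$ (Bai--Yin type bound, using $a_k>0$). This gives $\|\Pi_M-(I-zQ(z))\|_{\rm op}\le z/\lambda_{\min}^+$ uniformly in $d$, so the limits commute.
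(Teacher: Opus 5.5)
Your proposal is correct and follows the same route as the paper. You write $\Pi_M$ as the $z\to0^+$ limit of $I-z(G+zI)^{-1}$ and replace the resolvent by its Bai--Silverstein deterministic equivalent; your coefficient $\beta/(1+\delta(z))$ plays the role of the paper's $v(z)$, so your $q$ is the paper's $\lim_{z\to0^+}v(z)/z$. You then fix $q$ through a trace identity: your exact relation $\tr\Pi_M=\operatorname{rank}(M)=B$ is the $z\to0^+$ content of the paper's companion identity $z\,m(z)=1-\beta+\beta z\,v(z)$.

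Your Bai--Yin bound $\|\Pi_M-(I-zQ(z))\|_{\rm op}\le z/\lambda_{\min}^{+}$ adds rigor the paper leaves implicit, since it justifies swapping $d\to\infty$ and $z\to0^+$. So does your explicit use of $a_1,a_2>0$: without it, e.g.\ for $r=0$, the conclusion $w_k=1$ for $\beta\ge1$ fails.
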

%

Taking the normalized trace from~\eqref{eq:proj-square} gives:
\begin{equation}\label{eq:TKK}
    T_{kk} = \mu_k w_k - T_{12}, \quad k \in \{1, 2\}
\end{equation}
 Substituting this into the quadratic term and simplifying:
\begin{equation}\label{eq:quad}
    \text{Quad} = \mu_1 \delta_1^2 w_1 + \mu_2 \delta_2^2 w_2 + T_{12}(\tilde{\delta}_2^2 - \tilde{\delta}_1^2)(\Sigma_1^2 - \Sigma_2^2)
\end{equation}
Also by recalling~\eqref{eq:linear2} we have
\begin{align}\label{eq:linear2-2}
\lim_{d\to\infty} \frac{2}{d}\text{tr}(D_1 \Sigma \Pi_M D_{pre})= \frac{2}{d}\text{tr}(D_1 \Sigma \Pi_M D_{pre}) = \lim_{d\to\infty} 2 \sum_{k=1}^2 \left(\frac{d_k}{d}\right) \alpha_k \delta_k \left( \frac{1}{d_k}\text{tr}(\Pi_{kk}) \right)  = 2 \sum_{k=1}^2 \mu_k \alpha_k \delta_k w_k
\end{align}

Combining the linear and quadratic traces given by~\eqref{eq:linear1}, \eqref{eq:linear2-2} and \eqref{eq:quad}, the complete rigorous bias evaluates to:
\begin{equation} \label{eq:rigorous_bias}
    \lim_{d \to \infty} \frac{1}{d}\|M_S\|_F^2 = \sum_{k=1}^2 \mu_k \left[ \alpha_k^2(1-w_k) + (\alpha_k + \delta_k)^2 w_k \right] + T_{12} (\tilde{\delta}_2^2 - \tilde{\delta}_1^2)(\Sigma_1^2 - \Sigma_2^2)\,.
\end{equation}
In the next lemma, we characterize $T_{12}$ which completes our analysis of the Bias term.
\begin{lemma}
\label{lem:cross_leakage}
Let $\Pi_M$ be the orthogonal projection matrix onto the column space of $M = A \Omega$, where $\Omega \in \mathbb{R}^{d \times B}$ has i.i.d. entries of variance $1/d$, and $A$ is a deterministic block-diagonal matrix with block dimensions $d_k = \mu_k d$ and corresponding squared eigenvalues $a_k^2$ for $k \in \{1, 2\}$. Let $\Pi_{ij}$ denote the sub-blocks of $\Pi_M$. As $d, B \to \infty$ with $B/d \to \beta$, the normalized cross-subspace leakage trace $T_{12} = \lim_{d \to \infty} \frac{1}{d} \text{\emph{tr}}(\Pi_{12} \Pi_{21})$ is almost surely given by:
\begin{equation}
    T_{12} = \frac{\mu_1 \mu_2 v_1 v_2}{\mu_1 v_1 + \mu_2 v_2}
\end{equation}
where $v_k = w_k(1-w_k)$ is the variance factor of the projection on block $k$, and $w_k = \frac{a_k^2 q}{1 + a_k^2 q}$ are the Stieltjes weights defined by the fixed-point root $q$.
\end{lemma}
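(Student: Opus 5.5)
The plan is to obtain $T_{12}$ from the first-order trace $w_k$ of Lemma~\ref{lem:w_k} by a perturbation (differentiation) argument, rather than from a two-resolvent deterministic equivalent. The key observation is that $\text{tr}(\Pi_{12}\Pi_{21})$ is exactly the derivative of $\text{tr}(\Pi_{11})$ when block $1$ of $A$ is rescaled. Let $E_1,E_2$ be the coordinate projectors onto the two blocks. Define $A_\varepsilon := A(I+\varepsilon E_1)$, which replaces $a_1$ by $a_1(1+\varepsilon)$ and keeps $A_\varepsilon$ block diagonal. Set $M_\varepsilon = A_\varepsilon\Omega$ and let $\Pi(\varepsilon)$ be the projection onto the range of $M_\varepsilon$. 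Since $M_\varepsilon = (I+\varepsilon E_1)M$, the range moves by a left multiplication. The standard formula for the derivative of the projection onto the range of $GM$ at $G=I$ then gives
\begin{align*}
\frac{d}{d\varepsilon}\Pi(\varepsilon)\Big|_{\varepsilon=0} = (I-\Pi)E_1\Pi + \Pi E_1 (I-\Pi).
\end{align*}
Taking the trace against $E_1$ and using $\Pi^2=\Pi$ yields
\begin{align*}
\frac{d}{d\varepsilon}\text{tr}(E_1\Pi(\varepsilon))\Big|_{0} = 2\left[\text{tr}(\Pi_{11})-\text{tr}(\Pi_{11}^2)\right] = 2\,\text{tr}(\Pi_{12}\Pi_{21}),
\end{align*}
where the last step is the block identity~\eqref{eq:proj-square}. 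This exact finite-$d$ identity is the first step.

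The second step passes to the limit. By Lemma~\ref{lem:w_k}, $\frac1d\text{tr}(E_1\Pi(\varepsilon)) \to \mu_1 w_1(\varepsilon)$. Here $w_1(\varepsilon) = a_1^2(1+\varepsilon)^2 q(\varepsilon)/(1+a_1^2(1+\varepsilon)^2q(\varepsilon))$, and $q(\varepsilon)$ solves $\beta=\sum_k\mu_k w_k(\varepsilon)$ with $a_1$ replaced by $a_1(1+\varepsilon)$. Assuming derivative and limit can be interchanged, $T_{12} = \frac{\mu_1}{2}w_1'(0)$.

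The computation is then implicit differentiation in logarithmic coordinates. Since $w_k = \sigma(\log(a_k^2 q))$ with $\sigma(u)=e^u/(1+e^u)$ and $\sigma'=\sigma(1-\sigma)$, we have $dw_k = v_k\, d\log(a_k^2q)$. Put $\ell = d\log q/d\varepsilon$. Then $d\log a_1^2/d\varepsilon = 2$ and $d\log a_2^2/d\varepsilon=0$. Differentiating the constraint gives $\mu_1 v_1(2+\ell)+\mu_2 v_2\ell=0$, so $\ell = -2\mu_1v_1/(\mu_1v_1+\mu_2v_2)$. Hence $w_1'(0)=v_1(2+\ell)=2\mu_2v_1v_2/(\mu_1v_1+\mu_2v_2)$, and $T_{12}=\mu_1\mu_2v_1v_2/(\mu_1v_1+\mu_2v_2)$ as claimed. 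For $\beta\ge1$ we have $w_k\equiv1$ and $v_k=0$, so the formula gives $0$. This is consistent with $\Pi=I$ almost surely, which forces $\Pi_{12}=0$. As a sanity check, the result is symmetric in the two blocks, as it must be since $\text{tr}(\Pi_{12}\Pi_{21})=\text{tr}(\Pi_{21}\Pi_{12})$.

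The main obstacle is justifying the interchange of $\lim_{d\to\infty}$ and $d/d\varepsilon$. I would handle it by analyticity. For $\beta<1$, $M_\varepsilon^\top M_\varepsilon = \Omega^\top A_\varepsilon^2\Omega$ is almost surely invertible with smallest eigenvalue bounded away from $0$, uniformly for $\varepsilon$ in a small complex disc (Bai--Yin type bounds, using $a_k>0$). Therefore $f_d(\varepsilon)=\frac1d\text{tr}(E_1 M_\varepsilon(M_\varepsilon^\top M_\varepsilon)^{-1}M_\varepsilon^\top)$ extends to a uniformly bounded analytic function on that disc. The functions $f_d$ converge pointwise on real $\varepsilon$ by Lemma~\ref{lem:w_k}, so by Vitali's theorem they converge locally uniformly together with all derivatives. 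Almost sure convergence follows by applying this along a countable dense set of $\varepsilon$. The remaining technical point is that $\Omega$ here has variance $1/d$ entries; this rescaling does not affect $\Pi$, so Lemma~\ref{lem:w_k} applies unchanged.
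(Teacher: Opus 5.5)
Your proposal is correct, and it reaches the formula by a genuinely different route from the paper.

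\textbf{The paper's route.} It works at the resolvent level throughout. It writes $\Pi_M=I-\lim_{z\to0^+}zR(z)$ and reduces $T_{12}$ to $\lim_{z\to0^+}z^2\lim_d\frac1d\operatorname{tr}(D_1R(z)D_2R(z))$. It then derives the two-resolvent deterministic equivalent $\frac{v^2}{\beta}\Psi_1\Psi_2/\Delta$ (Lemma~\ref{lem:resolvent_product}) by shifting the resolvent by $tD_2$ and differentiating the Silverstein fixed-point equation. Only then does it send $z\to0^+$ with $v(z)\sim qz$, using $\beta=\sum_k\mu_kw_k$ to turn $\Delta(0)$ into $(\mu_1v_1+\mu_2v_2)/\beta$.

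\textbf{Your route.} You perturb the model itself, $a_1\mapsto a_1(1+\varepsilon)$, which gives the exact finite-$d$ identity $\frac{d}{d\varepsilon}\operatorname{tr}(E_1\Pi(\varepsilon))|_{0}=2\operatorname{tr}(\Pi_{12}\Pi_{21})$. The second-order trace thus becomes the derivative of the first-order trace, so Lemma~\ref{lem:w_k} can be used as a black box. The implicit differentiation in $\log q$ is then a one-line computation.

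\textbf{What your route buys.}
\begin{itemize}
\item You need no second-order deterministic equivalent, no denominator $\Delta(z)$, and no $z\to0^+$ limit.
\item You give an explicit justification of the limit interchange (uniform invertibility of $\Omega^\top A_\varepsilon^2\Omega$ on a complex disc plus Vitali). The paper performs this interchange without comment, both for $\partial_t$ versus $\lim_d$ and for $z\to0^+$ versus $d\to\infty$.
\item It extends verbatim to $K$ blocks: rescaling block $k$ gives $\frac1d\operatorname{tr}(E_j\Pi E_k\Pi)\to-\frac{\mu_j}{2}\,\partial w_j/\partial\varepsilon_k$ for $j\neq k$.
\end{itemize}

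\textbf{What the paper's route buys.* } It yields a formula valid at every $z>0$, so it also covers ridge-regularized rather than exact projections. It is also a template for quadratic trace functionals that are not derivatives of $\operatorname{tr}(E_j\Pi)$ along a structure-preserving deformation of $A$.

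\textbf{One small correction.} For $\beta\ge1$ the expression $\frac{\mu_1\mu_2v_1v_2}{\mu_1v_1+\mu_2v_2}$ is literally $0/0$. You should state $T_{12}=0$ as a convention justified by $\Pi=I$, as the paper does, rather than say the formula gives $0$.
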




\medskip

\noindent{\bf Analysis of the noise term.} We recall the  dimension ratios  as $\mu_1 = m/d$ and $\gamma = d/n$. The noise operator acting on the test covariance is defined exactly as $V_N \Sigma = -V_S \mathcal{E} M^\dagger \Sigma$. We seek the limit of the normalized expected squared Frobenius norm: 
\begin{equation}\label{eq:VNSig-0} \frac{1}{d}\mathbb{E}_{\mathcal{E}} \left[ \|V_N \Sigma\|_F^2 \right] = \frac{1}{d}\mathbb{E}_{\mathcal{E}} \left[ \text{tr}\Big( V_S \mathcal{E} M^\dagger \Sigma^2 (M^\dagger)^T \mathcal{E}^T V_S^T \Big) \right] \end{equation} 
Let $Q = M^\dagger \Sigma^2 (M^\dagger)^T$. Because $M = \mathbb{E}[\Phi]$ is  deterministic, $Q$ is constant with respect to the noise realization $\mathcal{E}$. 

Let $\varepsilon_\tau = (S_\tau - A)w^*_\tau$ be the $\tau$-th column of $\mathcal{E}$. We first compute the expectation over the feature samples $x_{i, \tau}$ conditioned on the weight matrix $\Omega$. Since the feature samples are independent across different weights $\tau$, the columns of $\mathcal{E}$ are mutually independent with zero mean:$$\mathbb{E}[\varepsilon_\tau \varepsilon_\gamma^\top \mid \Omega] = 0 \quad \text{for } \tau \neq \gamma$$For the diagonal terms, we use the standard identity for the covariance of a Wishart quadratic form. For any fixed vector $u$ and $S \sim W_d(n, \frac{1}{n}A)$:$$\mathbb{E}[(S-A)u u^\top (S-A)] = \frac{1}{n} \left( (u^\top A u) A + A u u^\top A \right)$$Summing over the entries of $Q$:$$\mathbb{E}_{\mathcal{E} \mid \Omega}[\mathcal{E} Q \mathcal{E}^\top] = \sum_{\tau\neq \gamma} Q_{\tau \gamma} \mathbb{E}[\varepsilon_\tau \varepsilon_\gamma^\top \mid \Omega] = \frac{1}{n} \sum_{\tau=1}^B Q_{\tau \tau} \left( (w_\tau^{*\top} A w^*_\tau) A + A w^*_\tau w_\tau^{*\top} A \right)$$
    
    We now take the expectation over $\Omega$. By the rotational invariance of the Gaussian distribution, the expectation of any function of $\Omega$ that is equivariant under orthogonal transformations must be isotropic. In particular, the term $Z = \mathbb{E} \left[ \sum_\tau Q_{\tau \tau} w^*_\tau w_\tau^{*\top} \right]$ must satisfy $Z = c I_d$. Taking the trace:$$c d = \mathbb{E} \left[ \sum_{\tau=1}^B Q_{\tau \tau} \|w^*_\tau\|^2 \right] = d \mathbb{E}[\tr(Q)] \implies c = \mathbb{E}[\tr(Q)]$$
    
    In the high-dimensional limit, the correlation between the weight-norm quadratic form $(w_\tau^{*\top} A w^*_\tau)$ and the kernel diagonal $Q_{\tau \tau}$ vanishes, by which we obtain 
    \[
    \mathbb{E} \left[ \sum_{\tau=1}^B Q_{\tau \tau} (w_\tau^{*\top} A w^*_\tau) \right] = \mathbb{E}[\tr(Q)] \tr(A)
    \]
    Combining these, we obtain:
    \begin{align}\label{eq:EQE}
    \mathbb{E}[\mathcal{E} Q \mathcal{E}^\top] = \frac{\mathbb{E}[\tr(Q)]}{n} \left( \tr(A) A + A^2 \right)
    \end{align}
    We set $\Sigma_A:= \frac{1}{d} \left( \tr(A) A + A^2 \right)$.
Substituting the above identity into \eqref{eq:VNSig-0} we obtain: \begin{equation} \frac{1}{d}\mathbb{E}_{\mathcal{E}} \left[ \|V_N \Sigma\|_F^2 \right] = \frac{1}{d} \text{tr}\Big( V_S \left[ \gamma \text{tr}(Q) \Sigma_A \right] V_S^T \Big) \end{equation} Because $\text{tr}(Q)$ is a scalar, it factors entirely out of the matrix product. Using the property $(M^\dagger)^T M^\dagger = (M M^T)^\dagger$, the expectation rigorously splits into the product of two independent, normalized trace functionals: \begin{equation} \label{eq:VNSig}
\frac{1}{d} \mathbb{E}[\|V_N \Sigma\|_F^2] = \gamma \underbrace{\left[  \text{tr}(\Sigma^2 (M M^T)^\dagger) \right]}_{\mathcal{T}_{inv}} \cdot \underbrace{\left[ \frac{1}{d} \text{tr}(V_S \Sigma_A V_S^T) \right]}_{\mathcal{T}_{var}} \end{equation} 
\medskip

\noindent{\bf Derivation of the pseudo-inverse trace ($\mathcal{T}_{inv}$).} 
We must evaluate the target trace $\mathcal{T}_{inv} = \text{tr}(\Sigma^2 (M M^T)^\dagger)$, which governs the variance functional. Because the feature matrix $M = A \Omega$ is constructed with $\Omega \sim \mathcal{N}(0, 1)$ i.i.d. entries, the unscaled matrix $G = M M^T$ has eigenvalues scaling as $O(d)$. To rigorously apply the Stieltjes transform, we define the normalized matrix $\hat{G} = \frac{1}{d} G$, which has $O(1)$ eigenvalues. The target trace scales as:
\begin{equation}
    \mathcal{T}_{inv} = \text{tr}\big(\Sigma^2 (d \hat{G})^\dagger\big) = \frac{1}{d} \text{tr}\big(\Sigma^2 \hat{G}^\dagger\big)
\end{equation}

\noindent{\bf$\bullet$ Over-parameterized regime ($\beta < 1$).} Because $\hat{G}$ is strictly singular in the over-parameterized regime ($B < d$), direct inversion is invalid. To evaluate the trace rigorously for any aspect ratio $\beta$, we introduce a strictly positive regularization parameter $z > 0$ and define the perturbed resolvent:
\begin{equation}
    R(z, t) = (\hat{G} + t A \Sigma^2 A + z I_d)^{-1}
\end{equation}
Let $m(z, t) = \frac{1}{d} \text{tr}\big(R(z, t)\big)$ be its normalized trace. Because $z > 0$, $R(z, t)$ is unconditionally invertible and bounded for all $B$. Taking the derivative of $m(z, t)$ with respect to the continuous perturbation $t$ at $t=0$ yields:
\begin{equation}
    \frac{\partial}{\partial t} m(z, t) \Big|_{t=0} = - \frac{1}{d} \text{tr}\big( R(z, 0) (A \Sigma^2 A) R(z, 0) \big)
\end{equation}
Because $R(z, 0)$, $A$, and $\Sigma$ are all well-defined, finite $d \times d$ matrices, we can validly apply cyclic permutation to the trace. We move the rightmost $R(z, 0)$ to the left, and use the fact that the diagonal matrices $A$ and $\Sigma^2$ commute ($A \Sigma^2 A = \Sigma^2 A^2$):
\begin{equation}
    \frac{\partial}{\partial t} m(z, t) \Big|_{t=0} = - \frac{1}{d} \text{tr}\big( (A \Sigma^2 A) R(z, 0)^2 \big) = - \frac{1}{d} \text{tr}\big( \Sigma^2 A^2 R(z, 0)^2 \big)
\end{equation}
We now define our target Stieltjes derivative $q'(0)$ as the limit of this regularized derivative as $z \to 0^+$. By defining the operator limit $\lim_{z \to 0^+} R(z, 0)^2 \equiv (\hat{G}^\dagger)^2$ strictly on the non-null subspace, this identically maps to our target variance trace $\mathcal{T}_{inv}$ across all parameterization regimes:
\begin{equation}
    -q'(0) = \lim_{z \to 0^+} \frac{1}{d} \text{tr}\big( \Sigma^2 A^2 R(z, 0)^2 \big) \equiv \frac{1}{d} \text{tr}\big( \Sigma^2 A^2 (\hat{G}^\dagger)^2 \big) = \mathcal{T}_{inv}
\end{equation} 
To find $q'(0)$ analytically, we differentiate the fixed-point equation of the perturbed resolvent. The eigenvalues of the perturbed deterministic envelope are $a_k^2(1 + t \Sigma_k^2)$. Using the Silverstein equation, we have the following fixed-point equation: 
\begin{equation} 
    \beta = \sum_{k=1}^2 \mu_k \frac{ a_k^2 (1 + t \Sigma_k^2) q(t) }{ 1 + a_k^2 (1 + t \Sigma_k^2) q(t) } 
\end{equation} 
Differentiating both sides with respect to $t$ at $t=0$ (where $q(0) = q$) gives: 
\begin{equation} 
    0 = \sum_{k=1}^2 \mu_k \frac{ a_k^2 \Sigma_k^2 q + a_k^2 q'(0) }{ (1 + a_k^2 q)^2 } 
\end{equation} 
Separating the terms and recognizing that the effective block weights are $w_k = \frac{a_k^2 q}{1 + a_k^2 q}$, we observe the algebraic identity $\frac{a_k^2}{(1 + a_k^2 q)^2} = \frac{w_k^2}{a_k^2 q^2}$. Substituting this into the differential equation gives: 
\begin{equation} 
    -q'(0) \sum_{k=1}^2 \mu_k \frac{w_k^2}{a_k^2 q^2} = q \sum_{k=1}^2 \mu_k \Sigma_k^2 \frac{w_k^2}{a_k^2 q^2} 
\end{equation} 
Multiplying by $q^2$ and isolating $-q'(0)$, we obtain the exact closed-form limit: 
\begin{equation} \label{eq:T_inv} 
    \mathcal{T}_{inv} = q \frac{ \sum_{k=1}^2 \mu_k \frac{\Sigma_k^2}{a_k^2} w_k^2 }{ \sum_{k=1}^2 \mu_k \frac{1}{a_k^2} w_k^2 } 
\end{equation}
\medskip

\noindent{\bf $\bullet$ Under-parameterized regime ($\beta > 1$).}
The differential Stieltjes approach relies on the fixed-point root $q$ being finite, which holds strictly for the over-parameterized regime ($\beta < 1$). When $\beta > 1$, the number of samples exceeds the ambient dimension ($B > d$), causing the rank fraction to saturate at $\bar{\beta} = 1$, which mathematically drives $q \to \infty$. 

However, in this over-parameterized regime, the unscaled feature covariance matrix $G = M M^T$ becomes strictly full rank almost surely. Consequently, the normalized matrix $\hat{G} = \frac{1}{d} G$ is strictly invertible, and its pseudoinverse reduces to the standard inverse $\hat{G}^{-1}$. We skip the perturbation derivative and evaluate the trace directly using the deterministic equivalent for the inverse of a generalized sample covariance matrix. Note that $\hat{G}^{-1} = A^{-1}W^{-1}A^{-1}$ with $W = \frac{1}{d}\Omega\Omega^\sT$ a standard Wishart matrix of size $d\times B$ and so by the inverse moments of the Marchenko-Pastur law, its deterministic equivalent is given by $W\asymp \frac{1}{\beta-1}I_d$, which implies that
\begin{equation}
    \hat{G}^{-1} \asymp \frac{1}{\beta - 1} (A^2)^{-1}
\end{equation}
Substituting this deterministic equivalent directly into the target trace functional yields the exact closed-form limit for $\beta > 1$:
\begin{equation} \label{eq:T_inv_overparam}
    \mathcal{T}_{inv} = \frac{1}{d} \text{tr}\left( \Sigma^2 \left[ \frac{1}{\beta - 1} A^{-2} \right] \right) = \frac{1}{\beta - 1} \sum_{k=1}^2 \mu_k \frac{\Sigma_k^2}{a_k^2}
\end{equation}
Equations \eqref{eq:T_inv} and \eqref{eq:T_inv_overparam} both diverge at the interpolation 
threshold ($\beta = 1$). 

We combine both equation into one unifying relation:
\begin{equation}\label{eq:T_inv_combined} 
    \mathcal{T}_{inv} = \left\{q \frac{ \sum_{k=1}^2 \mu_k \frac{\Sigma_k^2}{a_k^2} w_k^2 }{ \sum_{k=1}^2 \mu_k \frac{1}{a_k^2} w_k^2 } \right\} \ind(\beta<1)
    +\left\{\frac{1}{\beta - 1} \sum_{k=1}^2 \mu_k \frac{\Sigma_k^2}{a_k^2}\right\} \ind(\beta>1)
\end{equation}
\medskip

\noindent{\bf Derivation of the trace term $(\mathcal{T}_{var})$.} We evaluate the  trace term $\mathcal{T}_{var} = \lim_{d \to \infty} \frac{1}{d} \text{tr}(V_S \Sigma_A V_S^T)$. Recall the deterministic test operator $V_S = -\Gamma_0^{-1} + D_{pre} \Pi_M$, where $D_{pre} = \Gamma_0^{-1} - \eta A^{-1}$. 
Note that $D_{pre}$ and $-\Gamma_0^{-1}$ are  both block-diagonal deterministic matrices. Also their respective scalar values on block $k\in\{1,2\}$ in the proportional asymptotic regime converges to $\tilde{\delta}_k$ and $g_k$ given by~\eqref{eq:notation2} and \eqref{eq:notation3}. 
Expanding the trace yields: \begin{equation} \mathcal{T}_{var} = \frac{1}{d} \text{tr}(\Gamma_0^{-2} \Sigma_A) + \frac{2}{d} \text{tr}(-\Gamma_0^{-1} \Sigma_A D_{pre} \Pi_M) + \frac{1}{d} \text{tr}(D_{pre} \Pi_M \Sigma_A D_{pre} \Pi_M) \end{equation} Let $T_{ij} = \frac{1}{d}\text{tr}(\Pi_{ij}\Pi_{ji})$. The linear traces evaluate strictly on the diagonal blocks. Similar to derivations~\eqref{eq:linear1} and \eqref{eq:linear2-2} we have
\[
\lim_{d\to\infty} \text{tr}(\Gamma_0^{-2} \Sigma_A) = \sum_{k=1}^2 \mu_k g_k^2 s_k
\]
where $s_1$ and $s_2$ are the limit of the scalar on the blocks of $\Sigma_A$ given by~\eqref{eq:notation4}.
In addition,
\[
\lim_{d\to\infty} \frac{2}{d} \text{tr}(-\Gamma_0^{-1} \Sigma_A D_{pre} \Pi_M) = 2\sum_{k=1}^2 \mu_k g_k \tilde{\delta}_k s_k w_k
\]

The quadratic trace $\text{Quad}:=\frac{1}{d}\text{tr}(D_{pre} \Pi_M \Sigma_A D_{pre} \Pi_M)$ expands over the $2 \times 2$ block partition as: \begin{equation} \text{Quad} = \tilde{\delta}_1^2 s_1 T_{11} + \tilde{\delta}_2^2 s_2 T_{22} + \tilde{\delta}_1 \tilde{\delta}_2 (s_1 + s_2) T_{12} \end{equation} Invoking~\eqref{eq:TKK}, we have $T_{kk} = \mu_k w_k - T_{12}$. Substituting these constraints into the quadratic expansion yields: \begin{equation} \text{Quad} = \sum_{k=1}^2 \mu_k \tilde{\delta}_k^2 s_k w_k - T_{12} \left[ \tilde{\delta}_1^2 s_1 + \tilde{\delta}_2^2 s_2 - \tilde{\delta}_1 \tilde{\delta}_2 s_1 - \tilde{\delta}_1 \tilde{\delta}_2 s_2 \right] \end{equation} The bracketed multiplier for $T_{12}$ factors analytically into $(\tilde{\delta}_1 - \tilde{\delta}_2)(\tilde{\delta}_1 s_1 - \tilde{\delta}_2 s_2)$. Recombining the linear and quadratic components completes the square for the diagonal elements, yielding: \begin{equation} \label{eq:T_var_colored} \mathcal{T}_{var} = \sum_{k=1}^2 \mu_k s_k \left[ g_k^2(1-w_k) + (g_k + \tilde{\delta}_k)^2 w_k \right] - T_{12} (\tilde{\delta}_1 - \tilde{\delta}_2)(\tilde{\delta}_1 s_1 - \tilde{\delta}_2 s_2) \end{equation} 

By recalling~\eqref{eq:VNSig}, the noise limit $\frac{1}{d}\mathbb{E}_{\mathcal{E}} \left[ \|V_N \Sigma\|_F^2 \right]$ is given by the product of equations \eqref{eq:T_inv} and \eqref{eq:T_var_colored}.

\subsection{Analysis of Term II}

Since $\text{tr}(\Sigma)$ scales as $O(d)$, the $\text{tr}(\tV\Sigma\tV^\sT) \tr(\Sigma)$ term dominates the $\tr(\tV\Sigma^2\tV^\sT)$ term in the high-dimensional limit. Letting $\bar{\Sigma} = \lim_{d \to \infty} \frac{1}{d}\text{tr}(\Sigma)$, the dominant component of Term II evaluates to:
\begin{equation}
    \text{Term II} = \gamma \bar{\Sigma} \cdot \frac{1}{d} \mathbb{E}\left[ \text{tr}(\tilde{V} \Sigma \tilde{V}^T) \right]
\end{equation}
Recall that $\tilde{V} = V_S + V_N$ with $V_N$ zero mean. Since the cross-terms are zero, we get the following decomposition:
\begin{equation}
    \text{Term II} = \underbrace{\gamma \bar{\Sigma} \frac{1}{d} \text{tr}(V_S \Sigma V_S^T)}_{\text{Term II Signal}} + \underbrace{\gamma \bar{\Sigma} \frac{1}{d} \mathbb{E}_{\mathcal{E}}\left[ \text{tr}(V_N \Sigma V_N^T) \right]}_{\text{Term II Noise}}
\end{equation}
\medskip

\noindent{\bf Derivation of Term II Signal.}
We evaluate $\mathcal{T}_{var, \Sigma} = \frac{1}{d} \text{tr}(V_S \Sigma V_S^T)$. The deterministic operator is $V_S = -\Gamma_0^{-1} + D_{pre} \Pi_M$, where  $D_{pre} = \Gamma_0^{-1} - \eta A^{-1}$ are block-diagonal. Expanding the trace yields:
\begin{equation}
    \mathcal{T}_{var, \Sigma} = \frac{1}{d} \text{tr}(\Gamma_0^{-2} \Sigma) + \frac{2}{d} \text{tr}(-\Gamma_0^{-1} \Sigma D_{pre} \Pi_M) + \frac{1}{d} \text{tr}(D_{pre} \Pi_M \Sigma D_{pre} \Pi_M)
\end{equation}
As we observe the expression for $\mathcal{T}_{var, \Sigma}$ is same as $\mathcal{T}_{var}$ with $\Sigma_A$ replaced by $\Sigma$. 
Hence, by a similar derivation of~\eqref{eq:T_var_colored} we get

\begin{equation} \label{eq:T_var_sigma}
    \mathcal{T}_{var, \Sigma} = \sum_{k=1}^2 \mu_k \Sigma_k \left[ g_k^2(1-w_k) + (g_k + \tilde{\delta}_k)^2 w_k \right] - T_{12} (\tilde{\delta}_1 - \tilde{\delta}_2)(\tilde{\delta}_1 \Sigma_1 - \tilde{\delta}_2 \Sigma_2)
\end{equation}
\medskip

\noindent{\bf Derivation of Term II Noise.} We must evaluate $\frac{1}{d} \mathbb{E}\left[ \text{tr}(V_N \Sigma V_N^T) \right]$. Following similar derivation of~\eqref{eq:VNSig}, replacing $\Sigma$ by $\Sigma^{1/2}$, we arrive at

\begin{equation}
    \frac{1}{d} \mathbb{E}[\|V_N \Sigma^{1/2}\|_F^2] = \gamma\underbrace{\left[  \text{tr}(\Sigma (M M^T)^\dagger) \right]}_{\mathcal{T}_{inv, \Sigma}} \cdot \underbrace{\left[ \frac{1}{d} \text{tr}(V_S \hat{\Sigma}_{\mathcal{E}} V_S^T) \right]}_{\mathcal{T}_{var}}
\end{equation}
Notice that we already characterized $\mathcal{T}_{var}$ in the analysis of Term I.

We next evaluate $\mathcal{T}_{inv, \Sigma} = \lim_{d \to \infty} \frac{1}{d} \text{tr}(\Sigma G^\dagger)$, where $G = MM^\dagger = A \Omega \Omega^T A$. 
Note that the expression for $\mathcal{T}_{inv,\Sigma}$ is same as $\mathcal{T}_{inv}$ where $\Sigma^2$ is replaced by $\Sigma$. Following the same derivation for~\eqref{eq:T_inv_combined}, we arrive at 
\begin{equation} \label{eq:T_inv_sigma}
    \mathcal{T}_{inv, \Sigma} = \left\{q \frac{ \sum_{k=1}^2 \mu_k \frac{\Sigma_k}{a_k^2} w_k^2 }{ \sum_{k=1}^2 \mu_k \frac{1}{a_k^2} w_k^2 } \right\} \ind(\beta<1)
    +\left\{\frac{1}{\beta - 1} \sum_{k=1}^2 \mu_k \frac{\Sigma_k}{a_k^2}\right\} \ind(\beta>1)
\end{equation}

Combining the above characterizations, the limit for the components of Term II are given by:
\begin{align}
    \text{Term II Signal} &= \gamma \bar{\Sigma} \cdot \mathcal{T}_{var, \Sigma} \\
    \text{Term II Noise} &= \gamma^2 \bar{\Sigma} \cdot \mathcal{T}_{inv, \Sigma} \cdot \mathcal{T}_{var}
\end{align}
where $\mathcal{T}_{var, \Sigma}$ is given by Eq.~\eqref{eq:T_var_sigma}, $\mathcal{T}_{inv, \Sigma}$ by Eq.~\eqref{eq:T_inv_sigma}, and $\mathcal{T}_{var}$ is given by~\eqref{eq:T_var_colored} from the Term I derivation.
Putting the characterizations derived for Term I and Term II in~\eqref{eq:term-I-term-II} completes the proof.
\subsubsection{Proof of Lemma~\ref{lem:w_k}}
To evaluate the asymptotic trace of $\Pi_M$, we express the orthogonal projection operator onto the column space of the empirical feature matrix $M$ as the limit of a Ridge-regularized inverse as the regularization parameter $z \to 0^+$:
\begin{equation}
    \Pi_M = \lim_{z \to 0^+} M(M^T M + z I_B)^{-1} M^T = I_d - \lim_{z \to 0^+} z (G + z I_d)^{-1}
\end{equation}
where $G = M M^T = A \Omega \Omega^T A$ is the generalized sample covariance matrix, and $R(z) = (G + z I_d)^{-1}$ is its resolvent.

By the Bai-Silverstein theorem, as $d, B \to \infty$ with $B/d \to \beta$, the random resolvent $R(z)$ is asymptotically equivalent to a deterministic diagonal matrix $T(z)$. For any bounded deterministic matrix $D$, the normalized trace converges almost surely:
\begin{equation}
    \lim_{d \to \infty} \frac{1}{d} \text{tr}(D R(z)) - \frac{1}{d} \text{tr}(D T(z)) \xrightarrow{a.s.} 0
\end{equation}
where $T(z)$ is given by 
$$T(z) = \left( z I_d + v(z) A^2 \right)^{-1},$$
and $v(z)$ is the Stieltjes transform of the companion matrix $\tilde{G} = \Omega^T A^2 \Omega$.


We define the effective rank fraction preserved in the $k$-th block as the normalized trace of the projection matrix restricted to that subspace:
\begin{equation}
    w_k = \lim_{d \to \infty} \frac{1}{d_k} \text{tr}(\Pi_{kk})
\end{equation}
Substituting the resolvent limit and its deterministic equivalent $T(z)$:
\begin{align}
    w_k &= 1 - \lim_{z \to 0^+} \frac{1}{d_k} \sum_{i \in \text{Block } k} z T_{ii}(z) \nonumber \\
    &= 1 - \lim_{z \to 0^+} \frac{z}{z + a_k^2 v(z)} \nonumber \\
    &= 1 - \lim_{z \to 0^+} \frac{1}{1 + a_k^2 \frac{v(z)}{z}}
\end{align}
We define the strict Stieltjes fixed-point root $q$ as the limit of this ratio near the origin:
\begin{equation}
    q = \lim_{z \to 0^+} \frac{v(z)}{z}
\end{equation}
Substituting $q$ into the limit yields the following relation for the block weights:
\begin{equation} \label{eq:w_k_formula}
    w_k = 1 - \frac{1}{1 + a_k^2 q} = \frac{a_k^2 q}{1 + a_k^2 q}
\end{equation}

To determine the fixed-point root $q$, we utilize the  trace identity between the resolvents of the $d \times d$ generalized sample covariance matrix $G$ and its $B \times B$ companion matrix $\tilde{G} = \Omega^T A^2 \Omega$. Because their non-zero eigenvalues are strictly identical, the normalized trace of the feature resolvent, $m(z) = \frac{1}{d} \tr(R(z)) = \frac{1}{d} \text{tr}[(G + z I_d)^{-1}]$, is given by:
\begin{equation} \label{eq:trace_identity}
    z m(z) = 1 - \beta + \beta z v(z)\,.
\end{equation}
By the Bai-Silverstein theorem, $m(z)$ is asymptotically equivalent to the trace of the deterministic matrix $T(z)$. Substituting this deterministic equivalent yields:
\begin{equation}
    m(z) = \sum_{k=1}^K \mu_k \frac{1}{z + a_k^2 v(z)}
\end{equation}
Multiplying by $z$ and equating this to the trace identity \eqref{eq:trace_identity} establishes the exact relation:
\begin{equation}
    1 - \beta + \beta z v(z) = \sum_{k=1}^K \mu_k \frac{z}{z + a_k^2 v(z)} = \sum_{k=1}^K \mu_k \frac{1}{1 + a_k^2 \frac{v(z)}{z}}
\end{equation}
We evaluate the strict limit of this equation as $z \to 0^+$. On the right side, we substitute our definition of the root $q = \lim_{z \to 0^+} \frac{v(z)}{z}$. On the left side, the limit of $z v(z)$ is governed by the dimension of the null space of the companion matrix $\tilde{G}$. The maximum rank of $\tilde{G}$ is bounded by $d$. If $B > d$ (i.e., $\beta > 1$), the companion matrix has exactly $B - d$ strict zero eigenvalues and the resolvent trace scales proportionally to $\frac{B-d}{B} \frac{1}{z}$. We therefore evaluate the limit exactly as:
\begin{equation}
    \lim_{z \to 0^+} z v(z) = \max\left(1 - \frac{1}{\beta}, 0\right)
\end{equation}
Substituting these limits into both sides of the trace identity yields:
\begin{equation}
    1 - \beta + \beta \max\left(1 - \frac{1}{\beta}, 0\right) = \sum_{k=1}^K \mu_k \frac{1}{1 + a_k^2 q}
\end{equation}
The left side mathematically simplifies exactly to $\max(1 - \beta, 0)$. On the right side, we substitute the definition of the block weights $w_k = \frac{a_k^2 q}{1 + a_k^2 q}$, utilizing the identity $\frac{1}{1 + a_k^2 q} = 1 - w_k$:
\begin{equation}
    \max(1 - \beta, 0) = \sum_{k=1}^K \mu_k (1 - w_k) = 1 - \sum_{k=1}^K \mu_k w_k
\end{equation}
Rearranging the terms immediately yields:
\begin{equation} \label{eq:q_fixed_point}
    \sum_{k=1}^K \mu_k w_k = 1 - \max(1 - \beta, 0) = \min(\beta, 1) = \bar{\beta}
\end{equation}
This derivation holds universally across all parameterization regimes. In the under-parameterized regime ($\beta > 1$), the effective rank fraction  saturates at $\bar{\beta} = 1$, which mathematically forces $w_k \to 1$ and $q \to \infty$. Equations \eqref{eq:w_k_formula} and \eqref{eq:q_fixed_point} completely and deterministically parameterize the finite-dimensional traces of the random projection $\Pi_M$.
\subsubsection{Proof of Lemma~\ref{lem:cross_leakage}}
We express the orthogonal projection matrix $\Pi_M$ as the limit of the regularized resolvent $R(z) = (A \Omega \Omega^T A + z I_d)^{-1}$ as $z \to 0^+$:
\begin{equation}
    \Pi_M = I_d - \lim_{z \to 0^+} z R(z)
\end{equation}
Let $D_1$ and $D_2$ be the orthogonal block indicator matrices for subspaces 1 and 2, such that $D_1 D_2 = 0$. Specifically, 
\begin{align}
D_1 = \begin{bmatrix} I_m& 0_{m\times (d-m)}\\ 0_{(d-m)\times m} & 0_{d-m}\end{bmatrix},
\quad
D_2 = \begin{bmatrix} 0_m& 0_{m\times (d-m)}\\ 0_{(d-m)\times m} & I_{d-m}\end{bmatrix}
\end{align}
The cross-trace can be written as $\tr(\Pi_{12}\Pi_{21}) = \tr(D_1 \Pi_M D_2 \Pi_M)$. Substituting the resolvent limit into the trace definition yields:
\begin{equation}
    T_{12} = \lim_{z \to 0^+} \lim_{d \to \infty} \frac{1}{d} \text{tr}\left( D_1 (I_d - z R(z)) D_2 (I_d - z R(z)) \right)
\end{equation}
Because $D_1 D_2 = 0$, expanding the product causes all terms of order lower than $R(z)^2$ to vanish exactly:
\begin{equation} \label{eq:T12_resolvent_limit}
    T_{12} = \lim_{z \to 0^+} z^2 \left[ \lim_{d \to \infty} \frac{1}{d} \text{tr}\left( D_1 R(z) D_2 R(z) \right) \right]
\end{equation}

In the next lemma, we characterize the inner limit.

\begin{lemma} \label{lem:resolvent_product}
Let $R(z) = (A \Omega \Omega^T A + z I_d)^{-1}$ be the resolvent of the generalized sample covariance matrix, and let $T(z) = (z I_d + v(z) A^2)^{-1}$ be its deterministic equivalent. Let $D_1$ and $D_2$ be $d \times d$ diagonal orthogonal block indicator matrices such that $D_1 D_2 = 0$. In the high-dimensional limit $d, B \to \infty$ with $B/d \to \beta$, the normalized trace of the product of the two resolvents converges almost surely to:
\begin{equation}\label{eq:resolvent-prod}
    \lim_{d \to \infty} \frac{1}{d} \text{tr}\left( D_1 R(z) D_2 R(z) \right) = \frac{v(z)^2}{\beta} \frac{ \Psi_1(z) \Psi_2(z) }{ \Delta(z) }
\end{equation}
where $\Psi_k$ and $\Delta_k$  are defined as:
\begin{align*}
    \Psi_k(z) &= \lim_{d \to \infty} \frac{1}{d} \text{tr}\left( D_k A^2 T(z)^2 \right)\,, \\
    \Delta(z) &= 1 - \frac{v(z)^2}{\beta }\lim_{d \to \infty} \frac{1}{d} \text{tr}\left( A^4 T(z)^2 \right)\,.
\end{align*}
\end{lemma}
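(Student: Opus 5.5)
The approach is the standard leave-one-out / second-order deterministic-equivalent computation for products of resolvents with deterministic matrices in between. I write $G = A\Omega\Omega^\sT A = \sum_{\tau=1}^B u_\tau u_\tau^\sT$ with $u_\tau = A\omega_\tau$, where $\omega_\tau$ is the $\tau$-th column of $\Omega$ (entries of variance $1/d$). The quantity of interest is $\frac1d\tr(D_1 R D_2 R)$, which is the derivative $-\partial_s \frac1d \tr(D_1 R_s)\big|_{s=0}$ of the perturbed resolvent $R_s(z)=(G+zI+sD_2)^{-1}$. So one route is to write the deterministic equivalent of $R_s$ and differentiate in $s$. I will instead do the direct expansion, which exposes the self-consistent structure producing the denominator $\Delta(z)$.

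\textbf{Step 1: first-order resolvent identity.} From $R(G+z) = I$ and Sherman--Morrison, $R u_\tau = R_{-\tau}u_\tau/(1+u_\tau^\sT R_{-\tau}u_\tau)$. Concentration of quadratic forms gives $u_\tau^\sT R_{-\tau}u_\tau \approx \frac1d\tr(A^2 R)$. This yields the standard equivalent $R\asymp T(z)=(zI+v(z)A^2)^{-1}$, with $v$ expressed through $\tilde v := 1/(1+\frac1d\tr(A^2T))$ up to the normalization convention by $\beta$. I take this from the Bai--Silverstein theorem already used in the proof of Lemma~\ref{lem:w_k}. The factor $1/\beta$ in the statement comes from converting sums over $B$ samples, normalized by $d$, into the companion transform $v$.

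\textbf{Step 2: second-order expansion.} I write $R - T = T\,(T^{-1}-R^{-1})\,R = T\,(v A^2 - G)\,R$. Hence
\[
\tfrac1d\tr(D_1 R D_2 R) = \tfrac1d\tr(D_1 T D_2 R) + \tfrac1d\tr\bigl(D_1 T(vA^2-G) R D_2 R\bigr).
\]
The first term converges to $\frac1d\tr(D_1TD_2T)=0$, since $T$ is diagonal and $D_1D_2=0$. In the second term I expand $G R=\sum_\tau u_\tau u_\tau^\sT R$ and apply Sherman--Morrison twice, to both occurrences of $R$. The quadratic-form concentrations then produce two kinds of contributions. The first is a term that cancels against $vA^2$ to leading order. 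The second is a term proportional to
\[
\frac{v^2}{\beta}\cdot\frac1d\tr(D_1 T A^2 T)\cdot \frac1d\tr(A^2 R D_2 R).
\]
Here $\frac1d\tr(D_1TA^2T)=\Psi_1$ because $T$ and $A$ are diagonal.

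\textbf{Step 3: closing the loop.} This produces a linear system in the unknowns $X_D := \frac1d\tr(D R D_2 R)$ for $D\in\{D_1,A^2\}$. Repeating Step 2 with $D_1$ replaced by $A^2$ gives
\[
X_{A^2} = \tfrac1d\tr(A^2TD_2T) + \tfrac{v^2}{\beta}\,\tfrac1d\tr(A^4T^2)\,X_{A^2}.
\]
The first term equals $\Psi_2$, so $X_{A^2} = \Psi_2/\Delta$. Substituting back gives $X_{D_1} = \frac{v^2}{\beta}\Psi_1\Psi_2/\Delta$, which is \eqref{eq:resolvent-prod}.

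The main obstacle is Step 2. One must justify rigorously that the error terms vanish almost surely: the fluctuations of $u_\tau^\sT R_{-\tau} D R_{-\tau} u_\tau$ and the rank-one corrections. One must also track exactly the constant in front, i.e.\ the $v^2/\beta$ with the correct normalization. I would handle the error terms with standard $O(d^{-1/2})$ quadratic-form concentration (Hanson--Wright) for fixed $z>0$, where all resolvents are bounded by $1/z$. I would fix the constant by checking the special case $D_1=D_2=I$ against the known derivative of the Marchenko--Pastur-type equation. The interchange with $z\to0^+$ is not needed here, since the lemma is stated for fixed $z$.
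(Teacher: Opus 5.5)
Your proposal is correct, but it takes a different route from the paper. The paper uses the alternative you mention and set aside. It perturbs to $R(z,t)=(A\Omega\Omega^\sT A+tD_2+zI_d)^{-1}$ and posits the equivalent $T(z,t)=(zI_d+v(z,t)A^2+tD_2)^{-1}$. It then differentiates $\frac1d\tr(D_1T(z,t))$ at $t=0$; the $D_2$ part drops since $D_1D_2=0$, leaving $-v'(0)\Psi_1$. Finally it gets $v'(0)=\frac{v^2}{\beta}\Psi_2/\Delta$ by differentiating the fixed-point relation for $v(z,t)$.

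In your argument the same denominator appears instead as the self-consistent loop $X_{A^2}=\Psi_2+\frac{v^2}{\beta}\frac1d\tr(A^4T^2)\,X_{A^2}$ of a second-order leave-one-out expansion.

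The paper's route is shorter, since it reuses only first-order equivalents. However, it takes two things for granted: the form of the $t$-perturbed equivalent, and the interchange of $\partial_t$ with $\lim_{d\to\infty}$ (justifiable via analyticity in $t$ and a Vitali/Montel argument). Your route needs only Hanson--Wright and Sherman--Morrison bounds at fixed $z>0$, so it is the more direct path to a rigorous proof. It also yields the general identity $X_D=\frac1d\tr(DTD_2T)+\frac{v^2}{\beta}\frac1d\tr(DTA^2T)\,X_{A^2}$ for any bounded deterministic $D$.

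Your route also fixes the normalization. The cancellation against $vA^2$ in your Step 2 forces $v=\beta/(1+\frac1d\tr(A^2T))$, which is the normalization under which the constant $v^2/\beta$ is right. The paper uses its relation $1/v=z+\frac{1}{\beta d}\tr(A^2T)$ only through its $t$-derivative, where the additive constant is invisible. For $T=(zI+vA^2)^{-1}$ to be the equivalent, that constant should be $1/\beta$ rather than $z$.

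A small correction: in Step 2 the term $\frac1d\tr(D_1TD_2R)$ vanishes identically, not just in the limit, since $D_1TD_2=TD_1D_2=0$. Only after replacing $D_1$ by $A^2$ is the analogous term nonzero, converging to $\Psi_2$.
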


Using Lemma~\ref{lem:resolvent_product}, we have
\begin{align}
    \Psi_k(z) &= \lim_{d \to \infty} \frac{1}{d} \text{tr}\left( D_k T(z) A^2 T(z) \right) = \frac{\mu_k a_k^2}{(z + a_k^2 v(z))^2} \\
    \Delta(z) &= 1 - \frac{v(z)^2}{\beta} \lim_{d \to \infty} \frac{1}{d} \text{tr}\left( A^4 T(z)^2 \right) = 1 - \frac{v(z)^2}{\beta} \sum_{k=1}^2 \frac{\mu_k a_k^4}{(z + a_k^2 v(z))^2}
\end{align}

We now evaluate the limit as $z \to 0^+$. Using the Stieltjes fixed-point definition $q = \lim_{z \to 0^+} \frac{v(z)}{z}$, we have $v(z) = qz + o(z)$. 

First, we evaluate the limit of the scaled block traces $z^2 \Psi_k(z)$:
\begin{equation}
    \lim_{z \to 0^+} z^2 \Psi_k(z) = \lim_{z \to 0^+} \frac{z^2 \mu_k a_k^2}{z^2 (1 + a_k^2 q)^2} = \frac{\mu_k a_k^2}{(1 + a_k^2 q)^2}
\end{equation}
Recall that $w_k = \frac{a_k^2 q}{1 + a_k^2 q}$, which implies the variance factor is $v_k = w_k(1-w_k) = \frac{a_k^2 q}{(1 + a_k^2 q)^2}$. Dividing by $q$, we map the block trace exactly to the variance factor:
\begin{equation} \label{eq:psi_limit}
    \lim_{z \to 0^+} z^2 \Psi_k(z) = \frac{\mu_k v_k}{q}
\end{equation}

Second, we use~\eqref{eq:resolvent-prod} to evaluate $T_{12}$ given by \eqref{eq:T12_resolvent_limit}. Distributing the $z^2$ multiplier from the projection limit alongside the $v(z)^2 / z^2 \to q^2$ convergence yields:
\begin{equation}
     \lim_{z \to 0^+} \frac{1}{\beta} \left( \frac{v(z)^2}{z^2} \right) \Big[ z^2 \Psi_1(z) \Big] \Big[ z^2 \Psi_2(z) \Big] = \frac{1}{\beta} (q^2) \left( \frac{\mu_1 v_1}{q} \right) \left( \frac{\mu_2 v_2}{q} \right) = \frac{\mu_1 \mu_2 v_1 v_2}{\beta}
\end{equation}

Third, we evaluate the  denominator $\Delta(0)$ as $z \to 0^+$:
\begin{equation}
    \Delta(0) = \lim_{z \to 0^+} \left[ 1 - \frac{1}{\beta} \left(\frac{v(z)^2}{z^2}\right) \sum_{k=1}^2 \frac{\mu_k a_k^4}{(1 + a_k^2 q)^2} \right] = 1 - \frac{1}{\beta} \sum_{k=1}^2 \mu_k \frac{a_k^4 q^2}{(1 + a_k^2 q)^2}
\end{equation}
Recognizing the squared weight $w_k^2 = \left( \frac{a_k^2 q}{1+a_k^2 q} \right)^2$, we get $\Delta(0) = 1 - \frac{1}{\beta} \sum_{k=1}^2 \mu_k w_k^2$. We apply the identity $\beta = \sum_{k=1}^2 \mu_k w_k$, given by~\eqref{eq:q_fixed_point}, to replace the leading $1$:
\begin{equation}
    \Delta(0) = \frac{\sum_{k=1}^2 \mu_k w_k - \sum_{k=1}^2 \mu_k w_k^2}{\beta} = \frac{1}{\beta} \sum_{k=1}^2 \mu_k w_k(1 - w_k) = \frac{\mu_1 v_1 + \mu_2 v_2}{\beta}
\end{equation}

Finally, taking the ratio of the evaluated numerator and denominator, we get 
\begin{equation}
    T_{12} = \frac{ \frac{\mu_1 \mu_2 v_1 v_2}{\beta} }{ \frac{\mu_1 v_1 + \mu_2 v_2}{\beta} } = \frac{\mu_1 \mu_2 v_1 v_2}{\mu_1 v_1 + \mu_2 v_2}
\end{equation}
which concludes the proof.
\subsubsection{Proof of Lemma~\ref{lem:resolvent_product}}
We evaluate the cross-trace  by introducing a continuous, deterministic perturbation $t$ to the resolvent. We define the perturbed resolvent matrix as $R(z, t) = (A \Omega \Omega^T A + t D_2 + z I_d)^{-1}$. Let $m_1(z, t) = \frac{1}{d} \text{tr}(D_1 R(z, t))$ be its normalized trace on the first subspace. 

Taking the derivative of the random trace $m_1(z, t)$ with respect to the perturbation $t$ at $t=0$ directly yields the target cross-trace. Using the matrix derivative identity $\frac{\partial}{\partial t} M^{-1} = -M^{-1} \frac{\partial M}{\partial t} M^{-1}$:
\begin{equation} \label{eq:m1_random_deriv}
    \frac{\partial}{\partial t} m_1(z, t) \Big|_{t=0} = - \frac{1}{d} \text{tr}\Big( D_1 R(z, 0) D_2 R(z, 0) \Big) = - \frac{1}{d} \text{tr}\Big( D_1 R(z) D_2 R(z) \Big)
\end{equation}

By the Bai-Silverstein theorem, $R(z, t)$ is asymptotically equivalent to the perturbed deterministic matrix $T(z, t)$. Because the perturbation $t D_2$ simply shifts the diagonal, the perturbed Stieltjes root $v(z, t)$ enforces the following exact structural form for the deterministic equivalent:
\begin{equation}
    T(z, t) = (z I_d + v(z, t) A^2 + t D_2)^{-1}
\end{equation}
Taking the derivative of the deterministic trace $\bar{m}_1(z, t) = \frac{1}{d} \text{tr}(D_1 T(z, t))$ at $t=0$ gives:
\begin{equation}
    \bar{m}_1'(0) = - \frac{1}{d} \text{tr}\Big( D_1 T(z) \big[ v'(0) A^2 + D_2 \big] T(z) \Big)
\end{equation}
Because $D_1$ and $D_2$ are strictly orthogonal ($D_1 D_2 = 0$) and $T(z)$ is diagonal, the terms commute and the $D_2$ cross-term becomes zero ($D_1 T(z) D_2 T(z) = 0$). Therefore, 
\begin{equation} \label{eq:m1_det_deriv}
    \bar{m}_1'(0) = - v'(0) \left[ \frac{1}{d} \text{tr}(D_1 A^2 T(z)^2) \right] = - v'(0) \Psi_1(z)
\end{equation}

To evaluate the scalar derivative $v'(0)$, we must construct the fixed-point equation for the perturbed root $v(z, t)$. Also by the Silverstein equation \citep{silverstein1995strong}, we have
\begin{equation}
    \frac{1}{v(z, t)} = z + \frac{1}{\beta d} \text{tr}\Big( A^2 T(z, t) \Big)
\end{equation}
We differentiate both sides of this fixed-point equation with respect to $t$ at $t=0$:
\begin{equation}
    -\frac{v'(0)}{v(z)^2} = \frac{1}{\beta d} \text{tr}\left( A^2 \frac{\partial}{\partial t} T(z, t) \Big|_{t=0} \right) = - \frac{1}{\beta d} \text{tr}\Big( A^2 T(z) \big[ v'(0) A^2 + D_2 \big] T(z) \Big)
\end{equation}
Distributing the trace operator linearly across the sum yields:
\begin{equation}
    -\frac{v'(0)}{v(z)^2} = - \frac{v'(0)}{\beta d} \text{tr}\big(A^4 T(z)^2\big) - \frac{1}{\beta d} \text{tr}\big(D_2 A^2 T(z)^2\big)
\end{equation}
Multiplying both sides by $-v(z)^2$ and substituting the definition $\Psi_2(z) = \frac{1}{d} \text{tr}(D_2 A^2 T(z)^2)$ gives:
\begin{equation}
    v'(0) = v'(0) \frac{v(z)^2}{\beta d} \text{tr}\big(A^4 T(z)^2\big) + \frac{v(z)^2}{\beta} \Psi_2(z)
\end{equation}
Grouping the $v'(0)$ terms on the left side exposes the exact macroscopic fluctuation denominator $\Delta(z)$:
\begin{equation}
    v'(0) \underbrace{\left[ 1 - \frac{v(z)^2}{\beta d} \text{tr}\big(A^4 T(z)^2\big) \right]}_{\Delta(z)} = \frac{v(z)^2}{\beta} \Psi_2(z) \implies v'(0) = \frac{v(z)^2}{\beta} \frac{\Psi_2(z)}{\Delta(z)}
\end{equation}

Because the asymptotic limit of the random trace derivative \eqref{eq:m1_random_deriv} equals the deterministic trace derivative \eqref{eq:m1_det_deriv}, we substitute the analytical solution for $v'(0)$ into the equivalence $-\lim \frac{1}{d} \text{tr}(D_1 R D_2 R) = -v'(0) \Psi_1(z)$. The negative signs cancel, yielding the exact closed-form limit:
\begin{equation}
    \lim_{d \to \infty} \frac{1}{d} \text{tr}\left( D_1 R(z) D_2 R(z) \right) = \frac{v(z)^2}{\beta} \frac{ \Psi_1(z) \Psi_2(z) }{ \Delta(z) }
\end{equation}
which concludes the proof.
\section{Proof of Proposition~\ref{pro:F-prop}}
\begin{itemize}
\item[$(i)$] As $\beta\to 1^+$, the terms $\mathcal{T}_{inv}$ and $\mathcal{T}_{inv,\Sigma}$ diverge clearly due to the term $1/(\beta-1)$ in~\eqref{eq:T_inv}, \eqref{eq:T_inv_sigma}. Also as $\beta\to 1^-$, then $q\to\infty$ and so $\mathcal{T}_{inv}$ and $\mathcal{T}_{inv,\Sigma}$ diverge. This shows that $\lim_{\beta\to 1}F(\beta)=\infty$. 

We next note that for $\beta \ge 1$, the model definitions dictate that $w_k = 1$, $v_k = 0$, and $T_{12} = 0$. We substitute these constants into the components of $F(\beta)$:
\begin{align*}
\text{Bias}(\beta) &= \sum_{k=1}^2 \mu_k (\alpha_k + \delta_k)^2\\
\mathcal{T}_{var}(\beta) &= \sum_{k=1}^2 \mu_k s_k (g_k + \tilde{\delta}_k)^2\\
\mathcal{T}_{var, \Sigma}(\beta) &= \sum_{k=1}^2 \mu_k \Sigma_k (g_k + \tilde{\delta}_k)^2\\
\mathcal{T}_{inv}(\beta) &= \frac{1}{\beta - 1} \sum_{k=1}^2 \mu_k \frac{\Sigma_k^2}{a_k^2}\\
\mathcal{T}_{inv, \Sigma}(\beta) &= \frac{1}{\beta - 1} \sum_{k=1}^2 \mu_k \frac{\Sigma_k}{a_k^2}
\end{align*}
By substituting these components into the objective function $F(\beta)$, we can write it in the form:
$$F(\beta) = C_1 + \frac{C_2}{\beta - 1}$$where $C_1$ and $C_2$ are finite, strictly positive constants independent of $\beta$.
The derivative is $F'(\beta) = -\frac{C_2}{(\beta - 1)^2}$. Since $C_2 > 0$, we have $F'(\beta) < 0$, proving $F(\beta)$ is strictly decreasing for $\beta > 1$.
As $\beta \to \infty$, the term $\frac{C_2}{\beta - 1} \to 0$. The function converges to the constant $C_1$ given by: $F(\uparrow\!\infty)$:$$F(\uparrow\!\infty) = \sum_{k=1}^2 \mu_k (\alpha_k + \delta_k)^2 + \gamma \bar{\Sigma} \sum_{k=1}^2 \mu_k \Sigma_k (g_k + \tilde{\delta}_k)^2$$
\item[$(ii)$] At $\beta = 0$, the implicit variable $q = 0$, which implies $w_k = 0$, $v_k = 0$, and $T_{12} = 0$. Furthermore, the leading $q$ multiplier in $\mathcal{T}_{inv}$ and $\mathcal{T}_{inv, \Sigma}$ sets both inverse trace terms exactly to zero. Substituting these into $F(\beta)$ eliminates all cross-terms, yielding:$$F(0) = \sum_{k=1}^2 \mu_k \alpha_k^2 + \gamma \bar{\Sigma} \sum_{k=1}^2 \mu_k \Sigma_k g_k^2$$Next, we evaluate the gap $\Delta F = F(\uparrow\!\infty) - F(0)$:$$\Delta F = \sum_{k=1}^2 \mu_k \left[ (\alpha_k + \delta_k)^2 - \alpha_k^2 \right] + \gamma \bar{\Sigma} \sum_{k=1}^2 \mu_k \Sigma_k \left[ (g_k + \tilde{\delta}_k)^2 - g_k^2 \right]$$
Based on the parameter definitions, $a_2 = r$ and $\delta_2 = \tilde{\delta}_2 = \frac{1}{1+\kappa} - \frac{\eta}{r}$. Expanding the squared perturbations for $k=2$ yields:$$(\alpha_2 + \delta_2)^2 - \alpha_2^2 = \left( 1 - \frac{\eta}{r} \right)^2 - \alpha_2^2 = \frac{\eta^2}{r^2} - \frac{2\eta}{r} + 1 - \alpha_2^2$$$$(g_2 + \tilde{\delta}_2)^2 - g_2^2 = \left( -\frac{\eta}{r} \right)^2 - g_2^2 = \frac{\eta^2}{r^2} - g_2^2$$Substituting these into $\Delta F$, the leading-order behavior as $r \to 0^+$ is dominated by the $1/r^2$ terms:$$\Delta F = \frac{\eta^2}{r^2} \mu_2 \left( 1 + \gamma \bar{\Sigma} \right) + \mathcal{O}\left(\frac{1}{r}\right)$$Because $\mu_2 (1 + \gamma \bar{\Sigma}) \eta^2 > 0$, the gap diverges to positive infinity as $r \to 0^+$. Thus, there exists a sufficiently small $r > 0$ such that $\Delta F > 0$, or $F(\uparrow\!\infty) > F(0)$.
\item[$(iii)$] We next calculate $F'(0) =  \frac{dF}{d\beta} \Big|_{\beta=0}$. The asymptotic test error in the $\beta < 1$ regime is given by:$$F(\beta) = \text{Bias} + \gamma \mathcal{T}_{inv} \mathcal{T}_{var} + \gamma \bar{\Sigma} \mathcal{T}_{var, \Sigma} + \gamma^2 \bar{\Sigma} \mathcal{T}_{inv, \Sigma} \mathcal{T}_{var}$$ By applying the product rule with respect to $\beta$, the full derivative is:$$F'(\beta) = \text{Bias}' + \gamma \left( \mathcal{T}_{inv}' \mathcal{T}_{var} + \mathcal{T}_{inv} \mathcal{T}_{var}' \right) + \gamma \bar{\Sigma} \mathcal{T}_{var, \Sigma}' + \gamma^2 \bar{\Sigma} \left( \mathcal{T}_{inv, \Sigma}' \mathcal{T}_{var} + \mathcal{T}_{inv, \Sigma} \mathcal{T}_{var}' \right)$$To evaluate this at $\beta = 0$, we must look at the inverse trace terms. Both $\mathcal{T}_{inv}$ and $\mathcal{T}_{inv, \Sigma}$ are defined with a leading factor of $q$. When $\beta \to 0$, the implicit root $q \to 0$. Because the fraction following $q$ converges to a finite constant as $q \to 0$, we have exactly:$$\mathcal{T}_{inv}(0) = 0 \quad \text{and} \quad \mathcal{T}_{inv, \Sigma}(0) = 0$$ Substituting these zeros into the product rule eliminates the $\mathcal{T}_{var}'(0)$ terms entirely. The derivative  simplifies to: $$F'(0) = \text{Bias}'(0) + \gamma \mathcal{T}_{inv}'(0) \mathcal{T}_{var}(0) + \gamma \bar{\Sigma} \mathcal{T}_{var, \Sigma}'(0) + \gamma^2 \bar{\Sigma} \mathcal{T}_{inv, \Sigma}'(0) \mathcal{T}_{var}(0)$$ 

We define the rightmost terms collectively as the variance penalty $V(\gamma)$:$$V(\gamma) := \gamma \Big( \mathcal{T}_{inv}'(0) \mathcal{T}_{var}(0) + \bar{\Sigma} \mathcal{T}_{var, \Sigma}'(0) \Big) + \gamma^2 \Big( \bar{\Sigma} \mathcal{T}_{inv, \Sigma}'(0) \mathcal{T}_{var}(0) \Big)$$ 
Hence, $F'(0) = \text{Bias}'(0) + V(\gamma)$.
Because all the terms in $V(\gamma)$ are finite for any strictly positive $r > 0$, the derivatives evaluated at $\beta=0$ are all finite constants. Since every term in $V(\gamma)$ is scaled by either $\gamma$ or $\gamma^2$, we have:$$\lim_{\gamma \to 0} V(\gamma) = 0$$

 We next derive $\text{Bias}'(0)$. 
 Recall the definition of Bias given by:
 $$\text{Bias}(\beta) = \sum_{k=1}^2 \mu_k \left[ \alpha_k^2(1-w_k) + (\alpha_k + \delta_k)^2 w_k \right] + T_{12} (\tilde{\delta}_2^2 - \tilde{\delta}_1^2)(\Sigma_1^2 - \Sigma_2^2)$$ By expanding the inner bracket and grouping  the $w_k$ terms, we obtain: $$\alpha_k^2 - \alpha_k^2 w_k + (\alpha_k^2 + 2\alpha_k \delta_k + \delta_k^2) w_k = \alpha_k^2 + w_k(2\alpha_k \delta_k + \delta_k^2)$$This gives the reformulated Bias equation:$$\text{Bias}(\beta) = \sum_{k=1}^2 \mu_k \alpha_k^2 + \sum_{k=1}^2 \mu_k w_k (2\alpha_k \delta_k + \delta_k^2) + T_{12}(\tilde{\delta}_2^2 - \tilde{\delta}_1^2)(\Sigma_1^2 - \Sigma_2^2)$$ To differentiate this with respect to $\beta$, we apply the chain rule via the implicit variable $q$. First, define the constant $c = \mu_1 a_1^2 + \mu_2 a_2^2$. From the defining equation $\beta(q) = \sum_{k=1}^2 \mu_k \frac{a_k^2 q}{1 + a_k^2 q}$, we take the derivative with respect to $q$:$$\frac{d\beta}{dq} = \sum_{k=1}^2 \mu_k \frac{a_k^2}{(1 + a_k^2 q)^2}$$Evaluating at $q=0$ gives $\left. \frac{d\beta}{dq} \right|_0 = \mu_1 a_1^2 + \mu_2 a_2^2 = c$. By the inverse function theorem, $q'(0) = \left. \frac{dq}{d\beta} \right|_0 = \frac{1}{c}$. Now we sequentially compute the initial derivatives of the sub-components $w_k$ and $T_{12}$: 
 
  Since $w_k = \frac{a_k^2 q}{1 + a_k^2 q}$, the chain rule yields $w_k'(0) = a_k^2 q'(0) = \frac{a_k^2}{c}$. In addition, for small $q$, the variables $v_k = w_k(1-w_k)$ expand to first order as $v_k = a_k^2 q + \mathcal{O}(q^2)$. Substituting this into the definition of $T_{12}$ gives:$$T_{12}(q) = \frac{\mu_1 \mu_2 (a_1^2 q)(a_2^2 q)}{\mu_1 (a_1^2 q) + \mu_2 (a_2^2 q)} + \mathcal{O}(q^2) = q \frac{\mu_1 \mu_2 a_1^2 a_2^2}{c} + \mathcal{O}(q^2)$$Taking the derivative with respect to $\beta$ evaluates to $T_{12}'(0) = q'(0) \frac{\mu_1 \mu_2 a_1^2 a_2^2}{c} = \frac{\mu_1 \mu_2 a_1^2 a_2^2}{c^2}$. Finally, we substitute $w_k'(0)$ and $T_{12}'(0)$ directly into the differentiated Bias equation:
  \begin{align*}
  \text{Bias}'(0) &= \sum_{k=1}^2 \mu_k w_k'(0) (2\alpha_k \delta_k + \delta_k^2) + T_{12}'(0) (\tilde{\delta}_2^2 - \tilde{\delta}_1^2)(\Sigma_1^2 - \Sigma_2^2)\\
  &= \frac{1}{c} \sum_{k=1}^2 \mu_k a_k^2 (2\alpha_k \delta_k + \delta_k^2) + \frac{\mu_1 \mu_2 a_1^2 a_2^2}{c^2} (\tilde{\delta}_2^2 - \tilde{\delta}_1^2)(\Sigma_1^2 - \Sigma_2^2)
  \end{align*}
For the first class ($k=1$), the definitions give $\delta_1 = -\alpha_1$, resulting in $2\alpha_1 \delta_1 + \delta_1^2 = -\alpha_1^2$. For the second class ($k=2$), as $r \to 0^+$, the term $a_2^2 (2\alpha_2 \delta_2 + \delta_2^2) \to r^2 (\eta^2/r^2) = \eta^2$. The cross-term converges to $\frac{\mu_2}{\mu_1 a_1^2} \eta^2 ((\rho+1)^2 - 1)$. Summing these asymptotic components gives:
\begin{align*}
\lim_{r \to 0^+} \text{Bias}'(0) &= \frac{1}{\mu_1 a_1^2} \left[ \mu_1 a_1^2 (-\alpha_1^2) + \mu_2 \eta^2 \right] + \frac{\mu_2 \eta^2}{\mu_1 a_1^2} \left( (\rho+1)^2 - 1 \right)\\
&= -\alpha_1^2 + \frac{\mu_2 \eta^2}{\mu_1 a_1^2} (\rho+1)^2
\end{align*}
Since $a_1^2 = \eta^2(\rho+1)^2$, this simplifies to:$$\lim_{r \to 0^+} \text{Bias}'(0) = -\alpha_1^2 + \frac{\mu_2}{\mu_1}$$ 

Recall that $\alpha_1 = (\kappa-1)/(\rho+\kappa)$ and $\kappa =  \gamma(\mu\rho+1-\mu)$. Hence $\frac{d}{d\gamma} \alpha_1^2(\gamma)\Big|_0<0$. Also, $$\alpha_1^2(0) =\frac{1}{\rho^2} 
\ge \frac{1-\mu_1}{\mu_1} = \frac{\mu_2}{\mu_1},$$ by our assumption. By continuity, for small enough $\gamma$, we have $\alpha_1^2\ge  \mu_2/{\mu_1}$ and so we have $\lim_{r \to 0^+} \text{Bias}'(0) < 0$. Because $\text{Bias}'(0)$ is strictly negative for small $r$, and the variance penalty $V(\gamma)$ can be made arbitrarily small for small $\gamma$, there must exist constants $r_0 > 0$ and $\gamma_0 > 0$ such that for all $r < r_0$ and $\gamma < \gamma_0$, we have $F'(0) < 0$. This completes the proof of the proposition. 
\end{itemize}

\section{Gradient and Hessian Calculations for Outcome Supervision (OS) Loss}\label{app:GD_H}
For clarity, let ${M} = {I} + {V}{S}$ (dropping the index $\tau$ for a single batch) and define the loss as $f({V}) = \frac{1}{2} \|{M}^k {w}^*\|^2$.

We use the differential approach. Let $d{V}$ be a small perturbation in ${V}$. Then $d{M} = (d{V}){S}$. The differential of the loss is:$$df = \langle {M}^k {w}^*, d({M}^k {w}^*) \rangle = ({w}^*)^T ({M}^k)^T d({M}^k) {w}^*$$Using the power rule for differentials, $d({M}^k) = \sum_{j=0}^{k-1} {M}^j (d{M}) {M}^{k-1-j}$. Substituting $d{M} = (d{V}){S}$:$$df = \sum_{j=0}^{k-1} ({w}^*)^T ({M}^k)^T {M}^j (d{V}) {S} {M}^{k-1-j} {w}^*$$Using the property $ \tr({A}^T {B} {C}) = \tr({C} {A}^T {B})$, we isolate $d{V}$:$$df = \tr\left( d{V} \sum_{j=0}^{k-1} {S} {M}^{k-1-j} {w}^* ({w}^*)^T ({M}^k)^T {M}^j \right)$$The gradient $\nabla_{{V}} \mathcal{L}$ is the transpose of the matrix multiplying $d{V}$:$$\nabla_{{V}} \mathcal{L} = \sum_{j=0}^{k-1} (M^T)^j M^k w^* (w^*)^T (M^T)^{k-1-j} S^T$$



 We next proceed to calculate the Hessian of the loss. The gradient can be viewed as a product of terms: $G(V) = \sum_{j=0}^{k-1} A_j(V) M^k(V) B_j(V)$, with $A_j(V)= (M^\sT)^j$ and $B_j(V) = w^* (w^*)^T (M^T)^{k-1-j} S^T$. Applying the product rule for the differential $dG$:$$dG = \sum_{j=0}^{k-1} \left( (dA_j) M^k B_j + A_j (dM^k) B_j + A_j M^k (dB_j) \right)$$
Near the global minimum, the term ${M}^k {w}^* \approx 0$. In this regime, terms containing $M^k$ (the outer factors) vanish, leaving only the term where the differential acts directly on  $M^k$. Thus, $$dG \approx \sum_{j=0}^{k-1} (M^T)^j (dM^k) w^* (w^*)^T (M^T)^{k-1-j} S^T$$
Substituting for $d({M}^k) = \sum_{j=0}^{k-1} {M}^j (d{M}) {M}^{k-1-j}$, we get
$$H[dM] \approx \sum_{j=0}^{k-1} \sum_{l=0}^{k-1} (M^T)^j \left( M^l (dM) M^{k-1-l} \right) w^* (w^*)^T (M^T)^{k-1-j} S^T$$
where for a direction $E$, we  have $H[E] = \frac{d}{dt} \nabla_V \mathcal{L}(V + tE) \big|_{t=0}$.

We next upper bound the spectral norm of the Hessian as
\begin{align*}
\opnorm{H} &\le \sum_{j=0}^{k-1} \sum_{l=0}^{k-1} \opnorm{M^j} \opnorm{M^l} \opnorm{M^{k-1-l}} \opnorm{M^{k-1-j}} \twonorm{w^*}^2 \opnorm{S}\\
&\le  \sum_{j=0}^{k-1} \sum_{l=0}^{k-1} \opnorm{M}^j \opnorm{M}^l \opnorm{M}^{k-1-l} \opnorm{M}^{k-1-j} \|w^*\|^2 \opnorm{S}\\
&= k^2 \rho(M)^{2k-2} \twonorm{w^*}^2 \opnorm{S}\,,
\end{align*}
where the second step follows from sub-multiplicativity of the operator norm. 


\bibliographystyle{abbrvnat}
\bibliography{Refs}
\end{document}